\documentclass{article}
\PassOptionsToPackage{dvipsnames,table}{xcolor}
\usepackage{aibl_arxiv}

\usepackage{url}

\usepackage{pifont}%

\usepackage{booktabs}
\usepackage{graphicx}
\usepackage{multirow}
\usepackage{comment}
\usepackage{subcaption}

\usepackage{bbold}

\usepackage{amsfonts,amsmath,amssymb,amsthm}
\DeclareMathOperator{\KL}{KL}
\usepackage{cleveref}
\usepackage{tocloft,titletoc}

\usepackage{fontawesome5}
\usepackage[dvipsnames]{xcolor}
\usepackage{algorithm}
\usepackage{wrapfig}
\usepackage{booktabs}
\usepackage{multirow}
\usepackage{colortbl}

\usepackage[utf8]{inputenc}
\usepackage[T1]{fontenc}
\usepackage{microtype}
\usepackage{xcolor}
\usepackage{url}
\usepackage{hyperref}

\usepackage{amsmath,amssymb,amsfonts,amsthm,mathtools,bm}
\usepackage{nicefrac}

\usepackage{booktabs}
\usepackage{subcaption}
\usepackage{array}
\usepackage{tabularx}
\usepackage{makecell}
\usepackage{multirow}
\usepackage{longtable}
\usepackage{graphicx}

\usepackage{algorithm}
\usepackage{algpseudocode}
\usepackage{enumitem}
\usepackage{xspace}

\newcommand{\method}{\textsc{VISTA}\xspace}

\newcommand{\Rmax}{R_{\max}}

\newcommand{\ceilG}[1]{\lceil #1\rceil}

\newcolumntype{Y}{>{\raggedright\arraybackslash}X}
\newcolumntype{L}[1]{>{\raggedright\arraybackslash}p{#1}}
\newcolumntype{C}[1]{>{\centering\arraybackslash}p{#1}}

\theoremstyle{plain}
\newtheorem{theorem}{Theorem}[section]
\newtheorem{lemma}[theorem]{Lemma}

\theoremstyle{definition}
\newtheorem{definition}[theorem]{Definition}

\begin{document}

\aiblmaketitle
  {Value-Informed Schedule Trimming for Accelerated Sequential Monte Carlo Guidance in Discrete Diffusion}
  {Jaehyeon Kim$^{1,*}$, \ Ziseok Lee$^{1,*}$, \ Siyoung Choi$^{1,*}$, \ Kyungsu Kim$^{1,\dagger}$}
  {$^{1}$Seoul National University\\[0.15em]
   {\small $^{*}$equal contribution, $^{\dagger}$corresponding author: \texttt{\{kyskim\}@snu.ac.kr}}}
  {Reward-guided discrete diffusion enables test-time alignment with downstream objectives, but applying Sequential Monte Carlo (SMC) guidance at every denoising step is computationally expensive. Active guidance steps require additional reward evaluations and, for gradient-based proposals, value-gradient computations. Sparse guidance reduces this cost by guiding only a subset of timesteps, but commonly used schedules are fixed heuristics whose effectiveness can vary substantially across tasks. We propose VISTA, a training-free method that automatically selects guided timesteps under a fixed budget. VISTA formulates sparse guidance as approximation of the ideal full-guidance probability path and derives an estimable upper bound on \(\ell_1\) path deviation. Using value statistics collected from a small number of full-guidance warmup generations, VISTA optimizes the resulting empirical surrogate exactly via dynamic programming. The warmup generations can be retained as model outputs, and after schedule selection VISTA has the same per-generation guidance complexity as other sparse schedules with the same budget. Across text, biological sequence, and image generation tasks, VISTA discovers task-dependent schedules and outperforms hand-designed sparse schedules in most tested settings, while approaching or exceeding full-guidance performance on several task metrics. In toxic-text generation, for example, VISTA attains a 0.99 unique-toxic ratio compared with 1.00 for full guidance while reducing post-selection per-generation latency from 204.23 s to 38.16 s, a 5.35$\times$ speedup.}
  {
   \aiblresource{Lab}{https://aibl.snu.ac.kr}
   }

\section{Introduction}

Discrete diffusion models \citep{austin2021structured} have emerged as a powerful framework for generating structured objects such as text, biological sequences, and discrete images. In these domains, practitioners often prioritize not only the likelihood under a pretrained model but also alignment with external objectives, such as safety in language, biological activity in DNA, or constraint satisfaction. Consequently, a primary goal is to sample from a reward-tilted distribution that steers the pretrained model toward higher expected rewards while preserving its generative capabilities.

Several reward-tilted guidance techniques for discrete domains have recently been introduced \citep{nisonoff2024unlocking,schiff2024simple}. However, these independent sampling methods---which generate samples along separate trajectories---often rely on derivative-based approximations that can introduce significant error and bias \citep{lee2025debiasing}. To address this limitation, Sequential Monte Carlo (SMC) methods have been proposed. SMC begins by generating multiple candidate particles using a proposal policy. To align the particles with the reward-tilted distribution, it employs a resampling mechanism: particles with high value functions are duplicated, while those with low values are discarded (see Figure \ref{fig:FKillustrate}-(b)). Under the asymptotic conditions used in the SMC analysis, this framework recovers the exact reward-tilted distribution at guided timesteps
\citep{lee2025debiasing,ou2025inference,naesseth2019elements,feng2024step,kim2025alignment,luo2026self}.

Despite their theoretical elegance, the computational cost of SMC guidance remains a bottleneck in practical applications. Each guided denoising step requires not only a pretrained diffusion transition but also an evaluation by an external reward model; moreover, gradient-based proposals often necessitate additional backward passes. Applying this correction at every timestep $t$ can dominate the total wall-clock time. Consequently, sparse guidance---where only a subset of $T' < T$ timesteps is guided---provides a natural way to improve the efficiency of inference-time alignment, where $T$ denotes the total number of diffusion steps. This approach can reduce the total reward and gradient computation to a fraction of $T'/T$ (see Table~\ref{tab:cost} for a detailed breakdown).

However, a central question remains under-explored: which steps should be guided?
Existing sparse guidance schedules are typically heuristic-based and fixed, for example by restricting guidance to a prescribed interval \citep{kynkaanniemi2024applying,hoogeboom2024simpler,yu2024representation} or applying it at uniformly spaced timesteps \citep{singhal2025general, askari-hemmat2024feedbackguided, kim2025sampleefficient}.
Such fixed schedules can be suboptimal because the effective timing of guidance depends on the task and reward landscape.
As we show empirically, some tasks favor guidance concentrated in the early denoising stages, whereas others benefit from combining early guidance with corrections distributed throughout the reverse process.
Ultimately, a schedule that is effective for one reward landscape can be suboptimal for another.

To mitigate this issue, we propose VISTA-SMC (Value-Informed Schedule Trimming for Accelerated Sequential Monte Carlo), a training-free, plug-and-play procedure for identifying a budgeted schedule of guided steps. By minimizing a tractable surrogate for the discrepancy between the full-guidance trajectory and the reduced-guidance path, VISTA characterizes the guidance dynamics of a given task and identifies a highly effective schedule tailored to its specific reward landscape.

Specifically, VISTA addresses sparse SMC guidance as a full-path approximation problem: it chooses the guided-step set (i.e., $\mathcal{G}$) so that the reduced-guidance path stays close to the ideal full-guidance path (i.e., $\{\pi_t^*\}_{t=0}^{T}$). Directly optimizing this gap is infeasible because the ground-truth full-guidance marginals are not available as explicit distributions over the discrete state space. Our main observation is that the gap admits a tractable upper bound expressed through value expectations along the full-guidance path. These expectations can be estimated from a small number of warmup full-guidance runs; the resulting empirical budgeted objective is then solved exactly by dynamic programming. The selected schedule is reused for all later generations with the same model, reward, budget, and proposal family, so the warmup cost is amortized and the per-generation comparison to sparse heuristics is fair.

Our contributions are ordered by conceptual importance:
\begin{enumerate}
    \item \textbf{Automatic budgeted guidance scheduling.}
    We introduce a training-free scheduler that automatically selects a task-specific subset of SMC guidance timesteps under a fixed budget, using full-guidance value statistics instead of a predefined temporal pattern.

    \item \textbf{Full-guidance gap formulation.}
    We view sparse SMC guidance as approximating the ideal full-guidance probability path, not only the final marginal. This is crucial: final-only correction may be asymptotically valid yet can lead to diversity collapse with finite particles.

    \item \textbf{Tractable gap minimization without ground-truth marginals.}
    To our knowledge, VISTA is the first sparse-SMC scheduler to target the gap to full-guidance marginals without accessing those intractable marginals. We prove an estimable upper bound on $\ell_1$ path deviation and optimize the resulting surrogate under a fixed budget.

    \item \textbf{Practical schedule construction.}
    VISTA uses a few initial full-guidance generations to estimate a task-specific schedule. These generations are retained as generated outputs rather than discarded preprocessing. Afterward, VISTA has the same per-generation complexity as heuristic sparse schedules with the same budget, although heuristics can operate sparsely from the first generation.

    \item \textbf{Task-specific empirical schedules.}
    On Binary MNIST, toxic text, and DNA sequence generation, VISTA finds reward-dependent schedules and outperforms hand-designed sparse baselines in most tested settings at the same budget. It also yields large speedups over full guidance, including a $5.35\times$ post-selection speedup in the toxic-text SMC-grad experiment at $T'=15$, while achieving a unique-toxic ratio of $0.99$ compared with $1.00$ under full guidance.
\end{enumerate}

\section{Related Work}

\paragraph{Discrete and Masked Diffusion.} Diffusion models define a forward process that transforms a data distribution into a tractable prior and learn reverse transitions for generation. While continuous diffusion models typically use Gaussian perturbations and score-based reverse dynamics~\citep{ho2020denoising, song2020score}, discrete diffusion models define Markov transitions over categorical or masked states~\citep{austin2021structured, sahoo2024simple}. Masked diffusion language models instantiate this idea by using a special [MASK] token as the high-noise state and iteratively unmasking or denoising tokens. Our work focuses on inference-time reward guidance for such reverse processes, with a particular emphasis on addressing the challenges of discrete state spaces.

\paragraph{Sequential Monte Carlo Methods.} SMC samplers propagate particles through a proposal policy and use resampling to correct bias, effectively aligning the particles with the target reward-tilted distribution~\citep{del2006sequential, naesseth2019elements, chopin2020introduction}. Recent work has adapted SMC to diffusion inference for tasks such as continuous and discrete diffusion steering, test-time alignment, inverse problems, and tree-structured search~\citep{ou2025inference, skreta2025feynman, hasan2026discrete, achituve2025inverse, wu2025reverse, jain2025diffusion, guo2025training, chanchudiscrete}. While these methods primarily focus on \emph{how} particles are weighted or proposed, \method is complementary: it automatically identifies the most critical timesteps for guidance, determining \emph{when} these expensive corrections should be executed to maximize overall efficiency.

\paragraph{Inference-time Acceleration of Diffusion Guidance.}
Prior work has reduced the cost of diffusion guidance using hand-designed temporal schedules,
including restricting guidance to a prescribed interval
\citep{kynkaanniemi2024applying,hoogeboom2024simpler,yu2024representation}
or uniformly spaced timesteps
\citep{singhal2025general, askari-hemmat2024feedbackguided, kim2025sampleefficient}.
Other approaches reduce guidance overhead through adaptive rules, caching, or multirate integration
\citep{castillo2025adaptive,lv2024fastercache,lee2025tortoise}.
These methods demonstrate that guidance need not be applied at every denoising step, but their schedules are either manually specified or developed primarily for continuous-image guidance and network-evaluation reuse. 
In contrast, \method is specifically formulated for reward-guided SMC in discrete state spaces and provides a theoretical bound on the probability-path deviation. Table~\ref{tab:difference} summarizes these key distinctions.

\vspace{-0.3cm}
\begin{table}[h]
    \centering
    \small
    \caption{Comparison of \method with existing guidance acceleration methods. Discrete: Designed for discrete state spaces; Reward: Handles external reward functions; Theoretical Guarantee: Provides a rigorous theoretical justification for the acceleration schedule.}
    \vspace{0.3cm}
    \label{tab:difference}
    \begin{tabular}{lccc}
    \toprule
    \textbf{Method} & \textbf{Discrete} & \textbf{Reward} & \textbf{Theoretical Guarantee} \\
    \midrule
    Adaptive Guidance~\citep{castillo2025adaptive}      & \texttimes & \texttimes & \texttimes \\
    FasterCache~\citep{lv2024fastercache}        & \texttimes & \texttimes & \texttimes \\
    Tortoise-and-Hare~\citep{lee2025tortoise}          & \texttimes & \texttimes & \checkmark \\
    Guidance Interval~\citep{kynkaanniemi2024applying} & \texttimes & \texttimes & \texttimes \\ 
    \midrule
    \textbf{\method (Ours)}                    & \checkmark & \checkmark & \checkmark \\ 
    \bottomrule
    \end{tabular}
\end{table}

\vspace{-0.2cm}
\section{Preliminaries}

\subsection{Reference Reverse Process}

Let $\mathcal{X}$ be a discrete state space. A pretrained reverse diffusion model defines a reference path:
\begin{equation}
    p_T(x_T) \prod_{t=0}^{T-1} p_{t | t+1}(x_t | x_{t+1}),
\end{equation}
with one-step reverse kernels $p_{t|t+1}$ and marginals $p_t$. For $0 \leq s \leq t \leq T$, we denote by $p_{s|t}$ the composed transition from $x_t$ to $x_s$. In a masked diffusion model, $x_T$ is typically a fully masked sequence, and $x_0$ is a clean sequence.

\subsection{Reward-Tilted Full-Guidance Path}

Given a reward function $r(x_0)$ for a sample $x_0$, our objective is to maximize the expected reward while regularizing the divergence from the pretrained model's prior. Specifically, we aim to find an optimal policy $\{\pi_t\}_{t=T}^0$ that maximizes:

\begin{equation}
\mathbb{E}_{\{\pi_t\}_{t=T}^0} \left[ r(x_0) - \alpha \sum_{t=0}^{T-1} \text{KL}(\pi_{t | t+1}(\cdot | x_{t+1}) \parallel p_{t | t+1}(\cdot | x_{t+1})) - \alpha \text{KL}(\pi_T(\cdot) \parallel p_T(\cdot)) \right],
\end{equation}

where $\alpha > 0$ is a temperature parameter controlling the trade-off between reward maximization and prior preservation~\citep{fan2023dpok, uehara2024understanding, uehara2024bridging}. The closed-form solution to this optimization problem is:

\begin{align}
\pi_{t|t+1}^*(x_t | x_{t+1}) &= \frac{V_t(x_t)}{V_{t+1}(x_{t+1})} p_{t|t+1}(x_t | x_{t+1}), \quad t \in \{0, \dots, T-1\} \\
\pi_t^*(x_t) &= \frac{1}{Z} V_t(x_t) p_t(x_t),
\end{align}

where $Z := \mathbb{E}_{x_0 \sim p_0}[R(x_0)]$ is the partition function. Here, $R(x_0) := \exp(r(x_0)/\alpha)$ represents the exponentiated reward, and $V_t(x_t)$ is the exponentiated value function defined as:
\begin{equation}
    V_t(x_t) := \mathbb{E}_{x_0 \sim p_{0|t}(\cdot | x_t)} \left[ R(x_0) \right].
\end{equation}
Although the value function $v_t(x_t) := \alpha \log V_t(x_t)$ is standard in the literature~\citep{uehara2024bridging, uehara2025inference}, the exponentiated notation $V_t$ simplifies our derivation.

For a detailed derivation, we refer the reader to~\citep{uehara2024bridging} or our Appendix \ref{appendixbackground}.

\begin{definition}
We call $\pi_t^*$ the \textbf{full-guidance path}. It is the gold-standard path that \method aims to approximate.
\end{definition}

\subsection{SMC Proposal Families}

\begin{algorithm}
\caption{SMC guidance}\label{alg:smcshort}
\begin{algorithmic}[1]
\State $x_T^{(i)} \sim \pi_T^*(x_T) \quad (i=1, 2, \cdots, N)$
\For{$t = T-1, T-2, \cdots, 0$}
    \State Propagation: $x_t^{(i)} \overset{\text{iid}}{\sim} q_{t|t+1}(x_t | x_{t+1}^{(i)}) \quad (i=1, 2, \cdots, N)$
    \State Resampling: Duplicate particles with high $V_t$ and discard those with low $V_t$ to ensure $x_t^{(i)} \sim \pi_t^*(x_t)$
\EndFor
\end{algorithmic}
\end{algorithm}

SMC guidance propagates particles through a proposal policy $q_{t|t+1}$ and utilizes the resampling to align the particle distribution with the target path $\pi_t^*$ (Algorithm \ref{alg:smcshort}; see Algorithm \ref{alg:full_smc} for details). Generally, the closer $q_{t|t+1}$ is to $\pi_{t|t+1}^*$, the higher the sampling efficiency. We evaluate \method on two proposal families established in recent discrete SMC literature~\citep{ou2025inference}:

\begin{itemize}
\item \textbf{SMC-base} uses the pretrained transition: $q_{t | t+1}(x_t | x_{t+1}) = p_{t | t+1}(x_t | x_{t+1})$.
\item \textbf{SMC-grad} uses a first-order Taylor approximation of the value function to steer the proposal, yielding $q_{t|t+1}(x_t | x_{t+1}) \propto p_{t|t+1}(x_t | x_{t+1}) \exp(\frac{1}{\alpha} \nabla v_{t+1}(x_{t+1}) \cdot x_t)$.
\end{itemize}

\method is proposal-agnostic; it does not modify the underlying proposal family. Instead, it strategically selects which timesteps should incur the computational overhead of value estimation and resampling to maximize efficiency.

\section{\method: Value-Informed Schedule Trimming}

In this section, we introduce Reduced SMC Guidance, a framework designed to accelerate the inference of guided diffusion models by strategically skipping guidance steps.

\subsection{Sparse guidance as full-path approximation}

Let $\mathcal{T}=\{0, 1, \cdots, T-1\}$ denote the set of reverse timesteps. A sparse schedule is a subset $\mathcal{G}\subset \mathcal{T}$ with budget $|\mathcal{G}|=T'$. Under this framework, reward guidance is applied if and only if $t \in \mathcal{G}$, as detailed in Algorithm \ref{reducedguidance}.

\begin{algorithm}
\caption{Diffusion Guidance Reduction} \label{reducedguidance}
\begin{algorithmic}[1]
\State $x_T^{(i)} \sim \pi_T^*(x_T) \quad (i=1, 2, \cdots, N)$
\For{$t = T-1, T-2, \cdots, 0$}
    \State $\text{Guidance is ON} \gets t \in \mathcal{G}$
    \If{$\text{Guidance is ON}$}
        \State Propagation: $x_t^{(i)} \overset{\text{iid}}{\sim} q_{t|t+1}(x_t | x_{t+1}^{(i)}) \quad (i=1, 2, \cdots, N)$
    \State Resampling: Adjust particles $\{x_t^{(i)}\}$ to ensure $x_t^{(i)} \sim \pi_t^*(x_t)$
    \Else
        \State Propagation: $x_t^{(i)} \overset{\text{iid}}{\sim} p_{t|t+1}(x_t | x_{t+1}^{(i)}) \quad (i=1, 2, \cdots, N)$ \label{guideoff}
    \EndIf    
\EndFor
\end{algorithmic}
\end{algorithm}

For any $t \in \mathcal{T}$, we define the most recent active step $\ceilG{t}$ as:
\begin{equation}
\ceilG{t} := \min \{s \in \mathcal{G} \cup \{T\} \mid s \geq t\}.
\end{equation}

If $t\in \mathcal{G}$, then $\ceilG{t}=t$ and the sampler performs a guided SMC correction. If $t\notin \mathcal{G}$, it propagates from $\ceilG{t}$ to $t$ using unguided reference transitions. This observation leads to the following definition:
\begin{definition}
    For a sparse schedule $\mathcal{G}$, the \textbf{reduced-guidance path} is
    \begin{equation}
        \pi_t^{\mathcal{G}}(x_t) := \sum_{x_{\ceilG{t} \in \mathcal{X}}} \pi_{\ceilG{t}}^*(x_{\ceilG{t}})p_{t | \ceilG{t}}(x_t | x_{\ceilG{t}}).
    \end{equation}
\end{definition}

A naive objective would be to find $\mathcal{G}$ such that only the final marginal $\pi_0^{\mathcal{G}}$ matches the target $\pi_0^*$. Theoretically, if the number of particles $N$ is infinite, even a minimal schedule like $\mathcal{G} = \{0\}$ is sufficient to recover the exact reward-tilted distribution at the final step. However, this ignores a critical practical challenge: in the finite-$N$ regime, delaying guidance until the end causes the sampler to postpone all reward information. This leads to diversity collapse and poor reward (see Fig. \ref{fig:FKillustrate}). To mitigate this degradation in practical sampling, we instead aim to minimize the deviation between the full-guidance and reduced-guidance paths across all timesteps.

\begin{figure}
    \centering
    \includegraphics[trim={0 1cm 0 0}, clip, width=0.95\linewidth]{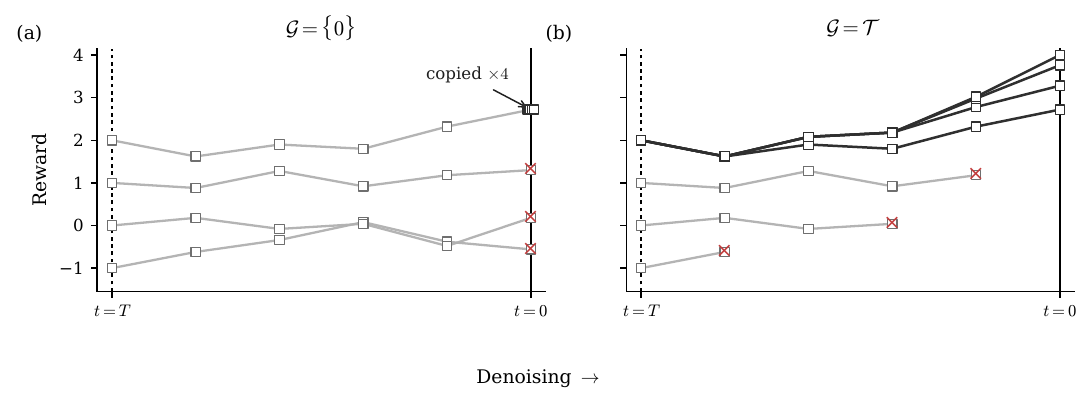}
    \caption{Impact of guidance schedules on particle distributions. A minimal schedule (e.g., $\mathcal{G}=\{0\}$) is asymptotically correct as $N \to \infty$, but with finite particles, it leads to diversity collapse through the duplication of a few terminal particles. In contrast, full guidance ($\mathcal{G}=\mathcal{T}$) maintains intermediate marginals close to the reward-tilted path, ensuring both high reward and sample diversity.}
    \label{fig:FKillustrate}
\end{figure}

\begin{definition}
    The \textbf{Total Deviation} for a schedule $\mathcal{G}$ is defined as the average $\ell_1$ distance between the full-guidance and reduced-guidance paths:
    \begin{equation}
        \mathbf{TD}(\mathcal{G}) := \frac{1}{T}\sum_{t=0}^{T-1} \|\pi_t^* - \pi_t^{\mathcal{G}}\|_1 = \frac{1}{T}\sum_{t=0}^{T-1} \sum_{x_t \in \mathcal{X}} |\pi_t^*(x_t) - \pi_t^{\mathcal{G}}(x_t)|.
    \end{equation}
\end{definition}

This definition formalizes the core objective of \method: schedule selection is transformed from a heuristic choice of intervals into a principled, budgeted optimization to minimize the gap between the sparse sampling path and the ideal full-guidance probability path.

\subsection{A tractable upper bound}

A major obstacle in directly optimizing $\mathbf{TD}(\mathcal{G})$ is the intractability of the full-guidance path $\pi_t^*$. The following theorem provides a computable surrogate based on value expectations that can be estimated from a small number of warmup runs.

\begin{theorem}\label{UTDthm}
    Assume $0< R(x_0) \leq \Rmax < \infty$. For any schedule $\mathcal{G}\subset \mathcal{T}$,
    \begin{equation}
        \mathbf{TD}(\mathcal{G}) \leq \mathbf{UTD}(\mathcal{G}) := \frac{2}{T}\sum_{t\in \mathcal{T}\setminus\mathcal{G}} [1 - \frac{1}{R_{\text{max}}}\mathbb{E}_{x_{\ceilG{t}}\sim \pi_{\ceilG{t}}^*}[V_{\ceilG{t}}(x_{\ceilG{t}})]].
    \end{equation}
\end{theorem}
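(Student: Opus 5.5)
For every guided step $t\in\mathcal{G}$ we have $\ceilG{t}=t$, so $\pi_t^{\mathcal{G}}=\pi_t^*$ and those terms of $\mathbf{TD}(\mathcal{G})$ vanish. It therefore suffices to show, for each $t\notin\mathcal{G}$ with $s:=\ceilG{t}>t$,
\[
\|\pi_t^*-\pi_t^{\mathcal{G}}\|_1 \le 2\Bigl[1-\tfrac{1}{\Rmax}\mathbb{E}_{x_s\sim\pi_s^*}[V_s(x_s)]\Bigr],
\]
and then sum over $t\in\mathcal{T}\setminus\mathcal{G}$ and divide by $T$.

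\textbf{Step 1: both marginals are images of path laws.} Write both marginals as images of distributions on the pair $(x_s,x_t)$. The reduced path is the marginal of $\pi_s^*(x_s)p_{t|s}(x_t|x_s)$. The full-guidance path is the marginal of $\pi_s^*(x_s)\pi^*_{t|s}(x_t|x_s)$, where the composed optimal kernel telescopes:
\[
\pi^*_{t|s}(x_t|x_s)=\frac{V_t(x_t)}{V_s(x_s)}\,p_{t|s}(x_t|x_s).
\]
This follows by multiplying the one-step kernels $\pi^*_{u|u+1}$ and summing out the intermediate states. Since marginalization cannot increase $\ell_1$ distance,
\[
\|\pi_t^*-\pi_t^{\mathcal{G}}\|_1\le \sum_{x_s}\pi_s^*(x_s)\,\bigl\|\pi^*_{t|s}(\cdot|x_s)-p_{t|s}(\cdot|x_s)\bigr\|_1 .
\]

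\textbf{Step 2: bound each conditional distance via an overlap.} For two probability vectors $a,b$ we have $\|a-b\|_1=2\bigl(1-\sum_x\min(a_x,b_x)\bigr)$. Here $a_x=b_x\,V_t(x)/V_s(x_s)$. Because $0<R\le\Rmax$, we get $V_t\le\Rmax$, hence $V_t/V_s\le \Rmax/V_s$. Set $c:=V_s(x_s)/\Rmax\in(0,1]$. Then
\[
\min(a_x,b_x)\ge c\,a_x ,
\]
which holds because $c\,a_x\le a_x$ and $c\,a_x=b_x V_t/\Rmax\le b_x$. Summing over $x$ gives $\sum_x\min(a_x,b_x)\ge c$, since $a$ sums to one (as $\mathbb{E}_{p_{t|s}}[V_t]=V_s$ by the tower property). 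Therefore
\[
\bigl\|\pi^*_{t|s}(\cdot|x_s)-p_{t|s}(\cdot|x_s)\bigr\|_1\le 2\Bigl(1-\frac{V_s(x_s)}{\Rmax}\Bigr).
\]

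\textbf{Step 3: average and sum.} Averaging the Step 2 bound over $x_s\sim\pi_s^*$ yields exactly $2[1-\mathbb{E}_{\pi_s^*}[V_s]/\Rmax]$. Summing over $t\notin\mathcal{G}$ and dividing by $T$ then gives $\mathbf{UTD}(\mathcal{G})$.

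The main obstacle is Step 2: choosing the overlap/minorization argument so that the bound depends only on $V_s$ at the last guided step, rather than on the intractable $V_t$. The pointwise domination $b_x\ge (V_s/\Rmax)\,a_x$ is what achieves this. A secondary check is the telescoping identity for the composed optimal kernel together with $\mathbb{E}_{p_{t|s}}[V_t]=V_s$, which ensures $\pi^*_{t|s}(\cdot|x_s)$ is normalized and that pushing $\pi_s^*$ through it gives $\pi_t^*$ consistent with $\pi_t^*\propto V_tp_t$.
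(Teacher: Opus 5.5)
Your proof is correct and is essentially the paper's argument in total-variation language: your mixture bound $\sum_{x_s}\pi_s^*(x_s)\bigl\|\pi^*_{t|s}(\cdot|x_s)-p_{t|s}(\cdot|x_s)\bigr\|_1$ equals the paper's intermediate bound $\frac{1}{Z}\mathbb{E}_{x_s\sim p_s,\,x_t\sim p_{t|s}}|V_t(x_t)-V_s(x_s)|$ term by term. Likewise, your overlap identity and the minorization $\min(a_x,b_x)\ge c\,a_x$ are exactly the identity $\mathbb{E}|X-\mu|=2\mathbb{E}[(\mu-X)\mathbf{1}(X\le\mu)]$ and the pointwise bound $(\mu-X)\mathbf{1}(X\le\mu)\le\mu(1-X/\Rmax)$ from the paper's Lemma~\ref{simpineq}, applied with $X=V_t(x_t)$ and $\mu=V_s(x_s)$.
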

The proof of this inequality is provided in Section \ref{TDsec}.

Our goal is to identify a schedule $\mathcal{G}$ that minimizes $\mathbf{UTD}$. This approach—optimizing a tractable upper bound when the primary objective is intractable—is a well-established strategy in machine learning~\cite{blei2017variational, oord2018representation, nowozin2016f}.

For a fixed budget of guidance steps $T' = |\mathcal{G}|$, minimizing $\mathbf{UTD}$ is equivalent to maximizing the following objective:
\begin{equation}
S(\mathcal{G}) := \sum_{t \in \mathcal{T} \setminus \mathcal{G}} \mathbb{E}_{x_{\ceilG{t}} \sim \pi^*_{\ceilG{t}}} \left[ V_{\ceilG{t}}(x_{\ceilG{t}}) \right].
\end{equation}
Intuitively, if the state at the most recent active step $x_{\ceilG{t}}$ already exhibits a high expected reward, subsequent unguided propagation is less likely to deviate significantly from the optimal path.

Crucially, this optimization does not require knowledge of the full marginals $\pi_t^*(x_t)$, nor the constant $\Rmax$; it relies solely on the Monte Carlo estimation of $\mathbb{E}_{x_t \sim \pi^*_t} \left[ V_t(x_t) \right]$.

\subsection{Description of the Method}

\method estimates $\mathbb{E}_{x_t \sim \pi^*_t} \left[ V_t(x_t) \right]$ by running $M$ full-guidance warmup simulations. Let $\hat{V}_t^{(i, m)}$ be the value estimate of $i$-th particle at timestep $t$ during the $m$-th warmup run. We approximate the mean value as:
\begin{equation}\label{estimatevalue}
    \mathbb{E}_{x_t \sim \pi^*_t} \left[ V_t(x_t) \right] \approx \hat{V}_t := \frac{1}{MN} \sum_{m=1}^M \sum_{i=1}^N \hat{V}_t^{(i, m)}.
\end{equation}
The empirical schedule $\hat{\mathcal{G}}$ is then obtained by solving:
\begin{equation}\label{optG}
    \hat{\mathcal{G}} \in \arg\max_{\mathcal{G}\subset \mathcal{T}: |\mathcal{G}|=T'} \hat{S}(\mathcal{G}), \quad \hat{S}(\mathcal{G}) := \sum_{t \in \mathcal{T}\setminus \mathcal{G}} \hat{V}_{\ceilG{t}}
\end{equation}

The optimization in Eq. \eqref{optG} is solved exactly by dynamic programming in $O(T^2T')$ time (see Algorithm \ref{alg:dp}), which is negligible for the horizons used here. Once $\hat{\mathcal{G}}$ is selected, \method utilizes this sparse schedule for all subsequent generations to accelerate inference.

\subsection{Computational Time Analysis}\label{sec:costanalysis}

\method requires $M$ initial full-guidance generations for schedule
selection. Once the schedule is determined, it has the same per-generation
complexity as other sparse guidance methods under the same guidance budget.
Table~\ref{tab:cost} summarizes the resulting computational costs.

\begin{table}[t]
\centering
\small
\caption{Per-generation cost after schedule selection. Here, $p$ is a pretrained reverse-model evaluation, $q$ is an amortized proposal, $r$ is a reward-model evaluation, $\nabla v$ is a value-gradient evaluation, $N$ is the number of particles, $T$ is the diffusion horizon, $T'$ is the guidance budget, and $J$ is the number of rollout samples for value estimation (see Algorithm \ref{alg:sparsesmc} for details).}
\vspace{0.1cm}
\label{tab:cost}
\setlength{\tabcolsep}{8pt}
\renewcommand{\arraystretch}{1.15}
\begin{tabular}{l c c}
\toprule
\textbf{Variant}
  & \textbf{Full Guidance}
  & \textbf{Sparse Guidance} \\
\midrule

\textsc{SMC-base}
  & $NT \times p,\ NTJ \times r$
  & $NT \times p,\ NT'J \times r$ \\

\textsc{SMC-grad}
  & $NT \times p,\ NTJ \times r,\ NT \times \nabla v$
  & $NT \times p,\ NT'J \times r,\ NT' \times \nabla v$ \\


\bottomrule
\end{tabular}
\end{table}

\section{Experiments}

To evaluate the effectiveness of \method, we focus our main experiments on
Masked Diffusion Models (MDMs), with experiments on other diffusion models
provided in Appendix~\ref{Expdetail}.
We compare \textbf{\method} against full guidance and several sparse scheduling baselines.
Specifically, \textbf{Interval-$k$} ($k=1,\ldots,5$) allocates all guidance
steps to the $k$-th equally sized segment of the reverse diffusion trajectory,
where Interval-1 corresponds to the earliest denoising stage and Interval-5
to the latest.
\textbf{Uniform} distributes the guidance steps evenly across the entire trajectory.
\textbf{Top-$V$} selects the timesteps with the largest estimated values
$\hat{V}_t$, while \textbf{Top-$\Delta V$} selects the timesteps with the
largest consecutive value differences $\hat{V}_t-\hat{V}_{t+1}$.
Both value-based baselines use the same warmup value statistics as \method.
Unless otherwise stated, all reported times correspond to the per-generation
cost after schedule selection; warmup costs are reported separately.
Comprehensive experimental details are available in Appendix~\ref{Expdetail}.

\subsection{Synthetic Experiment: Binary MNIST}\label{sec:binarymnist}

We first evaluate \method on Binary MNIST using the MDM with the SMC-grad
proposal, based on the codebase of \cite{ou2025inference}.
The reward is defined as $r(x_0)=\log p(c\mid x_0)$, where $c$ denotes the
condition ``the digit is 4''.
We use a total horizon of $T=100$ with $\alpha=1.0$, $N=20$ particles,
$J=1$ rollout for value estimation, and $M=3$ full-guidance warmup generations.
A sample $x_0$ is considered ``valid'' if $r(x_0)>-0.1$.
To measure diversity, \textit{Unique Valid} counts the number of distinct
valid samples, where samples with Jaccard similarity $\geq 0.85$ are treated
as identical.
Results are averaged over 50 independent seeds, yielding 1,000 generated
samples per schedule.
The warmup phase took 24.7\,s.
For evaluation, $\mathbf{UTD}$ was computed with $R_{\max}=1$, which is valid
since $R(x_0)=p(c\mid x_0)\leq 1$.
Figure~\ref{fig:mnistgraph} summarizes the results, and complete numerical
results, including the SMC-base setting, are provided in
Appendix~\ref{binarymnistappendix}.

\begin{figure}[htbp]
  \centering
  \begin{minipage}[t]{0.32\textwidth}
    \centering
    \vspace{0pt} 
    \includegraphics[trim={0 0 0 0}, clip, width=\textwidth]{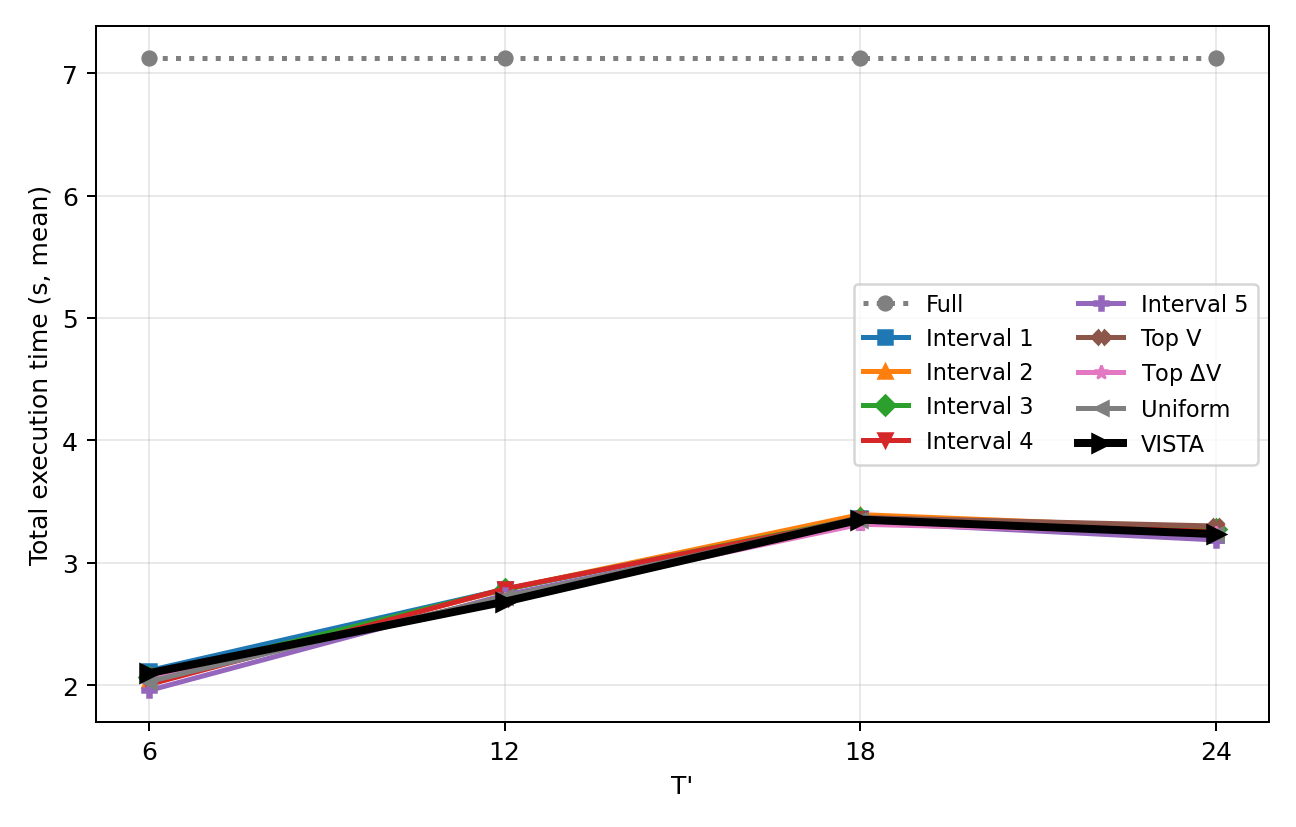}
    \label{fig:mnisttime}
  \end{minipage}
  \hfill 
  \begin{minipage}[t]{0.32\textwidth}
    \centering
    \vspace{0pt} 
    \includegraphics[trim={0 0 0 0}, clip, width=\textwidth]{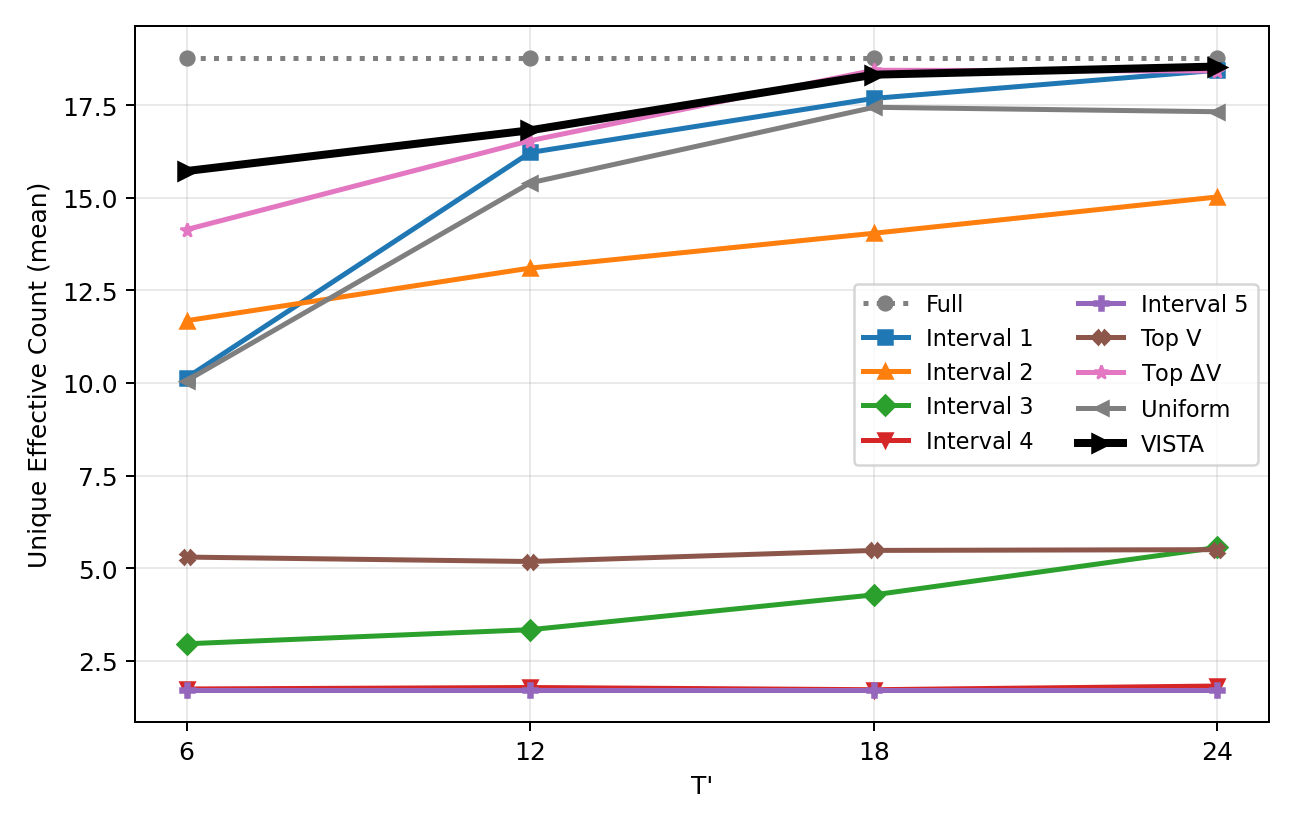}
    \label{fig:mnistuval}
  \end{minipage}
  \hfill 
  \begin{minipage}[t]{0.32\textwidth}
    \centering
    \vspace{0pt} 
    \includegraphics[trim={0 0 0 0}, clip, width=\textwidth]{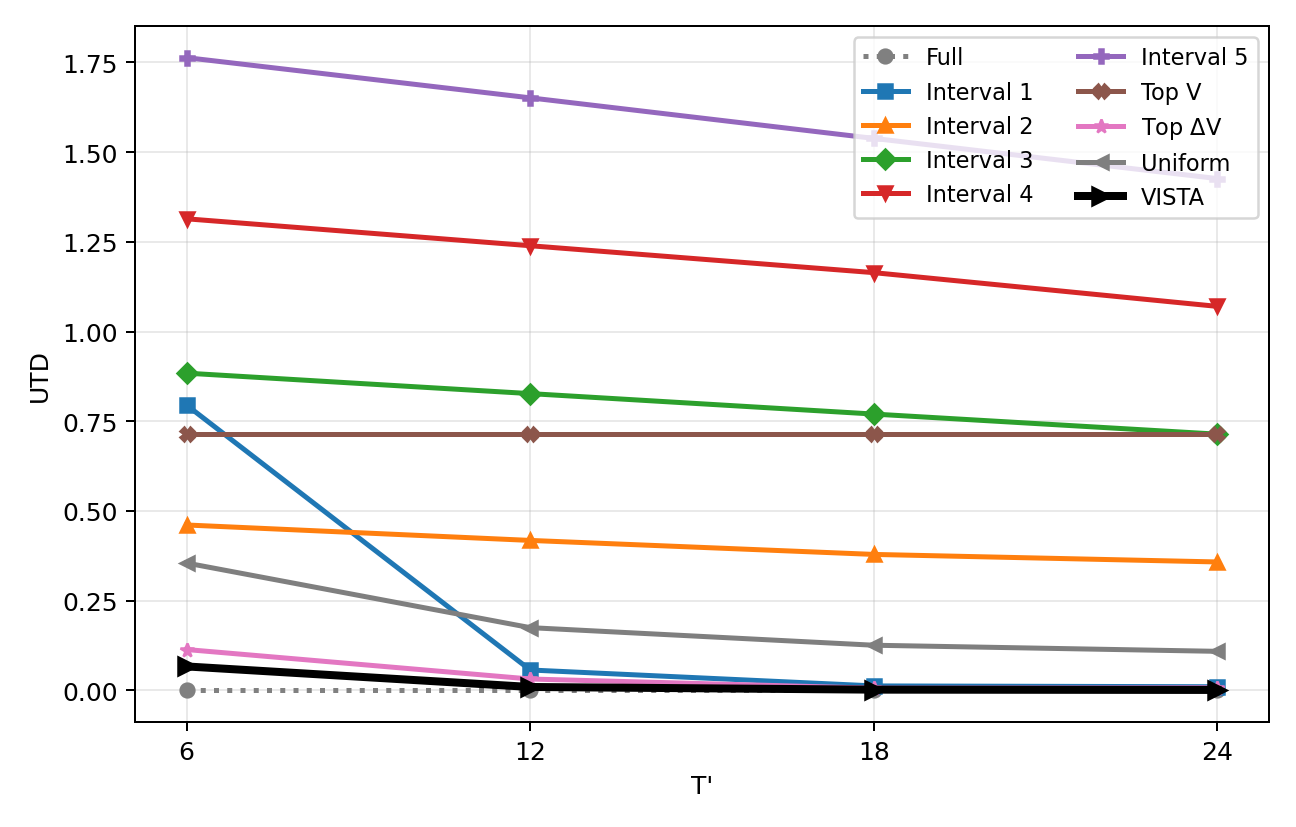}
    \label{fig:mnistutd}
  \end{minipage}
  \caption{Binary MNIST (SMC-grad). Execution time, Unique Valid, and UTD as the guidance budget $T'$ varies.}
  \label{fig:mnistgraph}
\end{figure}

\subsection{Toxic Text Generation}\label{sec:toxictext}

Toxic text generation is a high-dimensional discrete task where the reward
model must guide the sampler against a strong language prior.
The reward is defined as $r(x_0)=\log p(c\mid x_0)$ for a target toxicity
condition.
Without guidance, the base model satisfies the target toxicity condition in
only $0.8\%$ of generations.
We report \textit{Unique Toxic Ratio}, defined as the fraction of generated
samples that are both toxic and unique, where samples with normalized
Levenshtein distance $\leq 0.05$ are treated as identical.
We set $T=100$, $\alpha=0.2$, $N=8$, $J=4$, and $M=3$, and use the
SMC-grad proposal.
For each of 15 prompts, we perform two independent runs, yielding
$15\times2\times8=240$ generated samples for each scheduling method.
The warmup phase required 639.6\,s.
Figure~\ref{fig:toxicgraph} summarizes the results, and complete numerical
results are provided in Appendix~\ref{toxictextappendix}.
Our implementation is based on the codebase of \cite{ou2025inference}.

\begin{figure}[htbp]
  \centering
  \begin{minipage}[t]{0.32\textwidth}
    \centering
    \vspace{0pt}
    \includegraphics[trim={0 0 0 0}, clip, width=\textwidth]{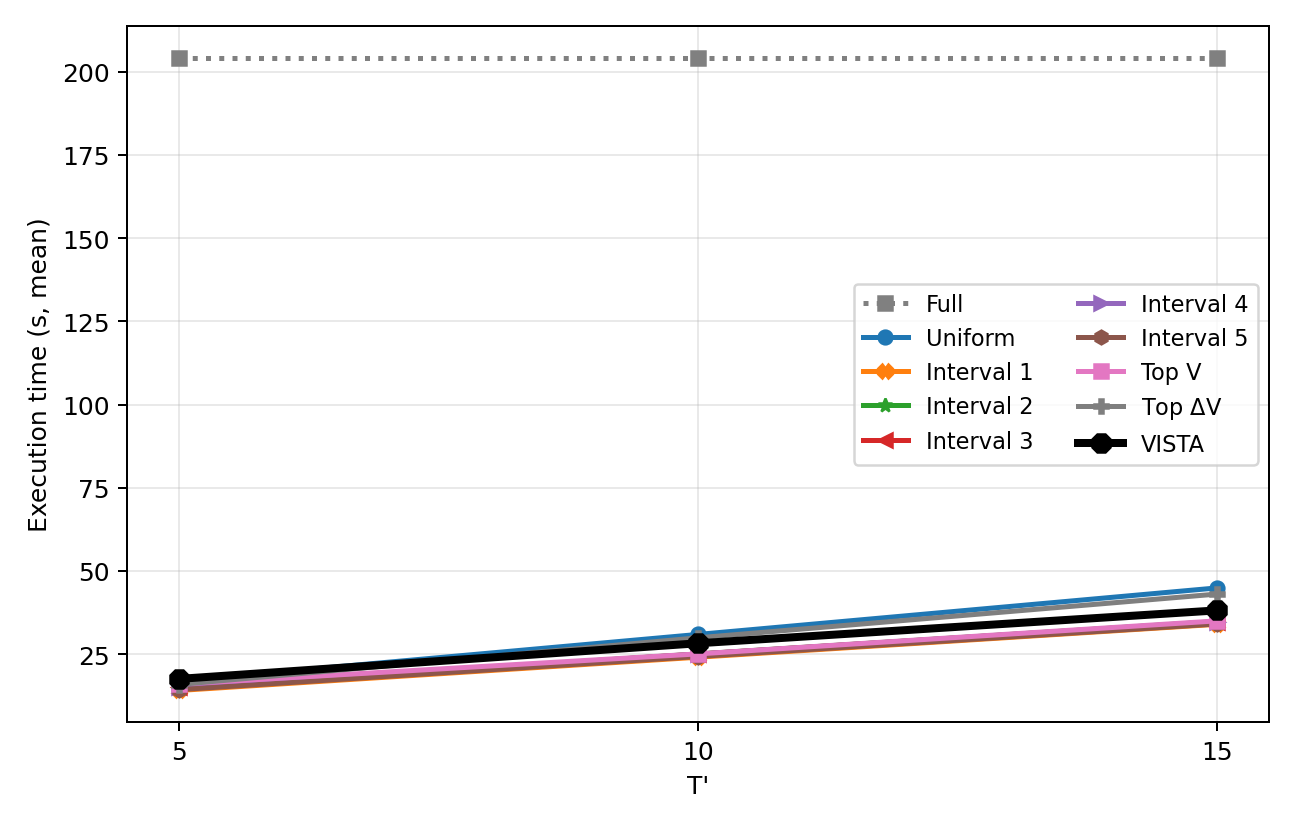}
  \end{minipage}
  \hfill
  \begin{minipage}[t]{0.32\textwidth}
    \centering
    \vspace{0pt}
    \includegraphics[trim={0 0 0 0}, clip, width=\textwidth]{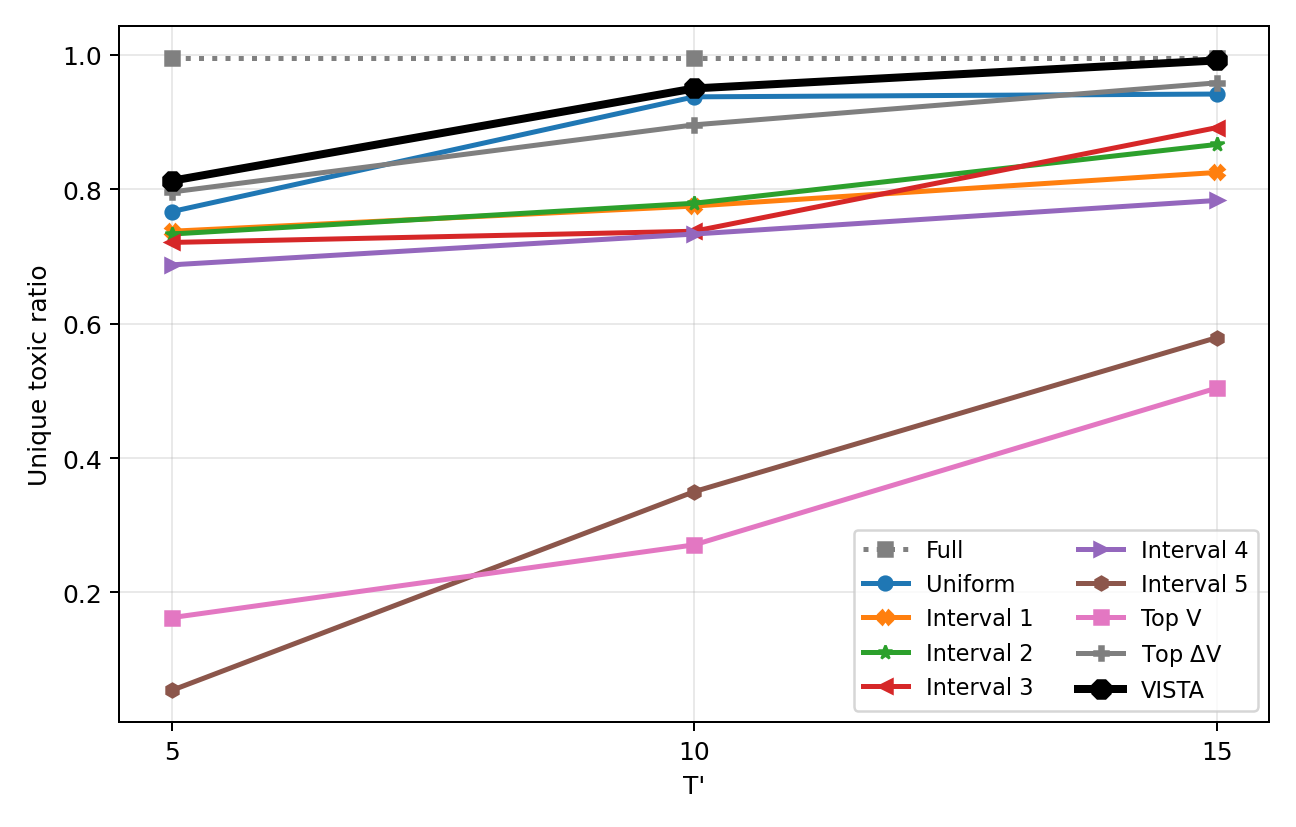}
  \end{minipage}
  \hfill
  \begin{minipage}[t]{0.32\textwidth}
    \centering
    \vspace{0pt}
    \includegraphics[trim={0 0 0 0}, clip, width=\textwidth]{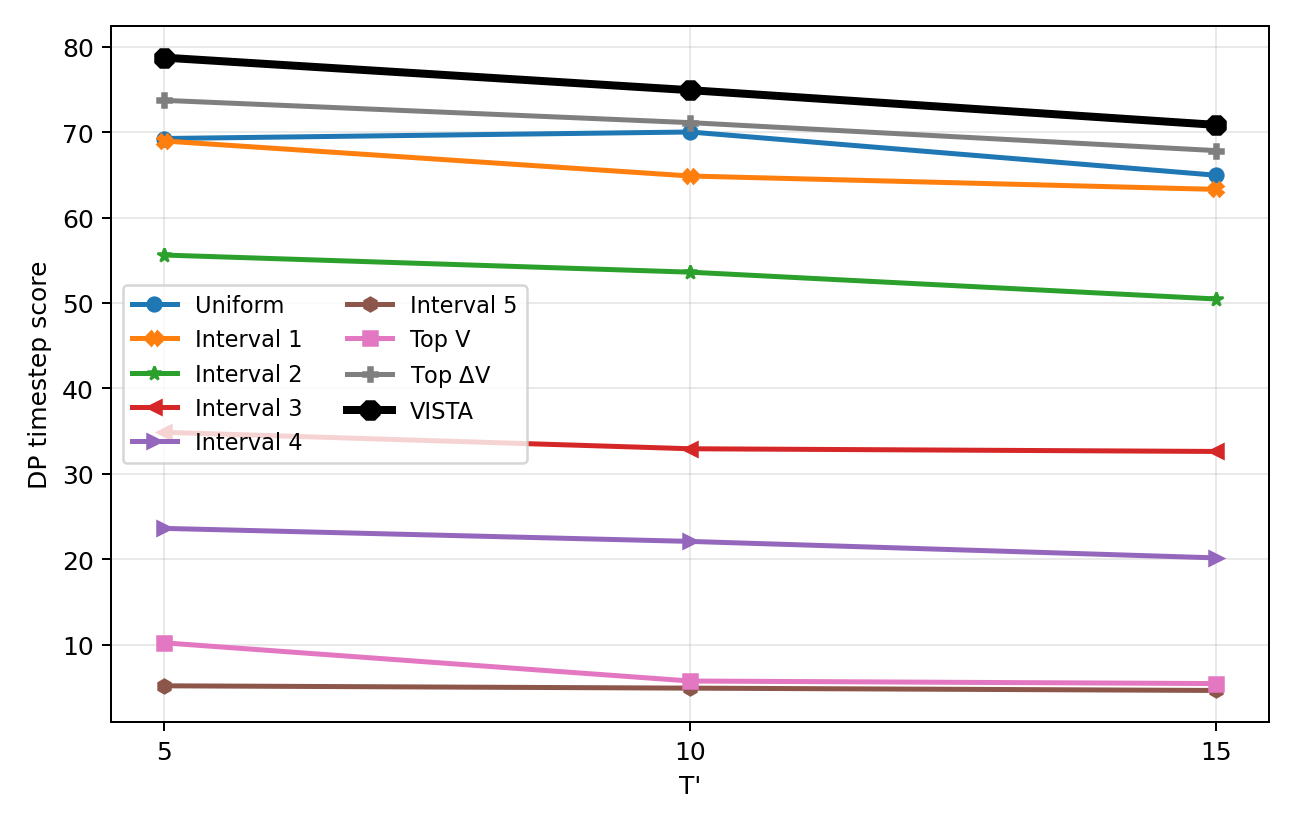}
  \end{minipage}
  \caption{Toxic Text (SMC-grad).
  Execution time, Unique Toxic Ratio, and $\hat{S}(\mathcal{G})$ as the
  guidance budget $T'$ varies.}
  \label{fig:toxicgraph}
\end{figure}

\subsection{CIFAR-10 Image Generation}

We evaluate CIFAR-10 generation experiment using the MDM, SMC-base framework, based on the codebase of \cite{schiff2024simple}. The reward is defined as $r(x_0) = \log p(c|x_0)$, where $c$ represents the condition ``the image is an airplane''. We use $T=1,000$ time steps, $N=20$ particles, $\alpha=1.0$, $J=5$ rollout for value estimation, and $M=5$ warmup simulations. A sample $x_0$ is considered ``valid'' if $p(c|x_0) > 0.5$. \textit{Unique Valid} is number of valid samples, where samples with LPIPS distance $\leq 0.03$ are considered identical. The warmup phase took 217.9 s. Results are averaged over 50 runs, totaling $1,000$ generated samples per schedule. The graph of experiment result is in Figure \ref{fig:cifargraph}.

\begin{figure}[htbp]
  \centering
  \begin{minipage}[t]{0.32\textwidth}
    \centering
    \vspace{0pt} 
    \includegraphics[trim={0 0 0 0}, clip, width=\textwidth]{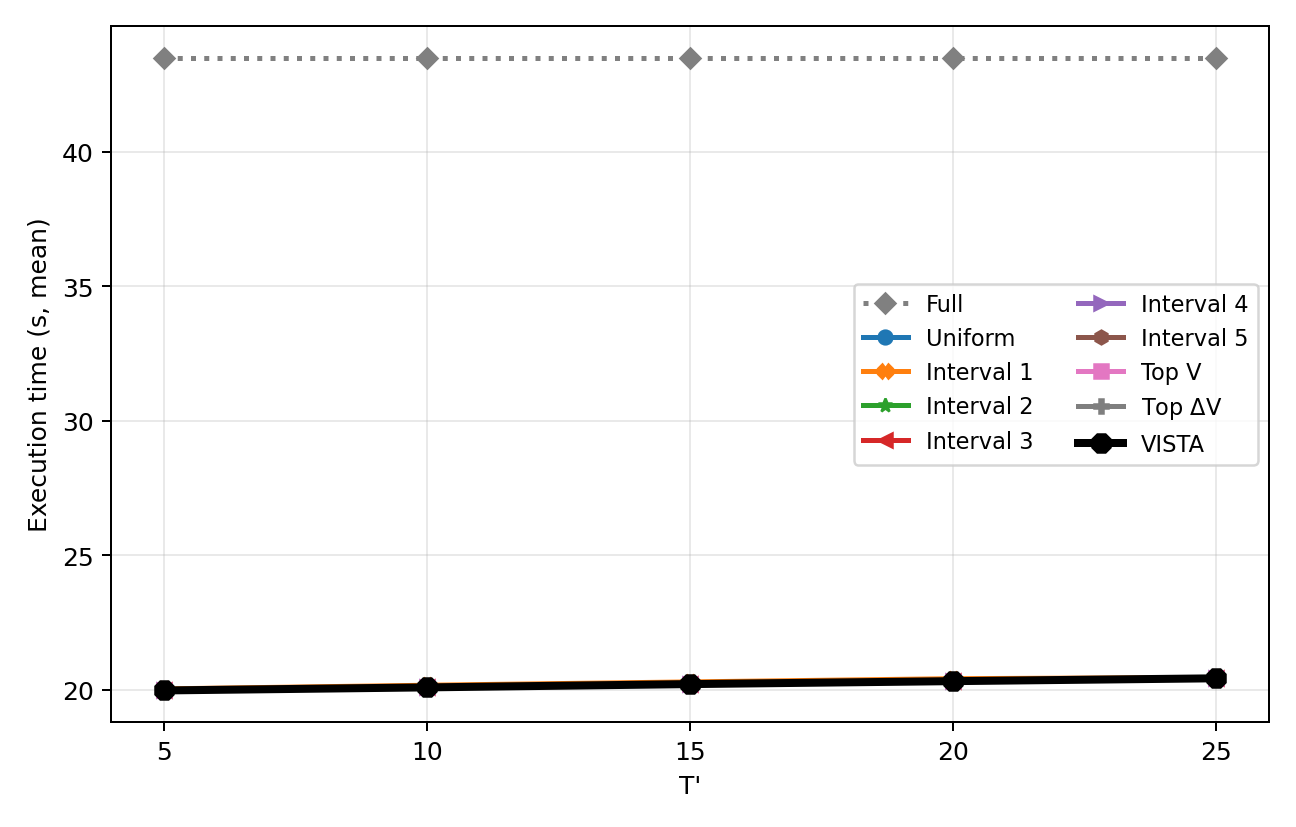}
    \label{fig:cifartime}
  \end{minipage}
  \hfill 
  \begin{minipage}[t]{0.32\textwidth}
    \centering
    \vspace{0pt} 
    \includegraphics[trim={0 0 0 0}, clip, width=\textwidth]{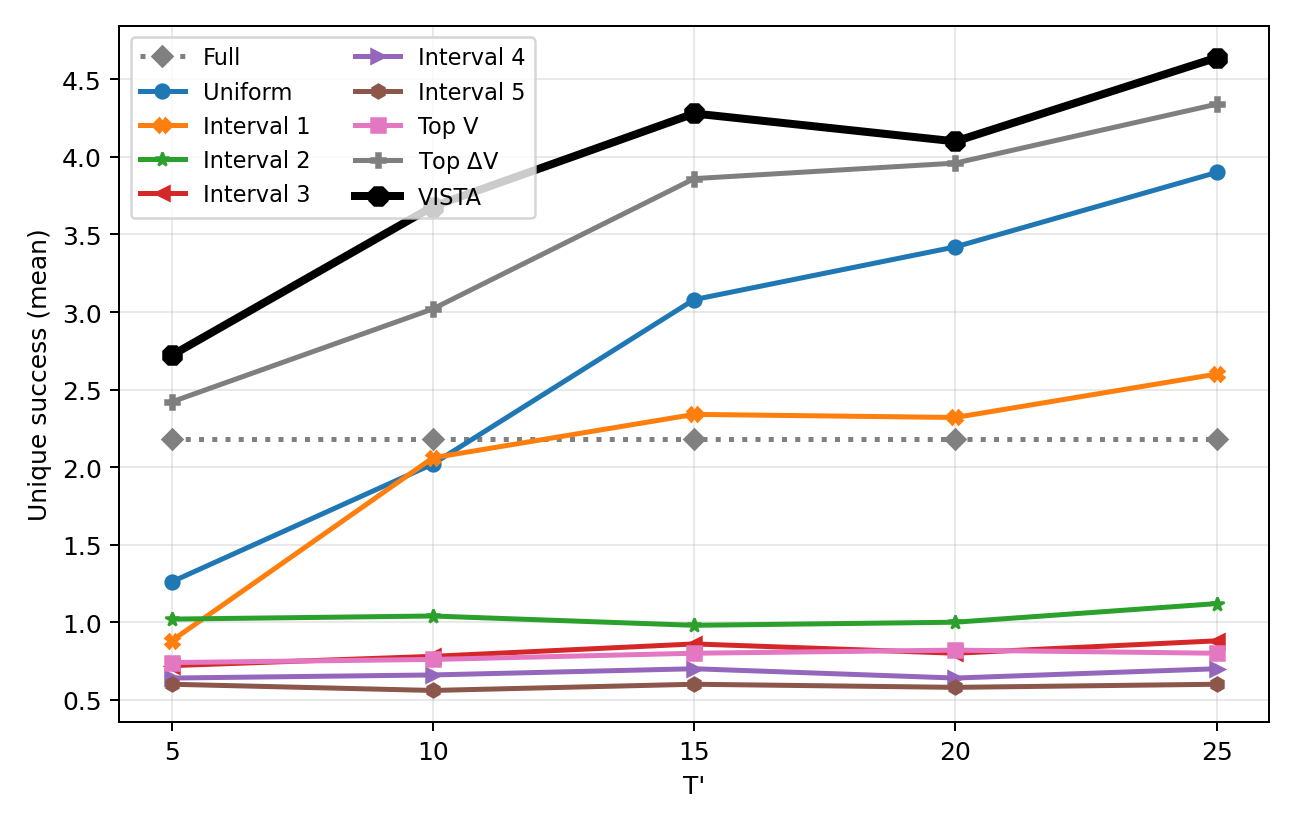}
    \label{fig:cifaruval}
  \end{minipage}
  \hfill 
  \begin{minipage}[t]{0.32\textwidth}
    \centering
    \vspace{0pt} 
    \includegraphics[trim={0 0 0 0}, clip, width=\textwidth]{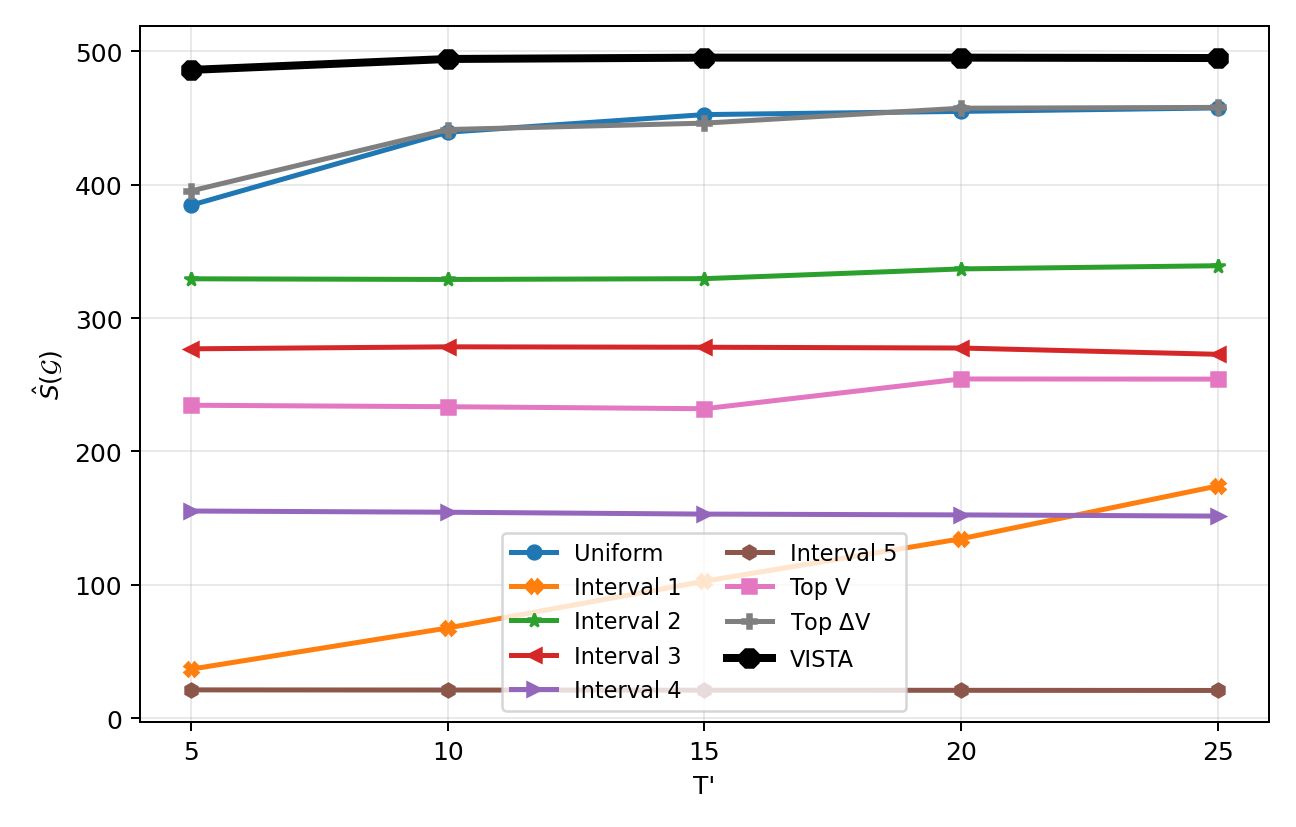}
    \label{fig:cifarscore}
  \end{minipage}
  \caption{CIFAR-10 (SMC-base). Execution time, unique valid samples, and $\hat{S}(\mathcal{G})$ as the guidance budget $T'$ varies.}
  \label{fig:cifargraph}
\end{figure}

\subsection{Biological Sequence Design}\label{biologydesign}

For DNA generation, we follow the experimental setup of
\citep{ou2025inference}.
We use the reward $r(x_0)=\log p(c\mid x_0)$, where $p(c\mid x_0)$ denotes
the predicted ATAC accessibility of the generated sequence.
Following \citep{zekri2026fine}, a sequence is considered biologically active
if its ATAC-Acc score exceeds $0.5$.
We define \textit{Unique Valid} as the number of distinct biologically active
sequences, where two sequences with Hamming distance $\leq 1$ are treated as
identical.
We set $T=128$, $\alpha=0.01$, $N=8$, $J=4$, and $M=3$.
Results are averaged over 30 independent runs, yielding 240 generated
sequences for each scheduling method.
The warmup phase took 49.1\,s.

For Protein Fitness, we use the pretrained masked diffusion model from
\citep{sahoo2024simple} together with a fitness oracle as the reward model
$r(x_0)$. A sequence is considered valid if $r(x_0)\geq 0.5$, and
\textit{Unique Valid} counts the number of distinct valid sequences, where
two sequences with Hamming distance $\leq 1$ are treated as identical.
We set $T=128$, $\alpha=0.3$, $N=32$, $J=3$, and $M=3$.
Results are averaged over 20 independent runs, yielding 640 generated
sequences for each scheduling method. the warmup phase took 19.5\,s.

The main results are presented in
Tables~\ref{tab:dna-time-atac-score} and~\ref{tab:protein-mdlm-smc-base},
with the corresponding schedules shown in
Appendix~\ref{biologyappendix}.

\begin{table*}[t]
\centering
\begin{minipage}[t]{0.485\textwidth}
\centering
\small
\caption{DNA sequence generation (SMC-grad): execution time, unique valid, and $\hat{S}(\mathcal{G})$.}
\label{tab:dna-time-atac-score}
\resizebox{\linewidth}{!}{
\setlength{\tabcolsep}{4.0pt}
\renewcommand{\arraystretch}{0.90}
\begin{tabular}{llccc}
\toprule
\textbf{$T'$} & \textbf{Policy} & {\textbf{Time [s]} $\downarrow$} & {\textbf{Unique Valid} $\uparrow$} & {$\hat{S}(\mathcal{G})$ $\uparrow$} \\
\midrule
\multirow{1}{*}{128}  & Full  & $15.34 \pm 0.97$  & $\mathbf{5.90 \pm 1.35}$  & {--} \\
\midrule
\multirow{9}{*}{8}  & Interval 1  & $2.61 \pm 0.63$  & $0.23 \pm 0.42$  & $1.89\times10^{-4}$ \\
  & Interval 2  & $2.54 \pm 0.54$  & $0.20 \pm 0.48$  & $2.89\times10^{-6}$ \\
  & Interval 3  & $2.69 \pm 0.66$  & $0.60 \pm 0.76$  & $3.77$ \\
  & Interval 4  & $2.87 \pm 0.76$  & $1.03 \pm 1.62$  & $4.57$ \\
  & Interval 5  & $2.78 \pm 0.75$  & $1.27 \pm 1.15$  & $1.43\times10^{-6}$ \\
  & Top-$V$  & $2.66 \pm 0.73$  & $1.27 \pm 1.15$  & $1.43\times10^{-6}$ \\
  & Top-$\Delta V$  & $2.94 \pm 0.74$  & $2.30 \pm 1.99$  & $8.95$ \\
  & Uniform  & $2.79 \pm 0.66$  & $0.83 \pm 0.52$  & $7.77$ \\
  & VISTA  & $2.84 \pm 0.61$  & $\mathbf{2.73 \pm 1.90}$  & $\mathbf{12.40}$ \\
\midrule
\multirow{9}{*}{24}  & Interval 1  & $5.04 \pm 0.87$  & $0.40 \pm 0.61$  & $2.64\times10^{-5}$ \\
  & Interval 2  & $5.09 \pm 0.70$  & $0.77 \pm 1.02$  & $0.15$ \\
  & Interval 3  & $5.09 \pm 0.72$  & $1.50 \pm 1.26$  & $3.37$ \\
  & Interval 4  & $4.96 \pm 0.68$  & $3.37 \pm 1.85$  & $2.66$ \\
  & Interval 5  & $4.93 \pm 0.84$  & $3.20 \pm 1.99$  & $1.24\times10^{-6}$ \\
  & Top-$V$  & $5.05 \pm 0.70$  & $3.07 \pm 1.41$  & $0.19$ \\
  & Top-$\Delta V$  & $5.63 \pm 0.80$  & $\mathbf{4.93 \pm 1.84}$  & $9.58$ \\
  & Uniform  & $5.51 \pm 0.80$  & $2.70 \pm 1.39$  & $7.10$ \\
  & VISTA  & $5.29 \pm 0.81$  & $4.17 \pm 1.21$  & $\mathbf{12.08}$ \\
\bottomrule
\end{tabular}
}
\end{minipage}
\hfill
\begin{minipage}[t]{0.485\textwidth}
\centering
\small
\caption{Protein Fitness generation (SMC-base): execution time, unique valid, and $\hat{S}(\mathcal{G})$.}
\label{tab:protein-mdlm-smc-base}
\resizebox{\linewidth}{!}{
\setlength{\tabcolsep}{4.0pt}
\renewcommand{\arraystretch}{0.70}
\begin{tabular}{llccc}
\toprule
\textbf{$T'$} & \textbf{Policy} & {\textbf{Time [s]} $\downarrow$} & {\textbf{Unique Valid} $\uparrow$} & {$\hat{S}(\mathcal{G})$ $\uparrow$} \\
\midrule
\multirow{1}{*}{128}  & Full  & $3.12$  & $\mathbf{9.00 \pm 2.77}$  & {--} \\
\midrule
\multirow{9}{*}{16}  & Interval 1  & $1.531 \pm 0.003$  & $13.20 \pm 3.24$  & $3.79\times10^{3}$ \\
  & Interval 2  & $1.530 \pm 0.004$  & $10.65 \pm 4.44$  & $2.95\times10^{3}$ \\
  & Interval 3  & $1.530 \pm 0.002$  & $8.10 \pm 2.15$  & $2.07\times10^{3}$ \\
  & Interval 4  & $1.529 \pm 0.002$  & $6.00 \pm 2.20$  & $1.07\times10^{3}$ \\
  & Interval 5  & $1.517 \pm 0.003$  & $5.35 \pm 2.16$  & $1.12 \times 10^{2}$ \\
  & Top-$V$  & $1.519 \pm 0.002$  & $8.40 \pm 2.68$  & $3.34\times10^{3}$ \\
  & Top-$\Delta V$  &  $1.531 \pm 0.002$  & $12.60 \pm 2.58$  & $4.23\times10^{3}$ \\
  & Uniform  & $1.521 \pm 0.003$  & $10.85 \pm 3.79$  & $3.83\times10^{3}$ \\
  & VISTA  & $1.530 \pm 0.004$  & $\mathbf{14.15 \pm 3.80}$  & $\mathbf{4.53\times10^{3}}$ \\
\midrule
\multirow{9}{*}{32}  & Interval 1  & $1.777 \pm 0.003$  & $\mathbf{12.60 \pm 4.04}$  & $3.57\times10^{3}$ \\
  & Interval 2  & $1.777 \pm 0.005$  & $9.60 \pm 4.58$  & $2.81\times10^{3}$ \\
  & Interval 3  & $1.776 \pm 0.004$  & $6.80 \pm 3.24$  & $1.82\times10^{3}$ \\
  & Interval 4  & $1.773 \pm 0.005$  & $6.00 \pm 2.10$  & $9.15\times 10^{2}$ \\
  & Interval 5  & $1.760 \pm 0.006$  & $6.10 \pm 2.47$  & $96.0$ \\
  & Top-$V$  & $1.762 \pm 0.003$  & $8.55 \pm 2.95$  & $2.62\times10^{3}$ \\
  & Top-$\Delta V$  & $1.774 \pm 0.008$  & $11.25 \pm 3.49$  & $3.56\times10^{3}$ \\
  & Uniform  & $1.768 \pm 0.004$  & $11.80 \pm 2.82$  & $3.33\times10^{3}$ \\
  & VISTA  & $1.776 \pm 0.002$  & $12.00 \pm 3.32$  & $\mathbf{3.94\times10^{3}}$ \\
\bottomrule 
\end{tabular}
}
\end{minipage}
\end{table*}

\subsection{Discussion}

The schedules selected by \method differ substantially across tasks,
highlighting the importance of adapting guidance timing to the underlying
model and reward function.
Binary MNIST places most guidance steps in the early part of the reverse
process, whereas Toxic Text, DNA, and CIFAR-10 distribute guidance more broadly
across the trajectory.
These distinct patterns suggest that a single fixed temporal schedule is
unlikely to be uniformly effective across tasks.

Interestingly, sparse guidance can sometimes outperform full guidance on
diversity-sensitive metrics, as observed on CIFAR-10.
This is consistent with the possibility that overly frequent correction can
reduce particle diversity in finite-particle SMC.
Overall, the results indicate that selectively allocating guidance steps can
provide a favorable trade-off between downstream performance and inference
cost.

Finally, the weighted extension in Appendix~\ref{Expdetail} shows that the
same schedule-selection principle can also be applied beyond masked diffusion
through non-uniform temporal weighting.

\vspace{-0.3cm}
\section{Limitation}

\method has three main limitations.
First, \method requires a small number of full-guidance warmup generations
before schedule optimization.
These generations can be retained as model outputs while simultaneously
providing the value statistics used for schedule selection, but they still
introduce an upfront computational cost.
Therefore, the benefit of \method is most pronounced when the selected
schedule is reused across multiple subsequent generations.

Second, the UTD objective is a conservative upper bound on path deviation
rather than a direct objective for downstream task performance.
As a result, the schedule that optimizes $\hat{S}(\mathcal{G})$ does not
necessarily maximize the downstream evaluation metric at every guidance
budget.
Tighter surrogates that better capture finite-particle effects and downstream
sample quality are an important direction for future work.

Third, the weighted extension for non-masked diffusion requires a
user-specified temporal weighting function $\lambda(t)$.
Our experiments show that non-uniform weighting can extend \method beyond
masked diffusion, but the appropriate choice of $\lambda(t)$ may depend on
the diffusion process and task.
Automatically learning or adapting these temporal weights remains an open
problem.

\vspace{-0.3cm}
\section{Conclusion}
We introduced \method, a training-free scheduler for sparse reward-guided SMC in discrete diffusion. By minimizing an estimable upper bound on path deviation, \method identifies task-specific guidance schedules under a fixed computational budget. Our experiments across image, text, and biological domains demonstrate that automatic schedule selection can retain strong downstream performance, and in several settings even outperform full guidance, while substantially reducing inference latency. Ultimately, this work shows that for efficient discrete diffusion alignment, choosing \emph{when} to guide is as important as designing \emph{how} to guide.

\section*{Acknowledgments}

This work was partly supported by the KHIDI grant funded by the Korean government (MOHW) [No.RS-2025-02307233, No.RS-2026-25613012], the NRF or IITP grants funded by the Korean government (MSIT) [No.05-26-04-0094, No.RS-2026-25472075, No.RS-2025-02305581, No.RS-2025-25442338, and No.RS-2021-II211343], the ITIP grant funded by the Korean government (MOTIR) [No.RS-2026-25549946], the Research grant from SNU, and the Strategic Hub grant for International Research Collaboration of SNU. Kyungsu Kim is affiliated with the School of Transdisciplinary Innovations, Department of Biomedical Science, Interdisciplinary Program in AI (IPAI) \& Graduate School of AI, Medical Research Center, and AI Institute at SNU.

\bibliography{main}
\bibliographystyle{plainnat}

\clearpage
\appendix

{\huge \centering \bfseries Appendix \par}
\addcontentsline{toc}{section}{Appendix}

\startcontents[appendix]
\printcontents[appendix]{}{1}{}

\clearpage

\section{LLM Usage Disclosure}
\label{app:llm_usage}

We used a large-language model (LLM) only for minor text editing, such as correcting typos and adjusting wording to a formal academic tone. The LLM was not involved in research ideation, experimental design, implementation, or analysis. All scientific content and claims were conceived and written by the authors, who take full responsibility for the entire paper.

\section{Theoretical Details}

\subsection{Background}\label{appendixbackground}

\begin{lemma}\label{bellman}
    \begin{equation}
        V_t(x_t) = \mathbb{E}_{x_{s}\sim p_{s | t}(\cdot | x_t)}[V_{s}(x_{s})]
    \end{equation}
    where $0\leq s <t\leq T$.
    Also,
    \begin{equation}
        \mathbb{E}_{x_t \sim p_t}[V_t(x_t)] = \mathbb{E}_{x_0 \sim p_0}[R(x_0)].
    \end{equation}
\end{lemma}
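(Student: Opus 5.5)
The plan is to prove both identities straight from the definition $V_t(x_t) := \mathbb{E}_{x_0\sim p_{0|t}(\cdot|x_t)}[R(x_0)]$, using only the Markov structure of the reference reverse process. The key fact is the Chapman--Kolmogorov decomposition of the composed kernels. For $0\le s<t\le T$,
\begin{equation}
p_{0|t}(x_0\mid x_t)=\sum_{x_s\in\mathcal{X}} p_{s|t}(x_s\mid x_t)\,p_{0|s}(x_0\mid x_s).
\end{equation}
This holds because the reference path $p_T(x_T)\prod_{t=0}^{T-1}p_{t|t+1}(x_t|x_{t+1})$ is a Markov chain run backward in time. Consequently, $p_{s|t}$ is the product of the one-step kernels from $t$ down to $s$, and composing from $t$ to $s$ and then from $s$ to $0$ gives the product from $t$ to $0$. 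I will record this as a short first step, by induction on $t-s$ or by direct marginalization of the joint law over the intermediate states.

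For the first identity, I substitute the decomposition into the definition of $V_t$:
\begin{equation}
V_t(x_t)=\sum_{x_0}R(x_0)\sum_{x_s}p_{s|t}(x_s|x_t)\,p_{0|s}(x_0|x_s)=\sum_{x_s}p_{s|t}(x_s|x_t)\,V_s(x_s).
\end{equation}
Exchanging the order of summation is justified because every term is nonnegative (Tonelli). Under the standing assumption $0<R\le\Rmax$, all sums are moreover absolutely convergent even if $\mathcal{X}$ is countably infinite. The right-hand side is exactly $\mathbb{E}_{x_s\sim p_{s|t}(\cdot|x_t)}[V_s(x_s)]$.

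For the second identity, I apply the first with $s=0$, using $V_0=R$ (since $p_{0|0}$ is the identity kernel). This gives $V_t(x_t)=\sum_{x_0}p_{0|t}(x_0|x_t)R(x_0)$. Averaging over $x_t\sim p_t$ and using $\sum_{x_t}p_t(x_t)\,p_{0|t}(x_0|x_t)=p_0(x_0)$, which is the marginalization property of the reference path, yields $\mathbb{E}_{x_t\sim p_t}[V_t(x_t)]=\mathbb{E}_{x_0\sim p_0}[R(x_0)]=Z$.

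The only point needing care is the composition and marginal consistency of the kernels, namely that $p_t$ and $p_{s|t}$ are the marginals and conditionals of one and the same joint law. Once the definitions are unpacked, this is routine. The rest is bookkeeping with nonnegative sums.
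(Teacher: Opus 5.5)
Your proposal is correct and follows essentially the same route as the paper: the first identity comes from the tower property (Chapman--Kolmogorov composition of the reference kernels), and the second from averaging against the marginal $p_t$. Your version is slightly more complete, since you prove the second identity for every $t$ and justify the interchange of sums, whereas the paper writes that step out only at $t=T$.
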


\begin{proof}
    \begin{equation}
        V_t(x_t) = \mathbb{E}_{x_0 \sim p_{0|t}(\cdot | x_t)}[R(x_0)]
    \end{equation}
    comes from the definition of value function.
    
    \begin{align}
        &\mathbb{E}_{x_{s}\sim p_{s | t}(\cdot | x_t)}[V_{s}(x_{s})]
        = \mathbb{E}_{x_{s}\sim p_{s | t}(\cdot | x_t)}[ \mathbb{E}_{x_0 \sim p_{0|s}(\cdot | x_{s})}[R(x_0)]] \\
        =& \mathbb{E}_{x_0 \sim p_{0|t}(\cdot | x_t)}[R(x_0)] = V_t(x_t).
    \end{align}

    \begin{align}
        \mathbb{E}_{x_T\sim p_T}[V_T(x_T)] = \mathbb{E}_{x_T\sim p_T}[\mathbb{E}_{x_0 \sim p_{0|T}(\cdot | x_T)}[R(x_0)]]=\mathbb{E}_{x_0 \sim p_0}[R(x_0)].
    \end{align}
\end{proof}

\begin{theorem}
\label{optimalpolicy}
    $\{\pi_t\}_{t=T}^0$ that maximizes
    \begin{equation}
        \mathbb{E}_{\{\pi_t\}_{t=T}^0} [r(x_0) - \alpha \sum_{t=T-1}^0 \KL(\pi_{t | t+1}(\cdot | x_{t+1}) || p_{t | t+1}(\cdot | x_{t+1})) - \alpha\KL(\pi_T(\cdot) || p_T(\cdot))]
    \end{equation}
    ($\alpha>0$) is
    \begin{equation}
        \pi_t(x_t) = \frac{1}{Z}V_t(x_t)p_t(x_t). \quad (Z:=\mathbb{E}_{x_0\sim p_0}[R(x_0)])
    \end{equation}
\end{theorem}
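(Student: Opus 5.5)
The plan is to prove the statement as an entropy-regularized control problem, by decomposing the objective into a single KL divergence between path measures. Write $P(x_{0:T}) := p_T(x_T)\prod_{t=0}^{T-1} p_{t|t+1}(x_t|x_{t+1})$ for the reference path measure and $Q(x_{0:T}) := \pi_T(x_T)\prod_{t=0}^{T-1}\pi_{t|t+1}(x_t|x_{t+1})$ for the path measure induced by a candidate policy.

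First, apply the chain rule for KL divergence along the Markov chain:
\[
\KL(Q\|P)=\KL(\pi_T\|p_T)+\sum_{t=0}^{T-1}\mathbb{E}_{Q}\bigl[\KL(\pi_{t|t+1}(\cdot|x_{t+1})\|p_{t|t+1}(\cdot|x_{t+1}))\bigr].
\]
Hence the objective equals $\mathbb{E}_Q[r(x_0)]-\alpha\KL(Q\|P)$.

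Next, define the tilted path measure $P^*(x_{0:T}) := \frac{1}{Z}R(x_0)P(x_{0:T})$. It is normalized because $\mathbb{E}_P[R(x_0)]=\mathbb{E}_{p_0}[R]=Z$, and $Z>0$ since $R=\exp(r/\alpha)>0$. A direct computation gives
\[
\mathbb{E}_Q[r(x_0)]-\alpha\KL(Q\|P)=\alpha\log Z-\alpha\KL(Q\|P^*).
\]
The objective is therefore maximized exactly when $Q=P^*$. This maximizer is unique in terms of path measure, and hence its one-step kernels are unique wherever the chain puts mass.

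It remains to identify the marginals and kernels of $P^*$. Marginalize over $x_{0:t-1}$ using the definition $V_t(x_t)=\mathbb{E}_{p_{0|t}(\cdot|x_t)}[R(x_0)]$. The Markov property of $P$ lets the sum over earlier states collapse to $V_t$, giving
\[
P^*(x_{t:T})=\frac{1}{Z}V_t(x_t)\,p_T(x_T)\prod_{s\ge t}p_{s|s+1}(x_s|x_{s+1}).
\]
From this formula I will read off two facts.
\begin{itemize}
\item Marginalizing further over $x_{t+1:T}$ yields $\pi_t(x_t)=\frac{1}{Z}V_t(x_t)p_t(x_t)$, which is the claimed marginal.
\item Taking the ratio of the expressions at $t$ and $t+1$ yields the kernel $\frac{V_t(x_t)}{V_{t+1}(x_{t+1})}p_{t|t+1}(x_t|x_{t+1})$. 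This kernel is properly normalized by Lemma~\ref{bellman} with $s=t$ and $t$ replaced by $t+1$.
\end{itemize}

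The main obstacle is mild: justifying the first step's chain-rule rewrite of the objective, namely that the expectations of the per-step KL terms are taken under $Q$'s own state distribution. The expression in the theorem is written as an expectation under $\{\pi_t\}$, so it should be read exactly this way. I also need to handle states with zero probability: there the kernels are arbitrary, and I define them by the formula above. Everything else is bookkeeping on a discrete space with finite sums.
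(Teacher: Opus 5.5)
Your argument is correct, but it takes a different route from the paper.

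\textbf{The paper's route.} It first writes down the candidate Markov chain $\pi^*$, with $\pi_T^*=V_Tp_T/Z$ and kernels $V_tp_{t|t+1}/V_{t+1}$. It then unrolls the objective backward in time using the soft-Bellman identity $\mathbb{E}_{\pi_{s|s+1}}[v_s-\alpha\log(\pi_{s|s+1}/p_{s|s+1})]=v_{s+1}-\alpha\KL(\pi_{s|s+1}\|\pi^*_{s|s+1})$. This yields $\alpha\log Z-\alpha\KL(\pi_T\|\pi_T^*)-\alpha\sum_t\mathbb{E}_{\pi_{t+1}}[\KL(\pi_{t|t+1}\|\pi^*_{t|t+1})]$. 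Finally, it proves the marginal formula $\pi_t^*=V_tp_t/Z$ by forward induction.

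\textbf{Your route.} You work on path space instead. The KL chain rule turns the objective into $\mathbb{E}_Q[r]-\alpha\KL(Q\|P)$. The Gibbs variational identity rewrites this as $\alpha\log Z-\alpha\KL(Q\|P^*)$ with $P^*\propto R(x_0)P$. Marginalizing $P^*$ then gives both the marginals and the kernels directly, with no induction.

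\textbf{How they compare.} The two end expressions coincide: the paper's sum of per-step KLs is exactly the chain-rule expansion of $\KL(Q\|P^*)$ once $P^*$ is known to be Markov. Your version explains where the paper's guessed kernels come from and gives uniqueness in one line. It also treats zero-probability states more carefully than the paper does. The paper's version makes the dynamic-programming structure, and the per-step role of $V_t$, explicit.

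\textbf{One point to state explicitly.} The optimization ranges only over Markov path measures $\pi_T\prod_t\pi_{t|t+1}$. So ``maximized exactly when $Q=P^*$'' needs $P^*$ to lie in that class. Your ratio computation already supplies this, because $P^*(x_t\mid x_{t+1:T})$ depends only on $x_{t+1}$. Say that this is what makes $P^*$ a Markov chain with the stated kernels, and hence a feasible maximizer.
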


\begin{proof}
    We define the Markov process $\{\pi_t^*\}_{t=T}^0$ as
    \begin{align}
        \pi_T^*(x_T) &= \frac{1}{Z}V_T(x_T)p_T(x_T), \\
        \pi_{t | t+1}^*(x_t | x_{t+1}) &= \frac{V_t(x_t)p_{t|t+1}(x_t | x_{t+1})}{V_{t+1}(x_{t+1})} \quad (t=0, 1, \cdots, T-1).
    \end{align}
    (These distributions are well-defined(normalized) by Lemma \ref{bellman}.)

    We observe
    \begin{align}
        &\mathbb{E}_{\{\pi_t\}_{t=T}^0} [r(x_0) - \alpha \sum_{t=T-1}^0 \KL(\pi_{t | t+1}(\cdot | x_{t+1}) || p_{t | t+1}(\cdot | x_{t+1})) - \alpha\KL(\pi_T(\cdot) || p_T(\cdot))] \\
        =& \mathbb{E}_{x_0\sim \pi_0}[r(x_0)] -\alpha\sum_{t=T-1}^0 \mathbb{E}_{x_{t}, x_{t+1}\sim \pi_{t, t+1}} [\log\frac{\pi_{t|t+1}(x_t|x_{t+1})}{p_{t|t+1}(x_t|x_{t+1})}] -\alpha\mathbb{E}_{x_T\sim \pi_T}[\log\frac{\pi_T(x_T)}{p_T(x_T)}] \\
        =& \mathbb{E}_{x_T\sim \pi_T} \mathbb{E}_{x_{T-1}\sim \pi_{T-1|T}(\cdot|x_T)} \cdots \mathbb{E}_{x_0\sim \pi_{0|1}(\cdot | x_1)} \notag \\
        & \quad [r(x_0) - \alpha \sum_{t=T-1}^0 \log \frac{\pi_{t|t+1}(x_t | x_{t+1})}{p_{t|t+1}(x_t | x_{t+1})} - \alpha\log\frac{\pi_T(x_T)}{p_T(x_T)}]. \label{recurobj}
    \end{align}

    If we use the recurrence relation
    \begin{align}
        &\mathbb{E}_{x_s \sim \pi_{s | s+1}(\cdot | x_{s+1})}[v_s(x_s) - \alpha\log\frac{\pi_{s|s+1}(x_s|x_{s+1})}{p_{s|s+1}(x_s|x_{s+1})}] \\
        =& \mathbb{E}_{x_s \sim \pi_{s | s+1}(\cdot | x_{s+1})}[v_{s+1}(x_{s+1}) - \alpha\log\frac{\pi_{s|s+1}(x_s|x_{s+1})}{\pi_{s|s+1}^*(x_s|x_{s+1})}] \\
        =& v_{s+1}(x_{s+1}) - \alpha \KL(\pi_{s|s+1}(\cdot|x_{s+1}) ||\pi_{s|s+1}^*(\cdot|x_{s+1}))
    \end{align}
    consequently, Eq. \eqref{recurobj} could be simplified into
    \begin{align}
        &\mathbb{E}_{x_T\sim \pi_T}[v_T(x_T)-\alpha\log\frac{\pi_T(x_T)}{p_T(x_T)}] -\alpha\sum_{t=T-1}^0 \mathbb{E}_{x_{t+1} \sim \pi_{t+1}}[\KL(\pi_{t|t+1}(\cdot | x_{t+1}) || \pi_{t|t+1}^*(\cdot | x_{t+1}))] \\
        =& \alpha\log Z - \alpha\sum_{t=T-1}^0 \mathbb{E}_{x_{t+1} \sim \pi_{t+1}}[\KL(\pi_{t|t+1}(\cdot | x_{t+1}) || \pi_{t|t+1}^*(\cdot | x_{t+1}))] -\alpha\KL(\pi_T(\cdot) || \pi_T^*(\cdot)).
    \end{align}

    We now easily see that this objective is maximized iff $\pi_T = \pi_T^*$ and $\pi_{t|t+1} = \pi_{t|t+1}^*$ for all $t=0, 1, \cdots, T-1$.

    The final step is proving
    \begin{equation}
        \pi_t^*(x_t) = \frac{1}{Z}V_t(x_t)p_t(x_t).
    \end{equation}
    This is satisfied at $t=T$. If
    \begin{equation}
        \pi_{t+1}^*(x_{t+1}) = \frac{1}{Z}V_{t+1}(x_{t+1})p_t(x_{t+1}),
    \end{equation}
    then 
    \begin{align}
        \pi_t^*(x_t) &= \sum_{x_{t+1}} \pi_{t+1}^*(x_{t+1})\pi_{t|t+1}^*(x_t | x_{t+1}) \\
        &= \sum_{x_{t+1}} \frac{V_{t+1}(x_{t+1})p_{t+1}(x_{t+1})}{Z}\frac{V_t(x_t)p_{t|t+1}(x_t | x_{t+1})}{V_{t+1}(x_{t+1})} \\
        &= \frac{1}{Z}V_t(x_t)p_t(x_t).
    \end{align}
    Proof ends by induction.
    
\end{proof}

\begin{algorithm}
\caption{Multinomial resampling} \label{alg:resample}
\begin{algorithmic}[1]
    \State \textbf{Input: } $\{x^{(i)}\}_{i=1}^N, \{w^{(i)}\}_{i=1}^N$
    \For{$k \gets 1, 2, \cdots, N$}
        \State $i_k \overset{\text{iid}}{\sim} \text{Cat}(i: \frac{w^{(i)}}{\sum_{n=1}^N w^{(n)}})$
        \State $y^{(k)} \gets x^{(i_k)}$
    \EndFor
    \State \textbf{Output: } $\{y^{(k)}\}_{k=1}^N, \{1 \}_{k=1}^N$
\end{algorithmic}
\end{algorithm}

\subsection{Reduced SMC algorithm}

Algorithm \ref{alg:resample} defines the idealized multinomial resampling operator utilized in our analysis. In practice, more efficient variants such as systematic, adaptive, or partial resampling may be employed; we refer the reader to \cite{naesseth2019elements} for a comprehensive review of these methods.

\begin{lemma}[Importance Sampling]\label{IS}
    Let $(x^{(i)}, \tilde{x}^{(i)})\overset{\text{iid}}{\sim} \rho(x, \tilde{x})$, and let $w(x, \tilde{x})\geq 0$ be a weight function such that $w^{(i)} = w(x^{(i)}, \tilde{x}^{(i)})$. As $N \rightarrow \infty$, the output particles $\{y^{(k)}\}_{k=1}^N$ of Algorithm \ref{alg:resample} satisfy:
    \begin{equation}
        y^{(k)} \overset{\text{iid}}{\sim} \pi(x) \propto \sum_{\tilde{x}} w(x, \tilde{x})\rho(x, \tilde{x}).
    \end{equation}
\end{lemma}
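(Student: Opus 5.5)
The statement should be read as a limit statement. For every fixed $K$, the joint law of $(y^{(1)},\dots,y^{(K)})$ produced by Algorithm \ref{alg:resample} converges, as $N\to\infty$, to the $K$-fold product of $\pi$, where $\pi(x) = \sum_{\tilde{x}} w(x,\tilde{x})\rho(x,\tilde{x}) / \mathbb{E}_\rho[w]$. I will assume $0<\mathbb{E}_\rho[w]<\infty$, which is needed for $\pi$ to be a normalizable distribution. This holds in our applications because the weights are ratios of bounded values with $0<R\le \Rmax$. The plan is to condition on the propagated particles and use the fact that multinomial resampling is a draw from a random weighted empirical measure. I then pass to the limit with the strong law of large numbers and bounded convergence.

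\textbf{Step 1 (conditional law).} Let $\mathcal{F}_N$ be the $\sigma$-field generated by $\{(x^{(i)},\tilde{x}^{(i)})\}_{i=1}^N$. By construction, the indices $i_1,\dots,i_K$ are conditionally i.i.d.\ given $\mathcal{F}_N$. For each $x\in\mathcal{X}$, on the event $W_N := \sum_n w^{(n)} > 0$,
\begin{equation*}
\Pr\bigl(y^{(k)}=x \mid \mathcal{F}_N\bigr) = \hat{\pi}_N(x) := \frac{\frac{1}{N}\sum_{i=1}^N w(x^{(i)},\tilde{x}^{(i)})\,\mathbb{1}\{x^{(i)}=x\}}{\frac{1}{N}\sum_{n=1}^N w(x^{(n)},\tilde{x}^{(n)})}.
\end{equation*}
Hence, on the same event,
\begin{equation*}
\Pr\bigl(y^{(1)}=z_1,\dots,y^{(K)}=z_K \mid \mathcal{F}_N\bigr)=\prod_{k=1}^K \hat{\pi}_N(z_k).
\end{equation*}

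\textbf{Step 2 (almost sure convergence).} The numerator of $\hat{\pi}_N(x)$ is an i.i.d.\ average of the nonnegative, integrable variables $w\,\mathbb{1}\{x^{(i)}=x\}$. By the SLLN it converges almost surely to $\sum_{\tilde{x}} w(x,\tilde{x})\rho(x,\tilde{x})$. By the same argument, the denominator converges almost surely to $\mathbb{E}_\rho[w]>0$. In particular, $\Pr(W_N=0)\to 0$, and the event $\{W_N=0\}$ can be ignored in the limit. Therefore $\hat{\pi}_N(x)\to\pi(x)$ almost surely for every $x$. Since $\mathcal{X}$ is countable, this holds simultaneously for all $x$.

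\textbf{Step 3 (unconditional law).} Taking expectations,
\begin{equation*}
\Pr\bigl(y^{(1)}=z_1,\dots,y^{(K)}=z_K\bigr)=\mathbb{E}\Bigl[\prod_{k}\hat{\pi}_N(z_k)\Bigr]+o(1).
\end{equation*}
Each factor $\hat{\pi}_N(z_k)$ lies in $[0,1]$, so bounded convergence gives the limit $\prod_k \pi(z_k)$. This is the asserted i.i.d.-$\pi$ limit. Because the space is discrete, Scheffé's lemma upgrades this pointwise convergence to total-variation convergence.

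\textbf{Main obstacle.} I expect the hard part to be the formalization rather than any computation. "$y^{(k)}\overset{\text{iid}}{\sim}\pi$" is exact only in the limit: for finite $N$ the outputs are merely exchangeable, and they are correlated through the shared empirical measure. Making the statement precise as convergence of finite-dimensional marginals, and handling the degenerate event $W_N=0$, is where care is needed. The integrability assumption on $w$ is what makes the SLLN step valid.
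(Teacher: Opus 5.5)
Your proof is correct and follows the same route as the paper's. Both arguments write the law of a resampled particle as the self-normalized weighted empirical measure, apply the law of large numbers to the numerator and the denominator, and pass to the limit. Your version is tighter in three ways: it justifies exchanging limit and expectation by bounded convergence, it disposes of the event $W_N=0$, and it makes the paper's informal independence remark precise by conditioning on $\mathcal{F}_N$ and showing that the $K$-dimensional marginals converge to $\pi^{\otimes K}$.
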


\begin{proof}
    For any test function $\phi(x)$, 
    \begin{align}
        \mathbb{E}_{x^{(i)}, \tilde{x}^{(i)}\overset{\text{iid}}{\sim} \rho(x, \tilde{x})}[\phi(y^{(k)})] &= \mathbb{E}_{x^{(i)}, \tilde{x}^{(i)}\overset{\text{iid}}{\sim} \rho(x, \tilde{x})}[ \frac{\frac{1}{N}\sum_{i=1}^N w(x^{(i)}, \tilde{x}^{(i)})\phi(x^{(i)})}{\frac{1}{N}\sum_{n=1}^N w(x^{(n)}, \tilde{x}^{(n)})}].
    \end{align}

    By the law of large numbers, as $N\rightarrow \infty$,
    \begin{align}
        \frac{1}{N}\sum_{n=1}^N w(x^{(n)}, \tilde{x}^{(n)}) &\xrightarrow{\text{a.s.}} \mathbb{E}_{x, \tilde{x} \sim \rho} [w(x, \tilde{x})] \\
        \frac{1}{N}\sum_{i=1}^N w(x^{(i)}, \tilde{x}^{(i)})\phi(x^{(i)}) &\xrightarrow{\text{a.s.}} \mathbb{E}_{x, \tilde{x} \sim \rho}[w(x, \tilde{x})\phi(x)] = \mathbb{E}_{x, \tilde{x} \sim \rho} [w(x, \tilde{x})]\times \mathbb{E}_{x\sim \pi}[\phi(x)].
    \end{align}

    Thus, we get
    \begin{equation}
        \mathbb{E}_{x^{(i)}, \tilde{x}^{(i)}\overset{\text{iid}}{\sim} \rho(x, \tilde{x})}[\phi(y^{(k)})] \overset{N\rightarrow \infty}{\rightarrow} \mathbb{E}_{x \sim \pi}[\phi(x)],
    \end{equation}
    so $y^{(k)} \sim \pi$.

    Lastly, regarding the independence of $\{y^{(k)}\}_{k=1}^N$, the multinomial resampling process draws each $y^{(k)}$ independently given the particle set. As $N \rightarrow \infty$, this process is equivalent to i.i.d. sampling from the target distribution $\pi$, as the empirical distribution converges to the true $\pi$.

\end{proof}

\begin{algorithm}[t]
\caption{Full SMC guidance}
    \label{alg:full_smc}
    \begin{algorithmic}[1]
    \For{$i \gets 1, 2, \cdots, N$}
        \State $x_T^{(i)} \overset{\text{iid}}{\sim} q_T(x_T)$
        \State $x_0^{(i, j)} \overset{\text{iid}}{\sim} p_{0|T}(x_0 | x_T^{(i)}) \quad (j=1, 2, \cdots, J)$
        \State $\hat{V}_T^{(i)} \gets \frac{1}{J}\sum_{j=1}^J R(x_0^{(i, j)})$
        \State $w_T^{(i)} \gets p_T(x_T^{(i)})\times \hat{V}_T^{(i)}/ q_T(x_T^{(i)})$
    \EndFor
    \State $\{x_T^{(i)}\}_{i=1}^N, \{w_T^{(i)}\}_{i=1}^N \gets \text{Resample}(\{x_T^{(i)}\}_{i=1}^N, \{w_T^{(i)}\}_{i=1}^N)$ \label{Tend}
    \For{$t \gets T-1, T-2, \cdots, 0$}
        \For{$i \gets 1, 2, \cdots, N$}
            \State $x_t^{(i)} \overset{\text{iid}}{\sim} q_{t | t+1}(x_t | x_{t+1}^{(i)})$
            \State $x_0^{(i, j)} \overset{\text{iid}}{\sim} p_{0|t}(x_0 | x_t^{(i)}) \quad (j=1, 2, \cdots, J)$
            \State $\hat{V}_t^{(i)} \gets \frac{1}{J}\sum_{j=1}^J R(x_0^{(i, j)})$
            \State $w_t^{(i)} \gets \frac{p_{t|t+1}(x_t^{(i)} | x_{t+1}^{(i)})}{q_{t|t+1}(x_t^{(i)} | x_{t+1}^{(i)})} \times \frac{\hat{V}_t^{(i)}}{\hat{V}_{\lceil t+1 \rceil}^{(i)}}\times w_{\lceil t+1 \rceil}^{(i)}$
        \EndFor
        \State $\{x_t^{(i)}\}_{i=1}^N, \{w_t^{(i)}\}_{i=1}^N \gets \text{Resample}(\{x_t^{(i)}\}_{i=1}^N, \{w_t^{(i)}\}_{i=1}^N)$
        \label{tend}
    \EndFor
    \end{algorithmic}
\end{algorithm}

\begin{algorithm}
\caption{Sparse SMC guidance}
\label{alg:sparsesmc}
\begin{algorithmic}[1]
\For{$i \gets 1, 2, \cdots, N$}
    \State $x_T^{(i)} \overset{\text{iid}}{\sim} q_T(x_T)$
    \State $x_0^{(i, j)} \overset{\text{iid}}{\sim} p_{0|T}(x_0 | x_T^{(i)}) \quad (j=1, 2, \cdots, J)$
    \State $\hat{V}_T^{(i)} \gets \frac{1}{J}\sum_{j=1}^J R(x_0^{(i, j)})$
    \State $w_T^{(i)} \gets p_T(x_T^{(i)})\times \hat{V}_T^{(i)}/ q_T(x_T^{(i)})$
\EndFor
\State $\{x_T^{(i)}\}_{i=1}^N, \{w_T^{(i)}\}_{i=1}^N \gets \text{Resample}(\{x_T^{(i)}\}_{i=1}^N, \{w_T^{(i)}\}_{i=1}^N)$ \label{Tend}
\For{$t \gets T-1, T-2, \cdots, 0$}
    \If{$t \in \mathcal{G}$}
        \For{$i \gets 1, 2, \cdots, N$}
            \State $x_t^{(i)} \overset{\text{iid}}{\sim} q_{t | t+1}(x_t | x_{t+1}^{(i)})$
            \State $x_0^{(i, j)} \overset{\text{iid}}{\sim} p_{0|t}(x_0 | x_t^{(i)}) \quad (j=1, 2, \cdots, J)$
            \State $\hat{V}_t^{(i)} \gets \frac{1}{J}\sum_{j=1}^J R(x_0^{(i, j)})$
            \State $w_t^{(i)} \gets \frac{p_{t|t+1}(x_t^{(i)} | x_{t+1}^{(i)})}{q_{t|t+1}(x_t^{(i)} | x_{t+1}^{(i)})} \times \frac{\hat{V}_t^{(i)}}{\hat{V}_{\lceil t+1 \rceil}^{(i)}}\times w_{\lceil t+1 \rceil}^{(i)}$
        \EndFor
        \State $\{x_t^{(i)}\}_{i=1}^N, \{w_t^{(i)}\}_{i=1}^N \gets \text{Resample}(\{x_t^{(i)}\}_{i=1}^N, \{w_t^{(i)}\}_{i=1}^N)$
        \label{tend}
    \Else
        \For{$i \gets 1, 2, \cdots, N$}
            \State $x_t^{(i)} \overset{\text{iid}}{\sim} p_{t | t+1}(x_t | x_{t+1}^{(i)})$
        \EndFor
    \EndIf
\EndFor
\end{algorithmic}
\end{algorithm}

\begin{theorem}
\label{SMCthm}
    Say $N, J \rightarrow \infty$. At every guided time $t\in \mathcal{G}\cup \{T\}$, the particles in Algorithm \ref{alg:sparsesmc} are distributed according to the ideal guided target distribution, i.e., $x_t^{(i)} \overset{\text{iid}}{\sim} \pi_t^* \quad (i=1, 2, \cdots, N)$.
\end{theorem}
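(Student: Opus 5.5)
We argue by induction over the guided times in decreasing order, $T = s_0 > s_1 > \cdots > s_K$ with $\{s_1,\dots,s_K\} = \mathcal{G}$. Each induction step is an application of Lemma~\ref{IS}. Throughout, $J\to\infty$ lets us replace every rollout estimate $\hat V_t^{(i)}$ by the exact $V_t(x_t^{(i)})$, by the law of large numbers applied to $\frac1J\sum_j R(x_0^{(i,j)})$. This is legitimate because $R$ is bounded and positive, so no ratio degenerates.

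\textbf{Base case.} At $t=T$, take $\rho = q_T$ and $w = p_T V_T/q_T$. Lemma~\ref{IS} gives resampled particles i.i.d.\ from a density proportional to $p_T V_T$, which is $\pi_T^*$ by Theorem~\ref{optimalpolicy}. Since the output weights are reset to $1$, the resampled population is i.i.d.\ $\pi_T^*$ with unit weights.

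\textbf{Inductive step.} Suppose that at the guided time $u=s_{k}$ the particles are i.i.d.\ $\pi_u^*$ with unit weights. Let $t=s_{k+1}$ be the next guided time. Between $u$ and $t+1$ there is no guidance, so each particle is propagated by the reference kernels, and $x_{t+1}^{(i)} \sim p_{t+1|u}(\cdot\mid x_u^{(i)})$. Moreover $\lceil t+1\rceil = u$, so the recorded weight and value are still $w_u^{(i)}=1$ and $V_u(x_u^{(i)})$. At step $t$ we draw $x_t \sim q_{t|t+1}(\cdot\mid x_{t+1})$ and form the weight
\[
w=\frac{p_{t|t+1}(x_t\mid x_{t+1})}{q_{t|t+1}(x_t\mid x_{t+1})}\,\frac{V_t(x_t)}{V_u(x_u)}.
\]
We apply Lemma~\ref{IS} with $x = x_t$ and $\tilde x = (x_u, x_{t+1})$, where
\[
\rho(x_t,\tilde x)=\pi_u^*(x_u)\,p_{t+1|u}(x_{t+1}\mid x_u)\,q_{t|t+1}(x_t\mid x_{t+1}).
\]
The target marginal is then proportional to
\[
\sum_{x_u,x_{t+1}} \pi_u^*(x_u)\,\frac{V_t(x_t)}{V_u(x_u)}\,p_{t+1|u}(x_{t+1}\mid x_u)\,p_{t|t+1}(x_t\mid x_{t+1}).
\]
The $q$ factors cancel. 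Substituting $\pi_u^* = V_u p_u/Z$ cancels $V_u$, and the Chapman–Kolmogorov relation for the reference chain gives $\sum_{x_u} p_u(x_u)\,p_{t|u}(x_t\mid x_u) = p_t(x_t)$. The target is therefore proportional to $V_t\,p_t$, which equals $\pi_t^*$ by Theorem~\ref{optimalpolicy}. After resampling the weights are again unit, and this closes the induction.

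\textbf{Main obstacle.} The delicate point is the independence required by Lemma~\ref{IS}: the pairs $(x_t^{(i)},\tilde x^{(i)})$ must be i.i.d.\ draws from $\rho$. This holds because the induction hypothesis gives i.i.d.\ particles at time $u$ (in the $N\to\infty$ sense of Lemma~\ref{IS}), and all subsequent propagations are conditionally independent across particles. A second point to handle is that the weight depends on the ancestor value $V_u(x_u)$, not only on the current state $x_{t+1}$. This is why $\tilde x$ must include $x_u$, and the statement must be read as concerning the marginal in $x_t$ after summing out $\tilde x$. Finally, replacing $\hat V$ by $V$ inside the ratios needs $V$ to be bounded away from zero. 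This follows from $R>0$ on the finite, or at least discrete, support with $R\le R_{\max}$, so the continuous-mapping theorem applies as $J\to\infty$ before taking $N\to\infty$.
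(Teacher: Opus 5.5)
Your proposal is correct and follows the paper's proof essentially step for step. You induct over the guided times with the base case at $T$ from Lemma~\ref{IS}, and in the inductive step you apply Lemma~\ref{IS} to the augmented triple $(x_t, x_{t+1}, x_{\lceil t+1\rceil})$, so that the $q_{t|t+1}$ factors and $V_{\lceil t+1\rceil}$ cancel and leave a density proportional to $p_t V_t$, i.e.\ $\pi_t^*$; your extra remarks on unit weights after resampling, on taking $J\to\infty$ before $N\to\infty$, and on positivity of $V$ in the ratio (where pointwise positivity from $R=\exp(r/\alpha)>0$ already suffices) only make explicit what the paper leaves implicit.
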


\begin{proof}
    Since $J\rightarrow \infty$, the Monte Carlo value estimate $\hat{V}_t^{(i)}$ converges almost surely to the true value function $V_t(x_t^{(i)})$ by the strong law of large numbers.

    Base Case: At $t=T$, the particles are sampled from $q_T$ and weighted by $w_T \propto p_T(x_T)V_T(x_T)/q_T(x_T)$. By Lemma \ref{IS}, after resampling at Line \ref{Tend}, we have $x_T^{(i)}\overset{\text{iid}}{\sim} \pi_T^*(x_T) \propto p_T(x_T)V_T(x_T)$.

    Inductive Step: Assume that $x_{\lceil t+1 \rceil}^{(i)} \overset{\text{iid}}{\sim} \pi_{\lceil t+1 \rceil}^*(x_{\lceil t+1 \rceil})$.

    Then, just before the line \ref{tend},
    \begin{equation}
        (x_t^{(i)}, x_{t+1}^{(i)}, x_{\lceil t+1 \rceil}^{(i)}) \overset{\text{iid}}{\sim} q_{t | t+1}(x_t | x_{t+1})\cdot p_{t+1 | \lceil t+1 \rceil}(x_{t+1} | x_{\lceil t+1 \rceil})\cdot \frac{p_{\lceil t+1 \rceil}(x_{\lceil t+1 \rceil})V_{\lceil t+1 \rceil}(x_{\lceil t+1 \rceil})}{Z}.
    \end{equation}
    Just after the line \ref{tend},
    \begin{align}
        (x_t^{(i)}, x_{t+1}^{(i)}, x_{\lceil t+1 \rceil}^{(i)}) &\overset{\text{iid}}{\sim} q_{t | t+1}(x_t | x_{t+1}) p_{t+1 | \lceil t+1 \rceil}(x_{t+1} | x_{\lceil t+1 \rceil}) \frac{p_{\lceil t+1 \rceil}(x_{\lceil t+1 \rceil})V_{\lceil t+1 \rceil}(x_{\lceil t+1 \rceil})}{Z} \nonumber\\ 
        &\times \frac{p_{t|t+1}(x_t|x_{t+1})}{q_{t| t+1}(x_t | x_{t+1})}\frac{V_t(x_t)}{V_{\lceil t+1 \rceil}(x_{\lceil t+1 \rceil})} \\
        &= \frac{1}{Z}p_{t, t+1, \lceil t+1 \rceil}(x_t, x_{t+1}, x_{\lceil t+1 \rceil}) V_t(x_t).
    \end{align}
    By Lemma \ref{IS}, we may conclude that
    \begin{equation}
        x_t^{(i)} \overset{\text{iid}}{\sim} \pi_t^*(x_t).
    \end{equation}

    The proof follows by induction for all $t\in \mathcal{G}$.
\end{proof}

\subsection{Probability path distance}\label{TDsec}

We first record a simple inequality.
\begin{lemma}\label{simpineq}
Random variable $X \in \mathbb{R}$ satisfies $0 \leq X \leq M$ and $\mu = \mathbb{E}[X]$. Then, 
\begin{equation}
    \mathbb{E}[|X-\mu|] \leq 2\mu\left(1-\frac{\mu}{M}\right).
\end{equation}
\end{lemma}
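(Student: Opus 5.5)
The plan is to prove the inequality by splitting the absolute deviation according to which side of the mean $X$ lies on. The key identity is $\mathbb{E}[X-\mu]=0$. Writing $(y)_+=\max(y,0)$, this gives
\begin{equation}
\mathbb{E}[|X-\mu|] = 2\,\mathbb{E}[(X-\mu)_+] = 2\,\mathbb{E}[(\mu-X)_+].
\end{equation}
So it suffices to show $\mathbb{E}[(X-\mu)_+] \le \mu(1-\mu/M)$.

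The main step is a pointwise linear (chord) bound that exploits the support $[0,M]$. The function $x\mapsto (x-\mu)_+$ is convex on $[0,M]$. It vanishes at $x=0$, since $\mu\ge 0$, and equals $M-\mu$ at $x=M$. By convexity it therefore lies below the chord joining these two endpoints:
\begin{equation}
(x-\mu)_+ \le \frac{x}{M}(M-\mu), \qquad x\in[0,M].
\end{equation}
Taking expectations gives $\mathbb{E}[(X-\mu)_+] \le \frac{\mu}{M}(M-\mu) = \mu\left(1-\frac{\mu}{M}\right)$. Multiplying by $2$ yields the claim.

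As a sanity check I would also run the symmetric argument on $(\mu-x)_+$. This function is convex, equals $\mu$ at $0$, and vanishes at $M$, so its chord is $\mu(M-x)/M$. Taking expectations gives the same bound $\mu(M-\mu)/M$, confirming consistency.

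There is no serious obstacle. The only care needed is the degenerate case $M=0$, which forces $X=\mu=0$; if needed I would assume $M>0$. Convexity and the endpoint values make the chord inequality immediate. Equality holds when $X$ is supported on $\{0,M\}$, which shows the bound is tight and explains the constant $2$.

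For context, in Theorem~\ref{UTDthm} this lemma should be applied with $X=V_{\ceilG{t}}(x_{\ceilG{t}})/Z$-type ratios, or with $X=V_t\le \Rmax$ under $p_{t|\ceilG{t}}$. Bounding $\mu\le \Rmax$ in the prefactor then produces the $1-\frac{1}{\Rmax}\mathbb{E}[V]$ terms.
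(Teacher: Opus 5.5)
Your proof is correct and is essentially the paper's argument: the paper uses the same mean-zero identity to write $\mathbb{E}[|X-\mu|]=2\,\mathbb{E}[(\mu-X)\mathbf{1}(X\le\mu)]$ and then applies exactly the chord bound $(\mu-X)\mathbf{1}(X\le\mu)\le\mu(1-X/M)$ from your symmetric check, which is algebraically identical to your positive-part chord since $(x-\mu)_+=(\mu-x)_++x-\mu$. One correction to your closing remark: in Theorem~\ref{UTDthm} the lemma is applied with $X=V_t(x_t)$, $\mu=V_{\ceilG{t}}(x_{\ceilG{t}})$, $M=\Rmax$. There the prefactor $\mu$ is not bounded by $\Rmax$; it is absorbed through $p_{\ceilG{t}}V_{\ceilG{t}}/Z=\pi^*_{\ceilG{t}}$, which is what yields $2\bigl(1-\frac{1}{\Rmax}\mathbb{E}_{\pi^*_{\ceilG{t}}}[V_{\ceilG{t}}]\bigr)$.
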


\begin{proof}
Since
\begin{align*}
    0 = \mathbb{E}[X-\mu] = \mathbb{E}[(X-\mu)\mathbf{1}(X>\mu) + (X - \mu)\mathbf{1}(X\leq \mu)],
\end{align*}
we get the following identity:
\begin{equation}
    \mathbb{E}[(X-\mu)\mathbf{1}(X>\mu)] = \mathbb{E}[(\mu-X)\mathbf{1}(X \leq \mu)].
\end{equation}

\noindent
\begin{minipage}{0.55\textwidth}
    Thus,
    \begin{align*}
        \mathbb{E}[|X-\mu|] &= \mathbb{E}[(X-\mu)\mathbf{1}(X>\mu)] \\ &+ \mathbb{E}[(\mu-X)\mathbf{1}(X \leq \mu)] \\
        &= 2\mathbb{E}[(\mu-X)\mathbf{1}(X \leq \mu)] \\
        &\leq 2\mathbb{E}[\mu(1-X/M)] = 2\mu(1-\mu/M).
    \end{align*}
    Applying the expectation to both sides:
    \begin{align*}
        \mathbb{E}[|X-\mu|] &\leq 2\mathbb{E}\left[\mu\left(1-\frac{X}{M}\right)\right] \\
        &= 2\mu\left(1-\frac{\mathbb{E}[X]}{M}\right) = 2\mu\left(1-\frac{\mu}{M}\right).
    \end{align*}
\end{minipage}
\hfill
\begin{minipage}{0.4\textwidth}
    \centering
    \includegraphics[width=\linewidth]{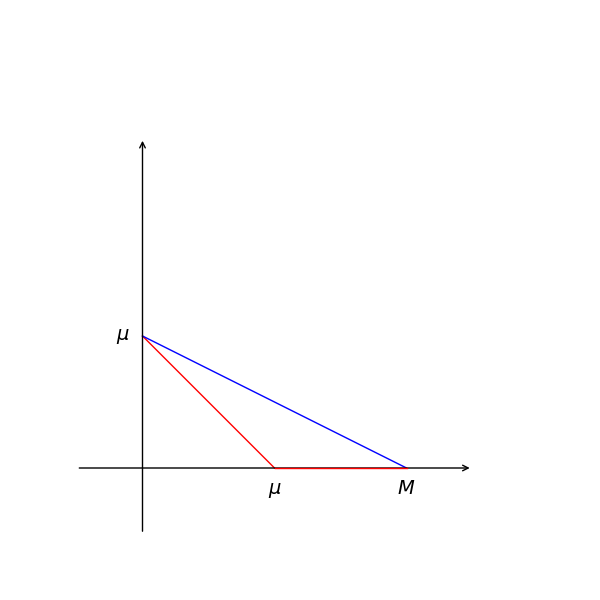}
    \captionof{figure}{Illustration of $(\mu-X)\mathbf{1}(X \leq \mu) \leq \mu(1-\frac{X}{M})$}
    \label{fig:geo_proof}
\end{minipage}
\end{proof}

\begin{proof}[Proof of Theorem~\ref{UTDthm}]
    If $t\in \mathcal{G}$, then $\pi_t^* = \pi_t^{\mathcal{G}}$, so
    \begin{equation}
        \mathbf{TD}(\mathcal{G}) = \frac{1}{T}\sum_{t\in \mathcal{T}-\mathcal{G}} \|\pi_t^* - \pi_t^{\mathcal{G}}\|_1.
    \end{equation}

    By Lemma \ref{bellman},
    \begin{equation}
        V_{\ceilG{t}}(x_{\ceilG{t}}) = \mathbb{E}_{x_t \sim p_{t | \lceil t\rceil}(\cdot | x_{\ceilG{t}})}[V_t(x_t)].
    \end{equation}

    Thus,
    \begin{align}
        \|\pi_t^* - \pi_t^{\mathcal{G}}\|_1 &= \sum_{x_t \in \mathcal{X}} |\pi_t^*(x_t) - \sum_{x_{\ceilG{t}} \in \mathcal{X}} \pi_{\ceilG{t}}^*(x_{\ceilG{t}})p_{t|\ceilG{t}}(x_t | x_{\ceilG{t}})| \\
        &= \frac{1}{Z}\sum_{x_t \in \mathcal{X}}|\sum_{x_{\ceilG{t}} \in \mathcal{X}} p_{\ceilG{t}}(x_{\ceilG{t}})p_{t|\ceilG{t}}(x_t | x_{\ceilG{t}})(V_t(x_t)-V_{\ceilG{t}}(x_{\ceilG{t}}))| \\
        &\leq \frac{1}{Z}\mathbb{E}_{x_{\ceilG{t}}\sim p_{\ceilG{t}}, x_t\sim p_{t|\ceilG{t}}(\cdot | x_{\lceil t \rceil})} |V_t(x_t)-V_{\ceilG{t}}(x_{\ceilG{t}})|. \label{mediate}
    \end{align}

    Applying Lemma \ref{simpineq} with $X = V_t(x_t)$ and $\mu=V_{\ceilG{t}}(x_{\ceilG{t}})$ gives
    \begin{equation}
        \mathbb{E}_{x_t \sim p_{t|\ceilG{t}}(\cdot | x_{\ceilG{t}})}[|V_t(x_t) - V_{\ceilG{t}}(x_{\ceilG{t}})|] \leq 2 V_{\ceilG{t}}(x_{\ceilG{t}})(1-\frac{V_{\ceilG{t}}(x_{\ceilG{t}})}{R_{\text{max}}}).
    \end{equation}

    Therefore,
    \begin{align}
        \|\pi_t^* - \pi_t^{\mathcal{G}}\|_1 &\leq \frac{2}{Z}\mathbb{E}_{x_{\ceilG{t}}\sim p_{\ceilG{t}}}[V_{\ceilG{t}}(x_{\ceilG{t}})(1-\frac{V_{\ceilG{t}}(x_{\ceilG{t}})}{R_{\text{max}}})] \\
        &= 2(1-\frac{1}{R_{\text{max}}}\mathbb{E}_{x_{\ceilG{t}}\sim \pi^*_{\ceilG{t}}}[V_{\ceilG{t}}(x_{\ceilG{t}})]).
    \end{align}

\end{proof}

\subsection{DP algorithm}\label{DP}

We aim to find $\mathcal{G}$ that maximizes 
\begin{equation}
    \hat{\mathbf{S}}(\mathcal{G}) := \sum_{t\in \mathcal{T}\setminus\mathcal{G}} \hat{V}_{\ceilG{t}}
\end{equation}
where $T' = |\mathcal{G}|$ is a given constant.

Define
\begin{equation}
    dp[s][i] := \max_{\mathcal{G}'} \sum_{t \in \mathcal{T}\setminus\mathcal{G}'; t> s} \hat{V}_{\ceilG{t}}
\end{equation}
where
\begin{equation}
    \mathcal{G}' = \{s = t_i< \cdots <t_1<T\}.
\end{equation}

We can observe the recurrence relation
\begin{align}
    dp[s][1] &= (T-s-1)\hat{V}_T \\
    dp[s][i] &= \max_{s < s' \leq T-i+1} [dp[s'][i-1] + (s'-s-1)\hat{V}_{s'}]. \quad (i\geq 2)
\end{align}

Thus, Algorithm \ref{alg:dp} can compute the optimal $\mathcal{G}$.

\begin{algorithm}
\caption{DP algorithm} \label{alg:dp}
\begin{algorithmic}[1]
    \State \text{Initialize: } $dp[s][i] \gets 0$ for all $s, i$
    \For{$s\gets T'-1, T', \cdots, T-1$}
        \State $dp[s][1] \gets (T-s-1)\hat{V}_T$
    \EndFor
    \For{$i\gets 2, 3, \cdots, T'$}
        \For{$s \gets T'-i, \cdots, T-i$}
            \For{$s' \gets s+1$ to $T-i+1$}
                \State $c \gets dp[s'][i-1] + (s'-s-1)\hat{V}_{s'}$
                \If{$c > dp[s][i]$}
                    \State $dp[s][i] \gets c$
                    \State $parent[s][i] \gets s'$
                \EndIf
            \EndFor
        \EndFor
    \EndFor
    \State $t_{T'} \gets \text{argmax}_s (dp[s][T'] + s\hat{V}_s)$
    \For{$j\gets T'-1, T'-2, \cdots, 1$}
        \State $t_j \gets parent[t_{j+1}][j+1]$
    \EndFor
    \State \Return $\mathcal{G} = \{t_1, t_2, \cdots, t_{T'}\}$
\end{algorithmic}
\end{algorithm}

    
\section{Experimental Details}\label{Expdetail}

\subsection{Experimental Setup}

The Protein Fitness experiment was conducted on a single PRO 6000 GPU,
while all other experiments were conducted on a single NVIDIA A6000 GPU.
Unlike SMC-based evaluation protocols that generate $N$ candidates but retain
only the highest-reward sample, we retain all generated samples to evaluate
both alignment and sample diversity.

For Binary MNIST, we use the masked diffusion model (MDM) from
\citep{sahoo2024simple} and follow the implementation of
\citep{ou2025inference}.
For Toxic Text Generation, we use the pretrained MDLM from
\citep{sahoo2024simple} together with the reward model from
\citep{logacheva2022paradetox}, following the implementation of
\citep{ou2025inference}.
For CIFAR-10, we use the MDM and implementation described in the corresponding
experimental setup.
For DNA generation, we follow the experimental setup of
\citep{ou2025inference}.
For Protein Fitness, we use the pretrained MDLM from
\citep{sahoo2024simple} together with a fitness oracle as the reward model.
The resampling configurations for each experiment are summarized in
Table~\ref{tab:resampling}.

\begin{table}[]
    \centering
    \caption{
    Resampling configurations for each experiment.
    Adaptive resampling is triggered when the normalized Effective Sample Size
    (ESS$/N$) falls below the specified threshold.
    Partial resampling denotes the number of particles replaced at each
    resampling step.
    }
    \begin{tabular}{lccc}
    \toprule
    Experiment & ESS threshold & Partial resampling & Systematic resampling \\
    \midrule
    Binary MNIST     & 0.95 & $\lfloor N/2 \rfloor$ & Yes \\
    Toxic Text       & 0.98 & $N$                    & Yes \\
    CIFAR-10          & 0.95 & $\lfloor N/2 \rfloor$ & Yes \\
    DNA Design       & 0.98 & $\lfloor N/2 \rfloor$ & Yes \\
    Protein Fitness  & 0.98 & $\lfloor N/2 \rfloor$ & Yes \\
    Continuous MNIST & 0.98 & $\lfloor N/2 \rfloor$ & Yes \\
    \bottomrule
    \end{tabular}
    \label{tab:resampling}
\end{table}

\subsection{Details on main experiment}\label{mainexperimentappendix}

\subsubsection{Details on Binary MNIST experiment}\label{binarymnistappendix}

The warmup phase took 5.78s for SMC-base, and 24.7s for SMC-grad. See Figure \ref{fig:mnistbasegraph}, \ref{fig:mnistbasebudget}, \ref{fig:mnistgradbudget} and Table \ref{tab:binarymnistbase}, \ref{tab:binarymnistgrad}.

\begin{figure}[htbp]
  \centering
  \begin{minipage}[t]{0.32\textwidth}
    \centering
    \vspace{0pt} 
    \includegraphics[trim={0 0 0 0}, clip, width=\textwidth]{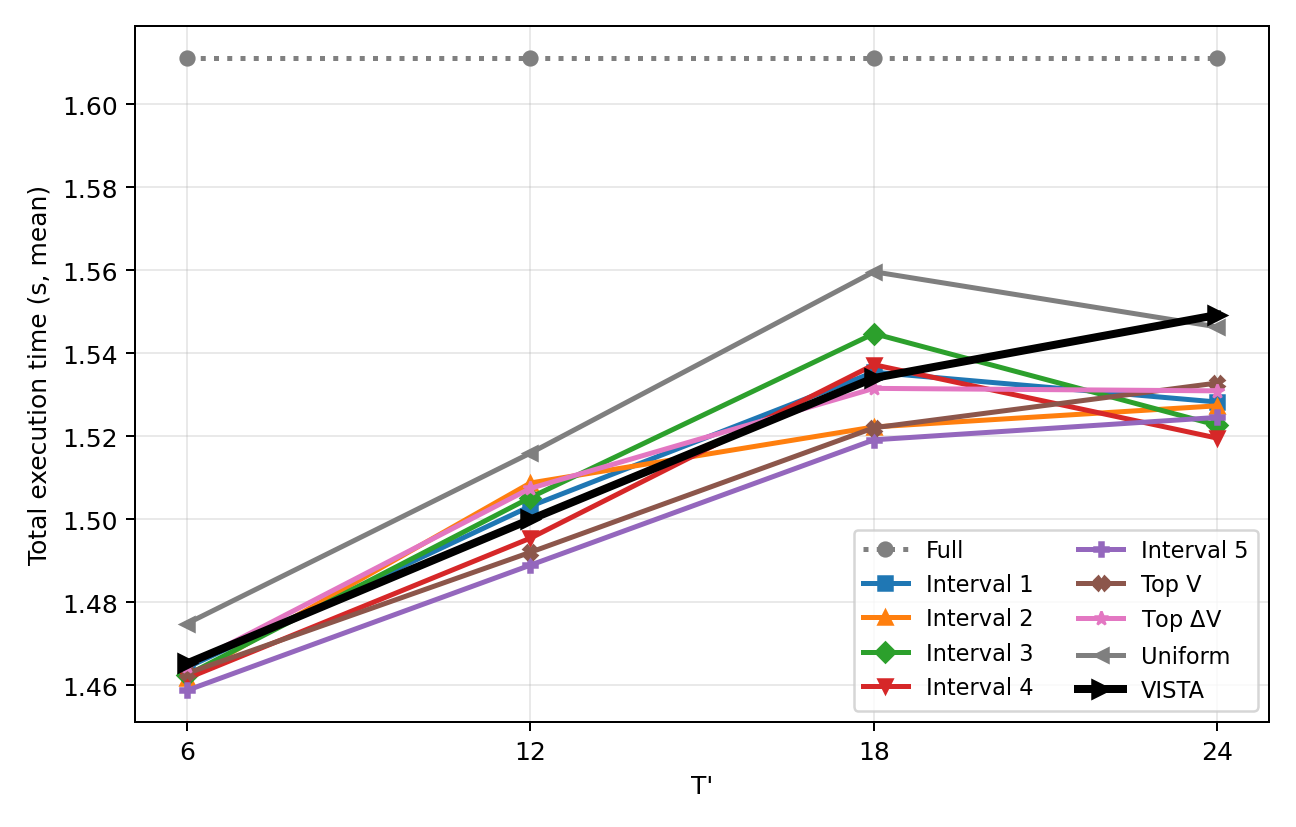}
    \label{fig:mnisttime}
  \end{minipage}
  \hfill 
  \begin{minipage}[t]{0.32\textwidth}
    \centering
    \vspace{0pt} 
    \includegraphics[trim={0 0 0 0}, clip, width=\textwidth]{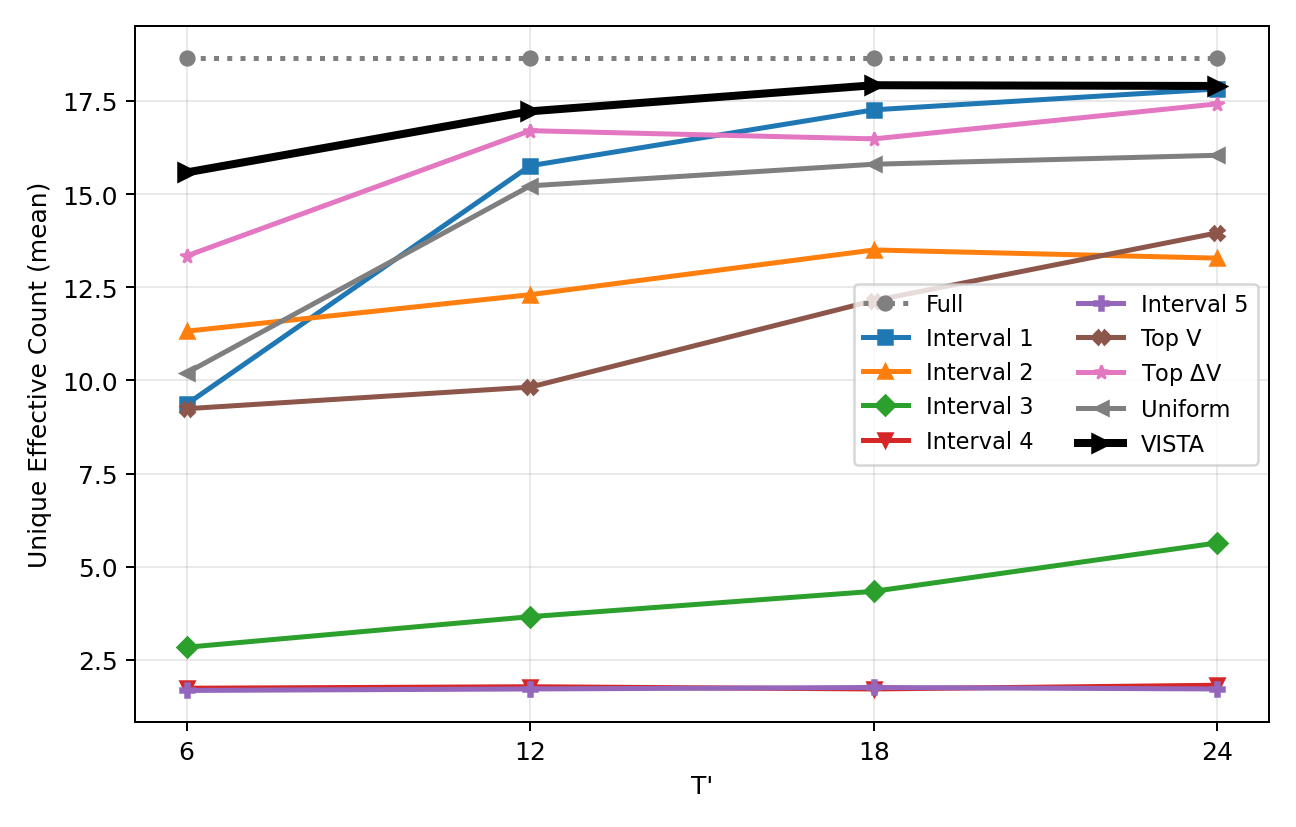}
    \label{fig:mnistuval}
  \end{minipage}
  \hfill 
  \begin{minipage}[t]{0.32\textwidth}
    \centering
    \vspace{0pt} 
    \includegraphics[trim={0 0 0 0}, clip, width=\textwidth]{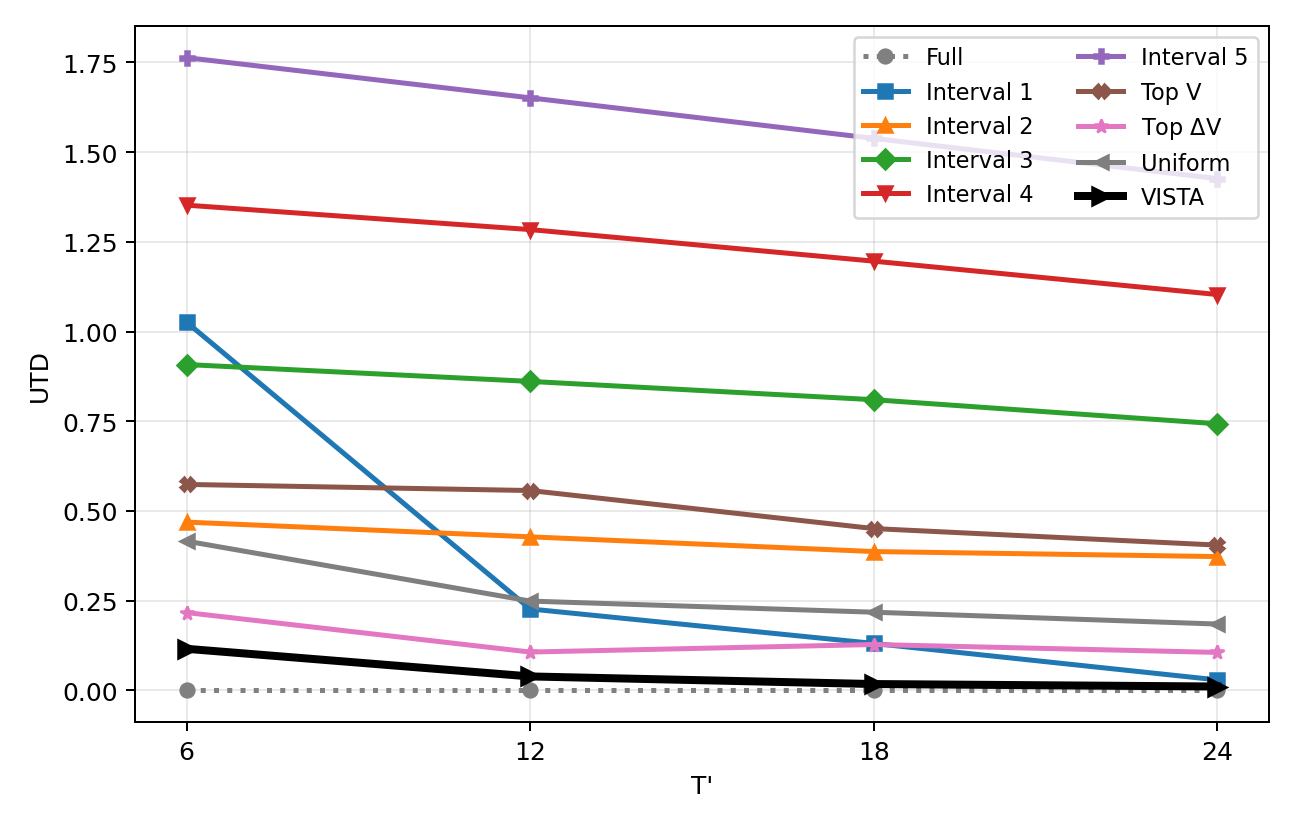}
    \label{fig:mnistutd}
  \end{minipage}
  \caption{Binary MNIST (SMC-base). 
    Execution time, unique valid samples, and UTD as the guidance budget $T'$ varies.}
  \label{fig:mnistbasegraph}
\end{figure}

\begin{table*}[t]
\centering
\begin{minipage}[t]{0.485\textwidth}
\centering
\small
\caption{Binary MNIST (SMC-base): execution time, unique valid, and UTD.}
\label{tab:binarymnistbase}
\resizebox{\linewidth}{!}{
\setlength{\tabcolsep}{4.5pt}
\renewcommand{\arraystretch}{0.80}
\begin{tabular}{llccc}
\toprule
\textbf{$T'$} & \textbf{Policy} & {\textbf{Time [s]} $\downarrow$} & {\textbf{Unique Valid} $\uparrow$} & {\textbf{UTD} $\downarrow$} \\
\midrule
\multirow{1}{*}{100}  & Full  & $1.61 \pm 0.01$  & $\mathbf{18.66 \pm 1.78}$  & {--} \\
\midrule
\multirow{9}{*}{6}  & Interval 1  & $1.46 \pm 0.02$  & $9.36 \pm 3.53$  & $1.026$ \\
  & Interval 2  & $1.46 \pm 0.02$  & $11.32 \pm 6.19$  & $0.469$ \\
  & Interval 3  & $1.46 \pm 0.02$  & $2.84 \pm 2.61$  & $0.908$ \\
  & Interval 4  & $1.46 \pm 0.02$  & $1.74 \pm 1.05$  & $1.352$ \\
  & Interval 5  & $1.46 \pm 0.02$  & $1.68 \pm 1.04$  & $1.763$ \\
  & Top-$V$  & $1.46 \pm 0.02$  & $9.24 \pm 5.84$  & $0.574$ \\
  & Top-$\Delta V$  & $1.46 \pm 0.02$  & $13.34 \pm 4.12$  & $0.217$ \\
  & Uniform  & $1.47 \pm 0.09$  & $10.20 \pm 3.12$  & $0.416$ \\
  & VISTA  & $1.47 \pm 0.02$  & $\mathbf{15.58 \pm 2.48}$  & $\mathbf{0.116}$ \\
\midrule
\multirow{9}{*}{12}  & Interval 1  & $1.50 \pm 0.03$  & $15.76 \pm 3.54$  & $0.227$ \\
  & Interval 2  & $1.51 \pm 0.11$  & $12.30 \pm 6.21$  & $0.428$ \\
  & Interval 3  & $1.51 \pm 0.10$  & $3.66 \pm 3.47$  & $0.861$ \\
  & Interval 4  & $1.50 \pm 0.04$  & $1.78 \pm 1.09$  & $1.284$ \\
  & Interval 5  & $1.49 \pm 0.02$  & $1.72 \pm 1.05$  & $1.651$ \\
  & Top-$V$  & $1.49 \pm 0.02$  & $9.82 \pm 5.94$  & $0.557$ \\
  & Top-$\Delta V$  & $1.51 \pm 0.07$  & $16.70 \pm 2.90$  & $0.107$ \\
  & Uniform  & $1.52 \pm 0.12$  & $15.22 \pm 3.97$  & $0.249$ \\
  & VISTA  & $1.50 \pm 0.02$  & $\mathbf{17.22 \pm 3.45}$  & $\mathbf{3.91 \times 10^{-2}}$ \\
\midrule
\multirow{9}{*}{18}  & Interval 1  & $1.54 \pm 0.02$  & $17.26 \pm 3.33$  & $0.131$ \\
  & Interval 2  & $1.52 \pm 0.02$  & $13.50 \pm 5.95$  & $0.387$ \\
  & Interval 3  & $1.54 \pm 0.11$  & $4.34 \pm 4.03$  & $0.810$ \\
  & Interval 4  & $1.54 \pm 0.08$  & $1.72 \pm 1.09$  & $1.196$ \\
  & Interval 5  & $1.52 \pm 0.02$  & $1.76 \pm 1.08$  & $1.538$ \\
  & Top-$V$  & $1.52 \pm 0.02$  & $12.14 \pm 6.14$  & $0.451$ \\
  & Top-$\Delta V$  & $1.53 \pm 0.03$  & $16.48 \pm 4.08$  & $0.128$ \\
  & Uniform  & $1.56 \pm 0.16$  & $15.80 \pm 3.60$  & $0.218$ \\
  & VISTA  & $1.53 \pm 0.04$  & $\mathbf{17.92 \pm 2.47}$  & $\mathbf{1.79 \times 10^{-2}}$ \\
\midrule
\multirow{9}{*}{24}  & Interval 1  & $1.53 \pm 0.02$  & $17.82 \pm 2.60$  & $2.93 \times 10^{-2}$ \\
  & Interval 2  & $1.53 \pm 0.03$  & $13.28 \pm 5.82$  & $0.373$ \\
  & Interval 3  & $1.52 \pm 0.02$  & $5.64 \pm 4.35$  & $0.743$ \\
  & Interval 4  & $1.52 \pm 0.02$  & $1.82 \pm 1.16$  & $1.103$ \\
  & Interval 5  & $1.52 \pm 0.03$  & $1.72 \pm 1.05$  & $1.426$ \\
  & Top-$V$  & $1.53 \pm 0.10$  & $13.96 \pm 5.61$  & $0.405$ \\
  & Top-$\Delta V$  & $1.53 \pm 0.02$  & $17.42 \pm 3.00$  & $0.106$ \\
  & Uniform  & $1.55 \pm 0.15$  & $16.04 \pm 3.69$  & $0.185$ \\
  & VISTA  & $1.55 \pm 0.10$  & $\mathbf{17.90 \pm 3.05}$  & $\mathbf{1.10 \times 10^{-2}}$ \\
\bottomrule
\end{tabular}
}
\end{minipage}
\hfill
\begin{minipage}[t]{0.485\textwidth}
\centering
\small
\caption{Binary MNIST (SMC-grad): execution time, unique valid, and UTD.}
\label{tab:binarymnistgrad}
\resizebox{\linewidth}{!}{
\setlength{\tabcolsep}{4.5pt}
\renewcommand{\arraystretch}{0.80}
\begin{tabular}{llccc}
\toprule
\textbf{$T'$} & \textbf{Policy} & {\textbf{Time [s]} $\downarrow$} & {\textbf{Unique Valid} $\uparrow$} & {\textbf{UTD} $\downarrow$} \\
\midrule
\multirow{1}{*}{100}  & Full  & $7.13 \pm 0.28$  & $\mathbf{18.78 \pm 1.73}$  & {--} \\
\midrule
\multirow{9}{*}{6}  & Interval 1  & $2.12 \pm 0.48$  & $10.14 \pm 3.93$  & $0.795$ \\
  & Interval 2  & $2.04 \pm 0.36$  & $11.68 \pm 6.11$  & $0.461$ \\
  & Interval 3  & $2.07 \pm 0.38$  & $2.96 \pm 2.57$  & $0.884$ \\
  & Interval 4  & $2.00 \pm 0.33$  & $1.74 \pm 1.03$  & $1.314$ \\
  & Interval 5  & $1.96 \pm 0.30$  & $1.70 \pm 1.07$  & $1.763$ \\
  & Top-$V$  & $2.09 \pm 0.44$  & $5.30 \pm 4.12$  & $0.714$ \\
  & Top-$\Delta V$  & $2.06 \pm 0.44$  & $14.14 \pm 3.71$  & $0.114$ \\
  & Uniform  & $2.03 \pm 0.42$  & $10.06 \pm 3.75$  & $0.355$ \\
  & VISTA  & $2.10 \pm 0.47$  & $\mathbf{15.72 \pm 2.85}$  & $\mathbf{6.69 \times 10^{-2}}$ \\
\midrule
\multirow{9}{*}{12}  & Interval 1  & $2.78 \pm 0.49$  & $16.22 \pm 2.97$  & $5.70 \times 10^{-2}$ \\
  & Interval 2  & $2.78 \pm 0.48$  & $13.10 \pm 6.36$  & $0.418$ \\
  & Interval 3  & $2.78 \pm 0.48$  & $3.34 \pm 2.94$  & $0.827$ \\
  & Interval 4  & $2.79 \pm 0.47$  & $1.78 \pm 1.07$  & $1.239$ \\
  & Interval 5  & $2.74 \pm 0.48$  & $1.70 \pm 1.02$  & $1.651$ \\
  & Top-$V$  & $2.71 \pm 0.52$  & $5.18 \pm 4.41$  & $0.714$ \\
  & Top-$\Delta V$  & $2.70 \pm 0.51$  & $16.54 \pm 2.84$  & $3.19 \times 10^{-2}$ \\
  & Uniform  & $2.72 \pm 0.53$  & $15.40 \pm 4.45$  & $0.175$ \\
  & VISTA  & $2.68 \pm 0.52$  & $\mathbf{16.82 \pm 3.33}$  & $\mathbf{9.72 \times 10^{-3}}$ \\
\midrule
\multirow{9}{*}{18}  & Interval 1  & $3.37 \pm 0.15$  & $17.68 \pm 2.79$  & $1.26 \times 10^{-2}$ \\
  & Interval 2  & $3.39 \pm 0.13$  & $14.04 \pm 5.99$  & $0.379$ \\
  & Interval 3  & $3.36 \pm 0.21$  & $4.28 \pm 3.61$  & $0.770$ \\
  & Interval 4  & $3.35 \pm 0.23$  & $1.72 \pm 1.03$  & $1.164$ \\
  & Interval 5  & $3.33 \pm 0.27$  & $1.70 \pm 1.02$  & $1.538$ \\
  & Top-$V$  & $3.37 \pm 0.26$  & $5.48 \pm 4.55$  & $0.714$ \\
  & Top-$\Delta V$  & $3.31 \pm 0.31$  & $\mathbf{18.44 \pm 1.67}$  & $6.34 \times 10^{-3}$ \\
  & Uniform  & $3.35 \pm 0.42$  & $17.44 \pm 2.65$  & $0.126$ \\
  & VISTA  & $3.35 \pm 0.24$  & $18.32 \pm 1.96$  & $\mathbf{2.25 \times 10^{-3}}$ \\
\midrule
\multirow{9}{*}{24}  & Interval 1  & $3.23 \pm 0.31$  & $18.44 \pm 2.04$  & $1.03 \times 10^{-2}$ \\
  & Interval 2  & $3.28 \pm 0.27$  & $15.02 \pm 5.06$  & $0.358$ \\
  & Interval 3  & $3.27 \pm 0.28$  & $5.56 \pm 4.47$  & $0.714$ \\
  & Interval 4  & $3.25 \pm 0.30$  & $1.82 \pm 1.35$  & $1.070$ \\
  & Interval 5  & $3.19 \pm 0.30$  & $1.70 \pm 1.02$  & $1.426$ \\
  & Top-$V$  & $3.30 \pm 0.30$  & $5.50 \pm 4.22$  & $0.714$ \\
  & Top-$\Delta V$  & $3.24 \pm 0.33$  & $18.42 \pm 1.69$  & $7.43 \times 10^{-3}$ \\
  & Uniform  & $3.24 \pm 0.40$  & $17.32 \pm 2.39$  & $0.109$ \\
  & VISTA  & $3.23 \pm 0.30$  & $\mathbf{18.54 \pm 2.15}$  & $\mathbf{1.43 \times 10^{-3}}$ \\
\bottomrule
\end{tabular}
}
\end{minipage}
\end{table*}

\begin{figure}[htbp]
    \centering
    \begin{subfigure}[b]{0.48\textwidth}
        \centering
        \includegraphics[width=\linewidth]{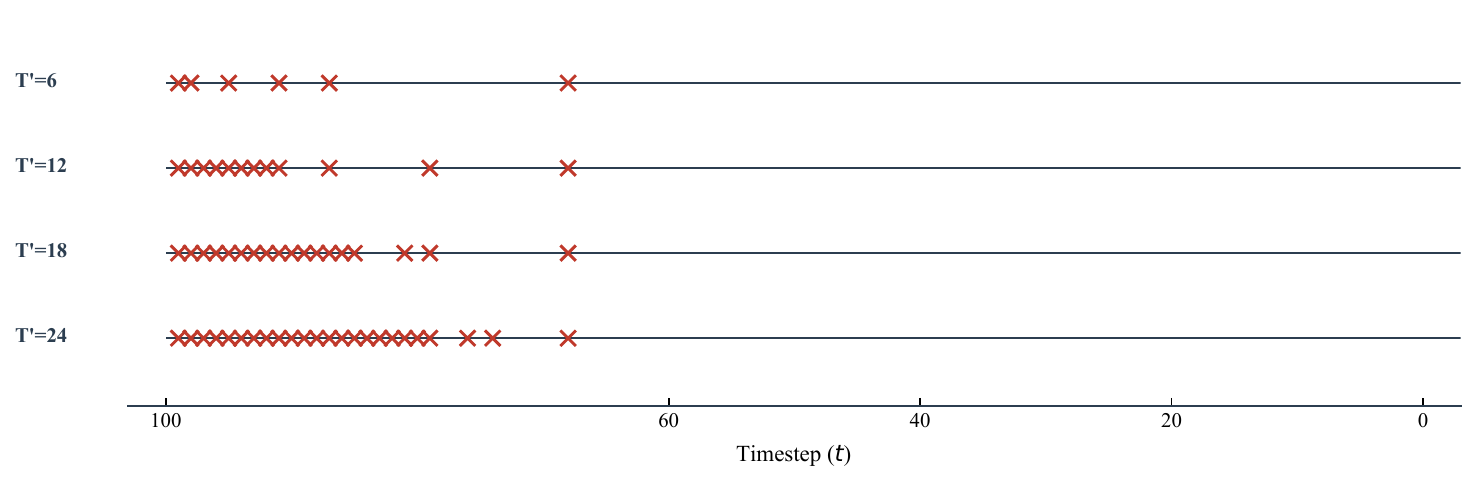}
        \caption{SMC-base}
        \label{fig:mnistbasebudget}
    \end{subfigure}
    \hfill 
    \begin{subfigure}[b]{0.48\textwidth}
        \centering
        \includegraphics[width=\linewidth]{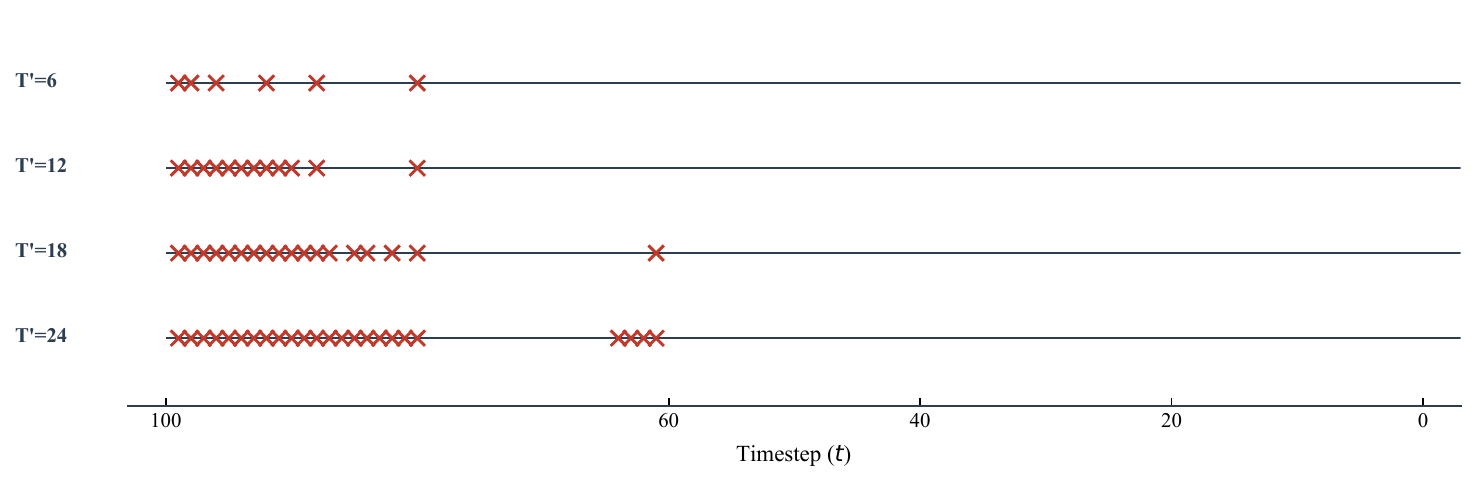}
        \caption{SMC-grad}
        \label{fig:mnistgradbudget}
    \end{subfigure}
    
    \caption{\method schedules $\mathcal{G}$ for varying $T'$ on Binary MNIST}
    \label{fig:mnist_budget_comparison}
\end{figure}

\subsubsection{Details on Toxic Text Generation}\label{toxictextappendix}

See Table \ref{tab:toxictextgradappendix} and Figure \ref{fig:toxictextbudget}.

\begin{table}[t]
\centering
\scriptsize
\caption{Toxic text generation (SMC-grad).}
\label{tab:toxictextgradappendix}
\resizebox{0.85\linewidth}{!}{ 
\setlength{\tabcolsep}{3pt} 
\renewcommand{\arraystretch}{0.95} 
\begin{tabular}{llccc}
\toprule
\textbf{$T'$} & \textbf{Policy} & {\textbf{Time [s]} $\downarrow$} & {\textbf{Unique Toxic Ratio} $\uparrow$} & {$\hat{S}(\mathcal{G})$ $\uparrow$} \\
\midrule
\multirow{1}{*}{100}  & Full  & $204.23 \pm 0.65$  & $\mathbf{1.00 \pm 0.02}$  & $\mathbf{0}$ \\
\midrule
\multirow{9}{*}{5}  & Interval 1  & $14.10 \pm 0.20$  & $0.74 \pm 0.34$  & $68.97$ \\
  & Interval 2  & $15.08 \pm 0.32$  & $0.73 \pm 0.28$  & $55.61$ \\
  & Interval 3  & $15.01 \pm 0.25$  & $0.72 \pm 0.35$  & $34.85$ \\
  & Interval 4  & $15.15 \pm 0.35$  & $0.69 \pm 0.42$  & $23.60$ \\
  & Interval 5  & $14.34 \pm 0.25$  & $0.05 \pm 0.17$  & $5.16$ \\
  & Top-$V$  & $16.01 \pm 0.27$  & $0.16 \pm 0.30$  & $10.20$ \\
  & Top-$\Delta V$  & $15.89 \pm 0.27$  & $0.80 \pm 0.33$  & $73.73$ \\
  & Uniform  & $16.97 \pm 0.27$  & $0.77 \pm 0.34$  & $69.26$ \\
  & VISTA  & $17.58 \pm 0.25$  & $\mathbf{0.81 \pm 0.35}$  & $\mathbf{78.73}$ \\
\midrule
\multirow{9}{*}{10}  & Interval 1  & $24.09 \pm 0.24$  & $0.78 \pm 0.30$  & $64.87$ \\
  & Interval 2  & $25.07 \pm 0.28$  & $0.78 \pm 0.31$  & $53.61$ \\
  & Interval 3  & $25.03 \pm 0.24$  & $0.74 \pm 0.28$  & $32.93$ \\
  & Interval 4  & $25.01 \pm 0.28$  & $0.73 \pm 0.37$  & $22.08$ \\
  & Interval 5  & $24.31 \pm 0.27$  & $0.35 \pm 0.41$  & $4.89$ \\
  & Top-$V$  & $25.09 \pm 0.28$  & $0.27 \pm 0.34$  & $5.73$ \\
  & Top-$\Delta V$  & $29.93 \pm 0.24$  & $0.90 \pm 0.21$  & $71.12$ \\
  & Uniform  & $30.94 \pm 0.21$  & $0.94 \pm 0.09$  & $70.02$ \\
  & VISTA  & $28.28 \pm 0.22$  & $\mathbf{0.95 \pm 0.11}$  & $\mathbf{74.92}$ \\
\midrule
\multirow{9}{*}{15}  & Interval 1  & $33.98 \pm 0.23$  & $0.82 \pm 0.20$  & $63.30$ \\
  & Interval 2  & $34.82 \pm 0.20$  & $0.87 \pm 0.18$  & $50.47$ \\
  & Interval 3  & $34.76 \pm 0.18$  & $0.89 \pm 0.18$  & $32.61$ \\
  & Interval 4  & $34.76 \pm 0.24$  & $0.78 \pm 0.34$  & $20.14$ \\
  & Interval 5  & $34.09 \pm 0.22$  & $0.58 \pm 0.42$  & $4.62$ \\
  & Top-$V$  & $35.02 \pm 0.27$  & $0.50 \pm 0.42$  & $5.43$ \\
  & Top-$\Delta V$  & $43.07 \pm 0.21$  & $0.96 \pm 0.11$  & $67.84$ \\
  & Uniform  & $44.91 \pm 0.19$  & $0.94 \pm 0.19$  & $64.96$ \\
  & VISTA  & $38.16 \pm 0.24$  & $\mathbf{0.99 \pm 0.03}$  & $\mathbf{70.86}$ \\
\bottomrule
\end{tabular}
}
\end{table}

\begin{figure}
    \centering
    \includegraphics[width=\linewidth]{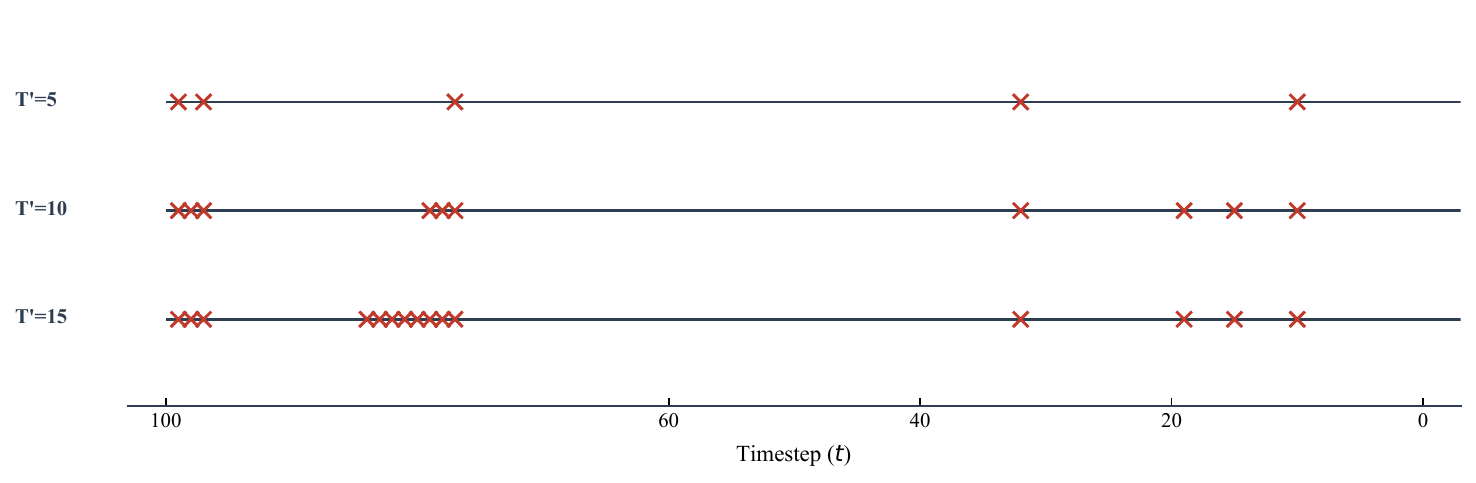}
    \caption{\method schedules $\mathcal{G}$ for varying $T'$ (Toxic Text, SMC-grad)}
    \label{fig:toxictextbudget}
\end{figure}

\subsubsection{Details on CIFAR-10 Image Generation}\label{CIFAR10appendix}

See Table \ref{tab:cifar10-time-success-score} and Figure \ref{fig:cifar10budget}.

\begin{table}[t]
\centering
\small
\caption{CIFAR-10 class-conditional generation: execution time, unique success count, and $\hat{S}(\mathcal{G})$.}
\label{tab:cifar10-time-success-score}
\resizebox{0.85\linewidth}{!}{
\setlength{\tabcolsep}{3pt}
\renewcommand{\arraystretch}{0.85}
\begin{tabular}{llccc}
\toprule
\textbf{$T'$} & \textbf{Policy} & {\textbf{Time [s]} $\downarrow$} & {\textbf{Unique Valid} $\uparrow$} & {$\hat{S}(\mathcal{G})$ $\uparrow$} \\
\midrule
\multirow{1}{*}{1000}  & Full  & $43.51 \pm 0.07$  & $\mathbf{2.18 \pm 1.02}$  & ${--}$ \\
\midrule
\multirow{9}{*}{5}  & Interval 1  & $20.03 \pm 0.04$  & $0.88 \pm 0.46$  & $36.77$ \\
  & Interval 2  & $19.99 \pm 0.03$  & $1.02 \pm 0.22$  & $329.40$ \\
  & Interval 3  & $19.98 \pm 0.01$  & $0.72 \pm 0.26$  & $276.90$ \\
  & Interval 4  & $19.98 \pm 0.02$  & $0.64 \pm 0.28$  & $155.30$ \\
  & Interval 5  & $19.98 \pm 0.02$  & $0.60 \pm 0.30$  & $21.21$ \\
  & Top-$V$  & $19.98 \pm 0.01$  & $0.74 \pm 0.32$  & $234.60$ \\
  & Top-$\Delta V$  & $19.98 \pm 0.01$  & $2.42 \pm 0.82$  & $395.30$ \\
  & Uniform  & $19.98 \pm 0.01$  & $1.26 \pm 0.34$  & $384.60$ \\
  & VISTA  & $19.98 \pm 0.01$  & $\mathbf{2.72 \pm 0.76}$  & $\mathbf{486.00}$ \\
\midrule
\multirow{9}{*}{10}  & Interval 1  & $20.17 \pm 0.03$  & $2.06 \pm 0.74$  & $67.63$ \\
  & Interval 2  & $20.09 \pm 0.02$  & $1.04 \pm 0.28$  & $328.90$ \\
  & Interval 3  & $20.09 \pm 0.02$  & $0.78 \pm 0.24$  & $278.40$ \\
  & Interval 4  & $20.09 \pm 0.03$  & $0.66 \pm 0.30$  & $154.40$ \\
  & Interval 5  & $20.09 \pm 0.02$  & $0.56 \pm 0.28$  & $21.10$ \\
  & Top-$V$  & $20.10 \pm 0.01$  & $0.76 \pm 0.38$  & $233.50$ \\
  & Top-$\Delta V$  & $20.10 \pm 0.01$  & $3.02 \pm 1.38$  & $441.30$ \\
  & Uniform  & $20.09 \pm 0.02$  & $2.02 \pm 0.46$  & $439.30$ \\
  & VISTA  & $20.09 \pm 0.02$  & $\mathbf{3.68 \pm 1.02}$  & $\mathbf{494.20}$ \\
\midrule
\multirow{9}{*}{15}  & Interval 1  & $20.29 \pm 0.02$  & $2.34 \pm 0.94$  & $102.70$ \\
  & Interval 2  & $20.23 \pm 0.02$  & $0.98 \pm 0.28$  & $329.50$ \\
  & Interval 3  & $20.22 \pm 0.01$  & $0.86 \pm 0.36$  & $278.10$ \\
  & Interval 4  & $20.22 \pm 0.03$  & $0.70 \pm 0.32$  & $153.00$ \\
  & Interval 5  & $20.22 \pm 0.03$  & $0.60 \pm 0.26$  & $20.99$ \\
  & Top-$V$  & $20.22 \pm 0.01$  & $0.80 \pm 0.36$  & $232.00$ \\
  & Top-$\Delta V$  & $20.21 \pm 0.01$  & $3.86 \pm 1.22$  & $446.10$ \\
  & Uniform  & $20.21 \pm 0.03$  & $3.08 \pm 0.64$  & $452.50$ \\
  & VISTA  & $20.21 \pm 0.02$  & $\mathbf{4.28 \pm 1.36}$  & $\mathbf{495.20}$ \\
\midrule
\multirow{9}{*}{20}  & Interval 1  & $20.41 \pm 0.03$  & $2.32 \pm 1.00$  & $134.50$ \\
  & Interval 2  & $20.34 \pm 0.02$  & $1.00 \pm 0.48$  & $336.80$ \\
  & Interval 3  & $20.34 \pm 0.03$  & $0.80 \pm 0.28$  & $277.50$ \\
  & Interval 4  & $20.34 \pm 0.03$  & $0.64 \pm 0.30$  & $152.40$ \\
  & Interval 5  & $20.34 \pm 0.02$  & $0.58 \pm 0.30$  & $20.89$ \\
  & Top-$V$  & $20.34 \pm 0.02$  & $0.82 \pm 0.40$  & $254.30$ \\
  & Top-$\Delta V$  & $20.34 \pm 0.02$  & $3.96 \pm 1.30$  & $457.30$ \\
  & Uniform  & $20.34 \pm 0.02$  & $3.42 \pm 0.72$  & $454.90$ \\
  & VISTA  & $20.32 \pm 0.02$  & $\mathbf{4.10 \pm 1.12}$  & $\mathbf{495.20}$ \\
\midrule
\multirow{9}{*}{25}  & Interval 1  & $20.46 \pm 0.02$  & $2.60 \pm 1.14$  & $174.10$ \\
  & Interval 2  & $20.45 \pm 0.02$  & $1.12 \pm 0.32$  & $339.20$ \\
  & Interval 3  & $20.45 \pm 0.01$  & $0.88 \pm 0.26$  & $272.80$ \\
  & Interval 4  & $20.45 \pm 0.02$  & $0.70 \pm 0.30$  & $151.50$ \\
  & Interval 5  & $20.44 \pm 0.02$  & $0.60 \pm 0.32$  & $20.78$ \\
  & Top-$V$  & $20.46 \pm 0.02$  & $0.80 \pm 0.38$  & $254.20$ \\
  & Top-$\Delta V$  & $20.45 \pm 0.02$  & $4.34 \pm 0.82$  & $457.90$ \\
  & Uniform  & $20.45 \pm 0.02$  & $3.90 \pm 0.56$  & $457.50$ \\
  & VISTA  & $20.43 \pm 0.02$  & $\mathbf{4.64 \pm 1.34}$  & $\mathbf{494.90}$ \\
\bottomrule
\end{tabular}
}
\end{table}

\begin{figure}
    \centering
    \includegraphics[width=\linewidth]{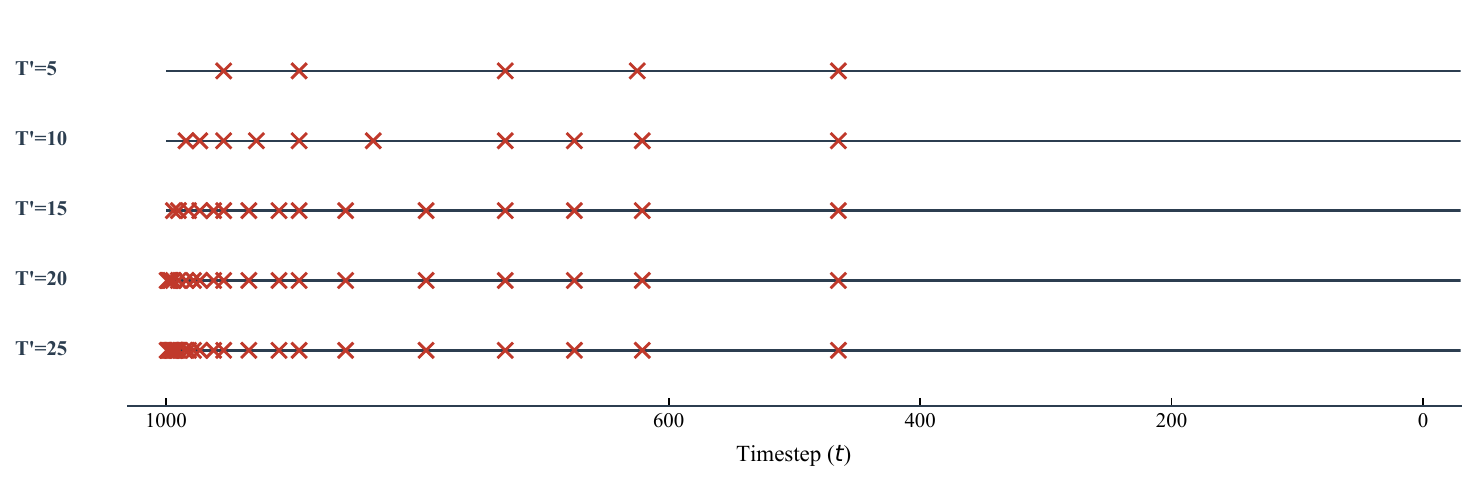}
    \caption{\method schedules $\mathcal{G}$ for varying $T'$ (CIFAR-10, SMC-base)}
    \label{fig:cifar10budget}
\end{figure}

\subsubsection{Details on Biology Design}\label{biologyappendix}

See Figure \ref{fig:dnabudget}, \ref{fig:proteinbudget}.

\begin{figure}
    \centering
    \includegraphics[width=\linewidth]{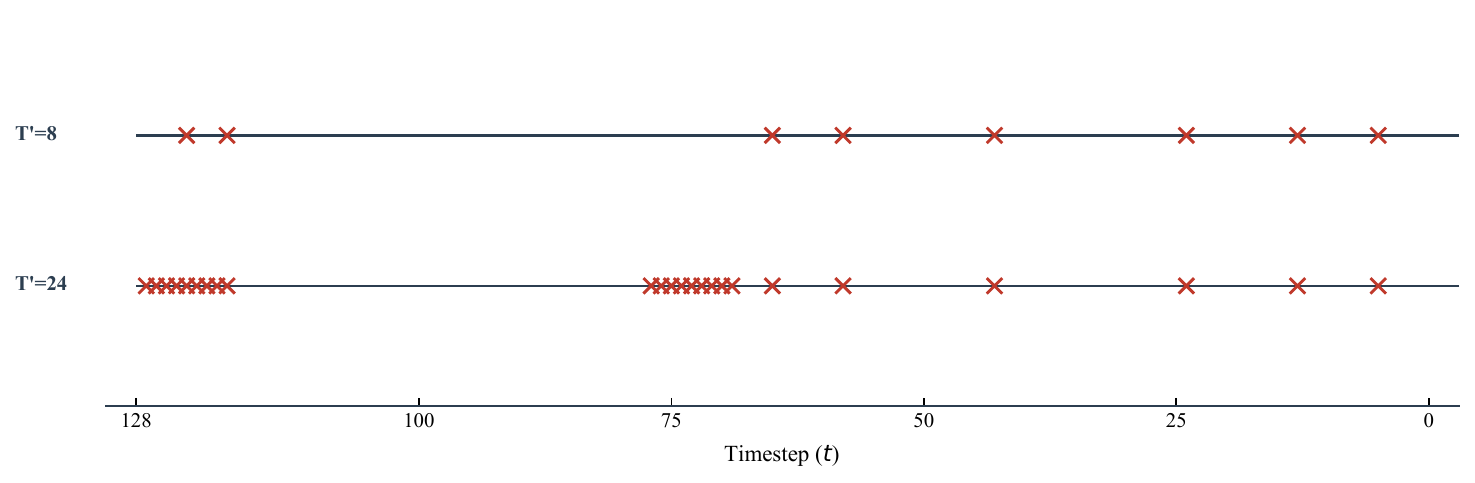}
    \caption{\method schedules $\mathcal{G}$ for varying $T'$ (DNA, SMC-grad)}
    \label{fig:dnabudget}
\end{figure}

\begin{figure}
    \centering
    \includegraphics[width=\linewidth]{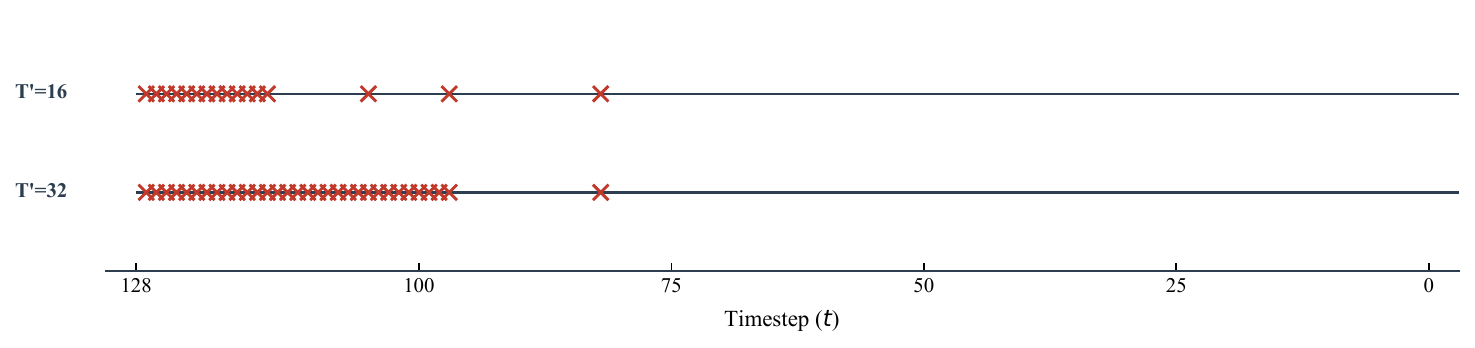}
    \caption{\method schedules $\mathcal{G}$ for varying $T'$ (Protein Fitness, MDM, SMC-base)}
    \label{fig:proteinbudget}
\end{figure}

\subsection{Experiments beyond masked diffusion models}

The weighted version of $\mathbf{TD}(\mathcal{G})$ is defined as 
\begin{equation}
    \mathbf{TD}(\mathcal{G}) := \sum_{t=0}^{T-1} \lambda(t) \| \pi_t^* -\pi_t^\mathcal{G} \|_1,
\end{equation}
where $\lambda(t)$ is nonnegative weight. We used uniform weight in masked diffusion models. However, in uniform discrete diffusion or continuous diffusion models, if the state takes the wrong path at the beginning of sampling, it could be corrected later. Thus, we should give bigger weight when $t$ is close to 0.

For the weighted version, by the proof of Theorem~\ref{UTDthm}, the following inequality holds:
\begin{align}
    \sum_{t=0}^{T-1} \lambda(t) \| \pi_t^* -\pi_t^\mathcal{G} \|_1 &\leq 2\sum_{t \in \mathcal{T}\setminus \mathcal{G}} \lambda(t)[1 - \frac{1}{R_{max}}\mathbb{E}_{x_{\lceil t \rceil} \sim \pi_{\lceil t \rceil}^*}[V_{\lceil t \rceil}(x_{\lceil t \rceil})]] \\
    &\leq 2\sum_{t=0}^{T-1} \lambda(t) - \frac{2}{R_{max}}\sum_{t\in \mathcal{T}\setminus \mathcal{G}} \lambda(t)\mathbb{E}_{x_{\lceil t \rceil} \sim \pi_{\lceil t \rceil}^*}[V_{\lceil t \rceil}(x_{\lceil t \rceil})].
\end{align}

Thus, we aim to maximize the surrogate score
\begin{equation}
    \hat{S}(\mathcal{G}) := \sum_{t \in \mathcal{T}\setminus \mathcal{G}} \lambda(t) \hat{V}_{\lceil t \rceil},
\end{equation}
in weighted VISTA, and the corresponding DP algorithm is modified into Alg \ref{alg:weightdp}. Further investigation on finding the optimal weight $\lambda(t)$ will be left to future work.

\begin{algorithm}
\caption{DP algorithm} \label{alg:weightdp}
\begin{algorithmic}[1]
    \State \text{Initialize: } $dp[s][i] \gets 0$ for all $s, i$
    \For{$s\gets T'-1, T', \cdots, T-1$}
        \State $dp[s][1] \gets (\sum_{u>s} \lambda(u))\hat{V}_T$
    \EndFor
    \For{$i\gets 2, 3, \cdots, T'$}
        \For{$s \gets T'-i, \cdots, T-i$}
            \For{$s' \gets s+1$ to $T-i+1$}
                \State $c \gets dp[s'][i-1] + (\sum_{s<u<s'} \lambda(u))\hat{V}_{s'}$
                \If{$c > dp[s][i]$}
                    \State $dp[s][i] \gets c$
                    \State $parent[s][i] \gets s'$
                \EndIf
            \EndFor
        \EndFor
    \EndFor
    \State $t_{T'} \gets \text{argmax}_s (dp[s][T'] + (\sum_{u<s} \lambda(u))\hat{V}_s)$
    \For{$j\gets T'-1, T'-2, \cdots, 1$}
        \State $t_j \gets parent[t_{j+1}][j+1]$
    \EndFor
    \State \Return $\mathcal{G} = \{t_1, t_2, \cdots, t_{T'}\}$
\end{algorithmic}
\end{algorithm}

\subsubsection{Protein Fitness (Uniform discrete diffusion)}

We use $\lambda(t) = 1- t/T$. We use reward model $r(x_0)$ as oracle that predicts fitness. We set $T=128, N=32, J=3, M=3, \alpha = 0.2$. A sample $x_0$ is considered ``valid'' if $r(x_0) \geq 0.5$. Two sequences with Hamming distance $\leq 1$ are treated as identical. Results are averaged over 20 independent runs, yielding 640 generated samples per policy. The warmup phase took 18.7\,s.
See Table \ref{tab:protein-udlm-smc-base}.

\begin{table}[t]
\centering
\small
\caption{Protein fitness generation, Uniform Discrete Diffusion (SMC-base, 32 particles $\times$ 20 runs): execution time, unique valid count, and $\hat{S}(\mathcal{G})$.}
\label{tab:protein-udlm-smc-base}
\resizebox{\linewidth}{!}{
\setlength{\tabcolsep}{6pt}
\renewcommand{\arraystretch}{1.15}
\begin{tabular}{llccc}
\toprule
\textbf{$T'$} & \textbf{Policy} & {\textbf{Time [s]} $\downarrow$} & {\textbf{Unique Valid} $\uparrow$} & {$\hat{S}(\mathcal{G})$ $\uparrow$} \\
\midrule
\multirow{1}{*}{128}  & Full  & $3.26$  & $\mathbf{3.15 \pm 2.37}$  & {--} \\
\midrule
\multirow{9}{*}{16}  & Interval 1  & $1.42$  & $4.45 \pm 3.19$  & $2.71\times10^{3}$ \\
  & Interval 2  & $1.41$  & $7.50 \pm 3.72$  & $6.05\times10^{3}$ \\
  & Interval 3  & $1.41$  & $6.10 \pm 3.93$  & $7.86\times10^{3}$ \\
  & Interval 4  & $1.41$  & $5.20 \pm 2.84$  & $3.53\times10^{3}$ \\
  & Interval 5  & $1.41$  & $2.20 \pm 1.20$  & $602.24$ \\
  & Top-$V$  & $1.42$  & $5.95 \pm 3.59$  & $1.19\times10^{4}$ \\
  & Top-$\Delta V$  & $1.41$  & $6.40 \pm 3.91$  & $1.35\times10^{4}$ \\
  & Uniform  & $1.41$  & $6.20 \pm 3.90$  & $1.10\times10^{4}$ \\
  & VISTA  & $1.42$  & $\mathbf{8.75 \pm 4.76}$  & $\mathbf{2.06\times10^{4}}$ \\
\midrule
\multirow{9}{*}{32}  & Interval 1  & $1.85$  & $8.90 \pm 5.31$  & $1.07\times10^{4}$ \\
  & Interval 2  & $1.86$  & $6.80 \pm 4.64$  & $6.02\times10^{3}$ \\
  & Interval 3  & $1.85$  & $6.65 \pm 4.32$  & $6.08\times10^{3}$ \\
  & Interval 4  & $1.85$  & $4.70 \pm 3.34$  & $4.35\times10^{3}$ \\
  & Interval 5  & $1.85$  & $2.95 \pm 2.50$  & $443.11$ \\
  & Top-$V$  & $1.85$  & $5.60 \pm 2.95$  & $1.06\times10^{4}$ \\
  & Top-$\Delta V$  & $1.85$  & $6.05 \pm 3.52$  & $1.08\times10^{4}$ \\
  & Uniform  & $1.84$  & $5.45 \pm 2.14$  & $8.88\times10^{3}$ \\
  & VISTA  & $1.85$  & $\mathbf{9.95 \pm 5.30}$  & $\mathbf{1.97\times10^{4}}$ \\
\bottomrule
\end{tabular}
}
\end{table}

\subsubsection{Continuous MNIST (Continuous diffusion)}

We use $\lambda(t) = (1- t/T)^{20}$. We use reward model $r(x_0) = \log p(c|x_0)$ where $c$ indicates the condition `image is 4'. We set $T=100, N=20, J=1, M=3, \alpha = 2.0$. A sample $x_0$ is considered ``valid'' if $r(x_0) > -0.1$. If the binarized images have Jaccard similarity $\geq 0.85$, then we regard them as duplicates. Results are averaged over 30 independent runs, yielding 600 generated samples per policy. The warmup phase took 41.0 s. See Table \ref{tab:mnisttds-time-valid-score}.

\begin{table}[t]
\centering
\small
\caption{MNIST (TDS, reproduce full/sparse): execution time, unique valid count, and $\hat{S}(\mathcal{G})$.}
\label{tab:mnisttds-time-valid-score}
\resizebox{\linewidth}{!}{
\setlength{\tabcolsep}{6pt}
\renewcommand{\arraystretch}{1.15}
\begin{tabular}{llccc}
\toprule
\textbf{$T'$} & \textbf{Policy} & {\textbf{Time [s]} $\downarrow$} & {\textbf{Unique Valid} $\uparrow$} & {$\hat{S}(\mathcal{G})$ $\uparrow$} \\
\midrule
\multirow{1}{*}{100}  & Full  & $15.47 \pm 0.08$  & $\mathbf{10.33 \pm 3.45}$  & {--} \\
\midrule
\multirow{9}{*}{6}  & Interval 1  & $4.29 \pm 0.20$  & $1.93 \pm 1.44$  & $0.06$ \\
  & Interval 2  & $4.24 \pm 0.03$  & $1.60 \pm 1.25$  & $0.36$ \\
  & Interval 3  & $4.24 \pm 0.02$  & $7.77 \pm 4.64$  & $3.40$ \\
  & Interval 4  & $4.25 \pm 0.03$  & $12.87 \pm 6.77$  & $5.08$ \\
  & Interval 5  & $4.15 \pm 0.03$  & $1.73 \pm 1.28$  & $0.11$ \\
  & Top-$V$  & $4.26 \pm 0.03$  & $11.73 \pm 5.91$  & $4.92$ \\
  & Top-$\Delta V$  & $4.26 \pm 0.03$  & $14.13 \pm 4.91$  & $4.34$ \\
  & Uniform  & $4.16 \pm 0.02$  & $14.10 \pm 6.21$  & $4.12$ \\
  & VISTA  & $4.26 \pm 0.03$  & $\mathbf{16.83 \pm 5.02}$  & $\mathbf{5.11}$ \\
\midrule
\multirow{9}{*}{12}  & Interval 1  & $4.95 \pm 0.37$  & $1.93 \pm 1.01$  & $0.09$ \\
  & Interval 2  & $4.86 \pm 0.05$  & $1.30 \pm 1.09$  & $1.21$ \\
  & Interval 3  & $4.85 \pm 0.04$  & $13.43 \pm 4.90$  & $4.42$ \\
  & Interval 4  & $4.86 \pm 0.05$  & $14.97 \pm 5.57$  & $5.06$ \\
  & Interval 5  & $4.77 \pm 0.05$  & $1.87 \pm 1.38$  & $0.03$ \\
  & Top-$V$  & $4.86 \pm 0.37$  & $8.20 \pm 4.21$  & $3.07$ \\
  & Top-$\Delta V$  & $4.90 \pm 0.42$  & $14.03 \pm 5.40$  & $4.67$ \\
  & Uniform  & $4.75 \pm 0.02$  & $15.73 \pm 5.72$  & $3.96$ \\
  & VISTA  & $5.31 \pm 0.97$  & $\mathbf{17.13 \pm 3.80}$  & $\mathbf{5.11}$ \\
\bottomrule
\end{tabular}
}
\end{table}

\subsection{Sensitivity Analysis}

Throughout the paper, we use $M$ warmup simulations to estimate the value function for schedule optimization. Although this is a relatively small number of simulations, we find that it is sufficient to recover the high-level structure of the optimized guidance schedule. To verify this, we vary $M$ and visualize the resulting schedules in Figures~\ref{fig:mnistbaseM}, \ref{fig:mnistgradM}, \ref{fig:toxicgradM}, \ref{fig:cifarbaseM}, \ref{fig:dnagradM}, \ref{fig:proteinmdmM}, \ref{fig:proteinudmM}, and \ref{fig:mnisttdsM}. Across all tasks, the value-critical guidance regions remain largely unchanged once $M=3$ ($M=5$ in CIFAR-10) is reached, indicating that only a small number of warmup simulations is required in practice.


\begin{figure}
    \centering
    \includegraphics[width=\linewidth]{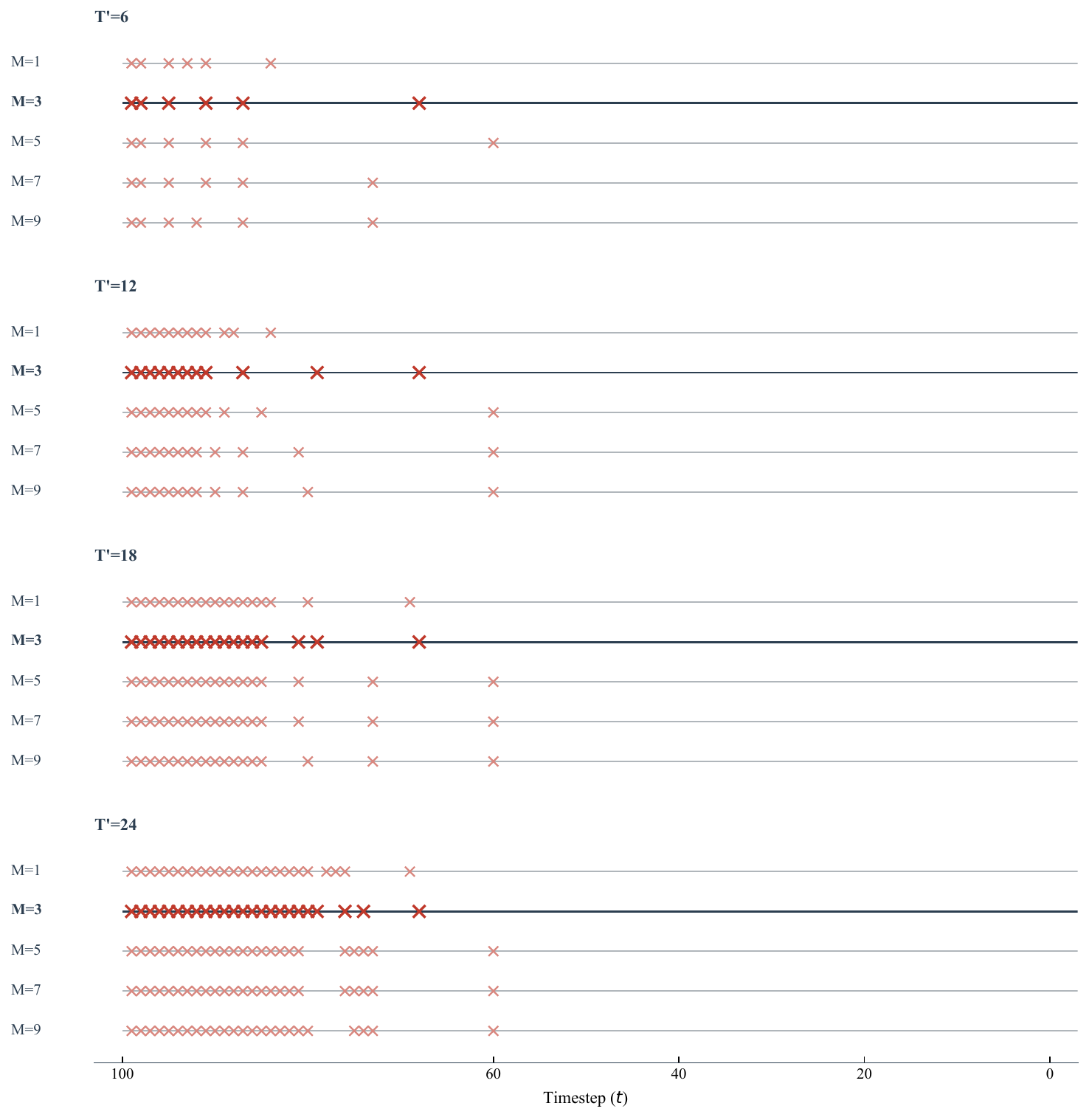}
    \caption{Optimized guidance schedules $\mathcal{G}$ for varying $M$ (Binary MNIST, SMC-base)}
    \label{fig:mnistbaseM}
\end{figure}

\begin{figure}
    \centering
    \includegraphics[width=\linewidth]{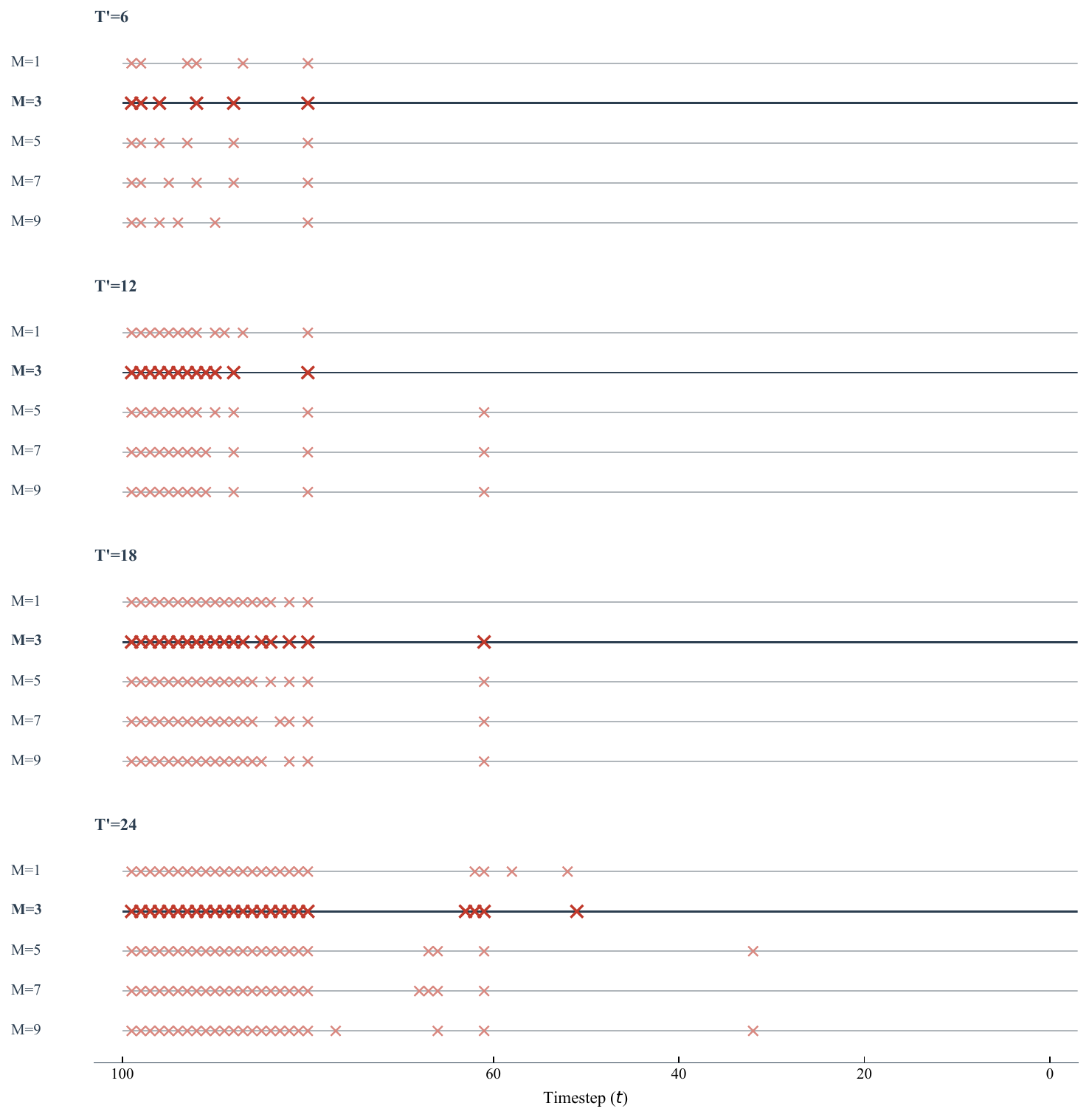}
    \caption{Optimized guidance schedules $\mathcal{G}$ for varying $M$ (Binary MNIST, SMC-grad)}
    \label{fig:mnistgradM}
\end{figure}

\begin{figure}
    \centering
    \includegraphics[width=\linewidth]{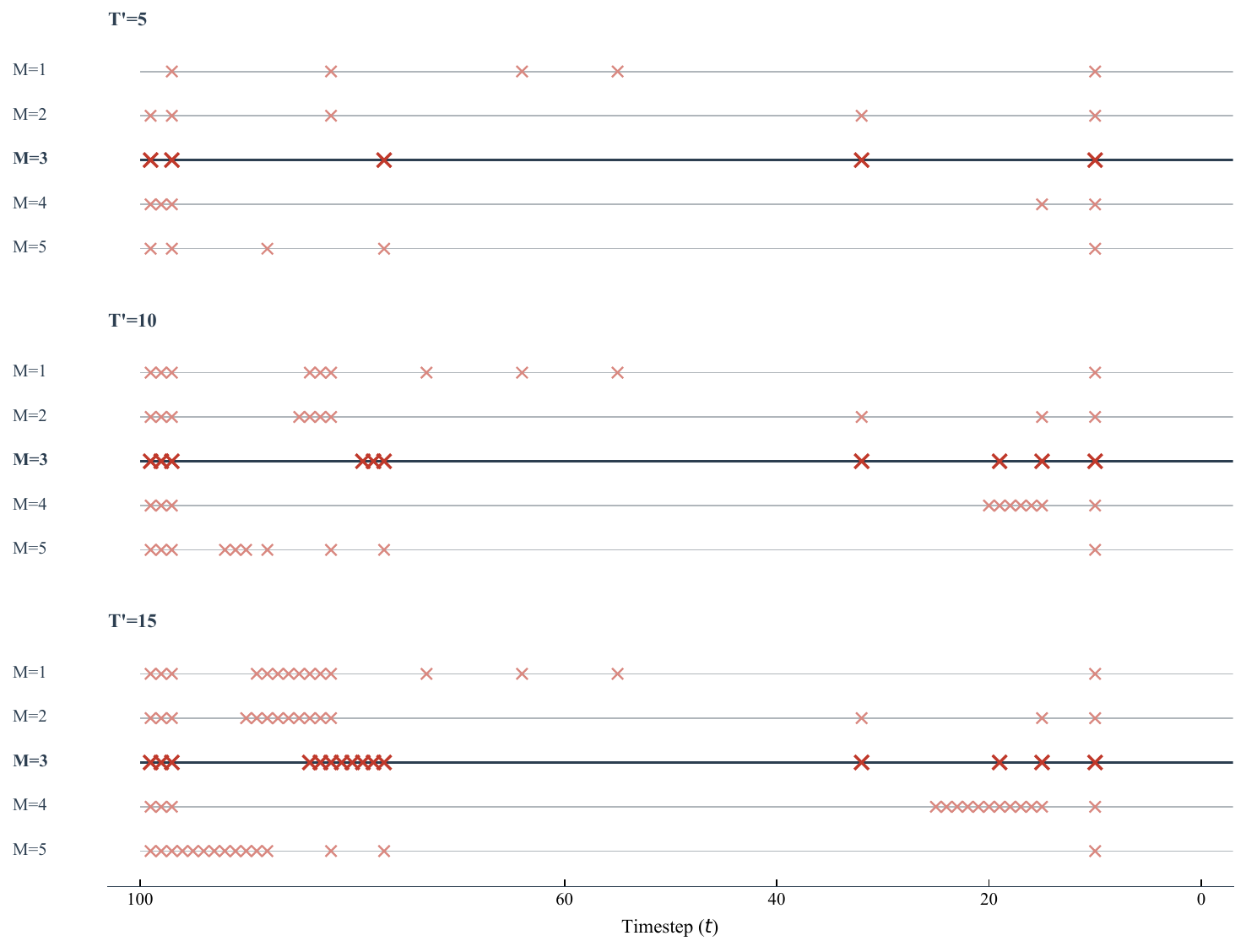}
    \caption{Optimized guidance schedules $\mathcal{G}$ for varying $M$ (Toxic Text, SMC-grad)}
    \label{fig:toxicgradM}
\end{figure}

\begin{figure}
    \centering
    \includegraphics[width=\linewidth]{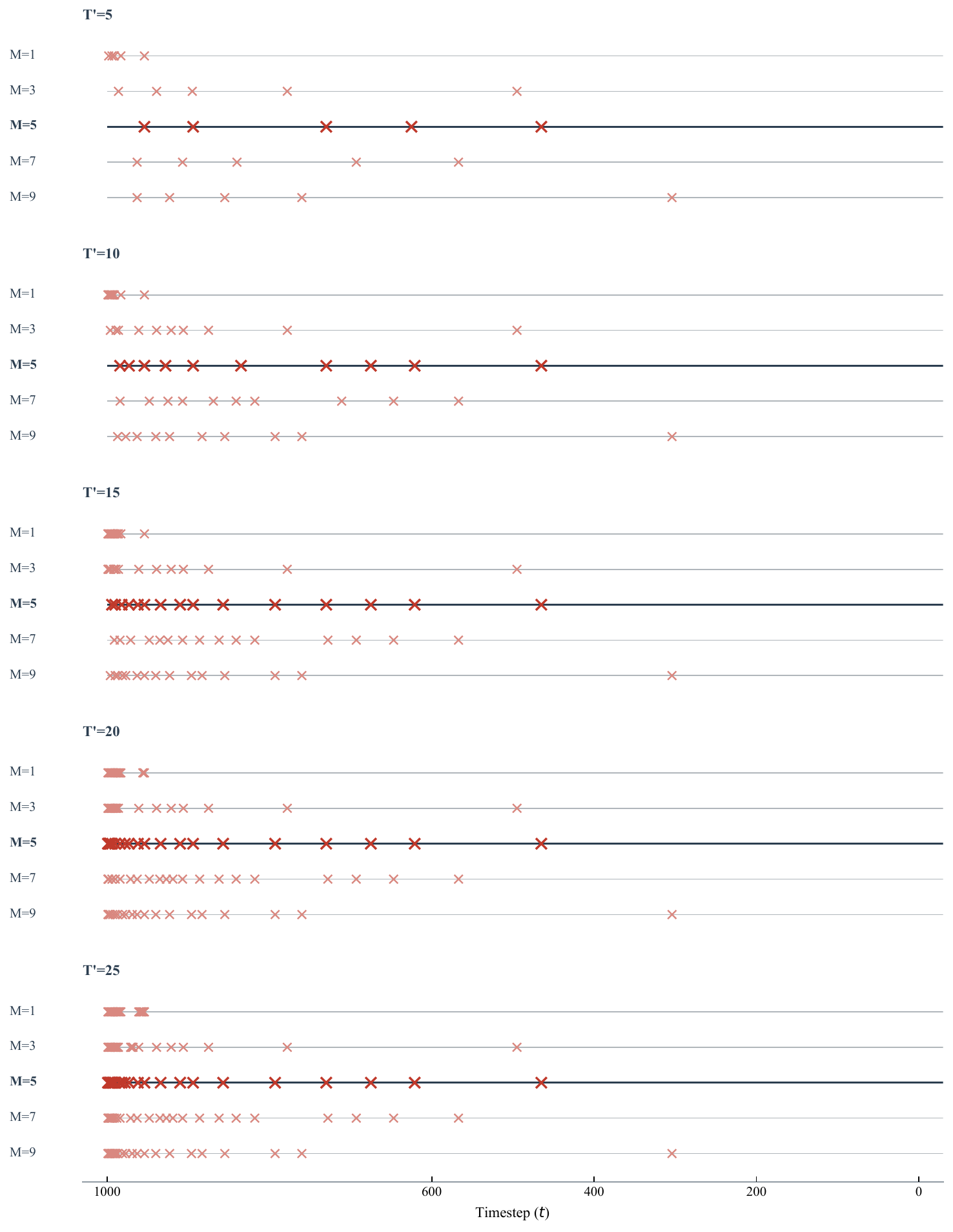}
    \caption{Optimized guidance schedules $\mathcal{G}$ for varying $M$ (CIFAR-10, SMC-base)}
    \label{fig:cifarbaseM}
\end{figure}

\begin{figure}
    \centering
    \includegraphics[width=\linewidth]{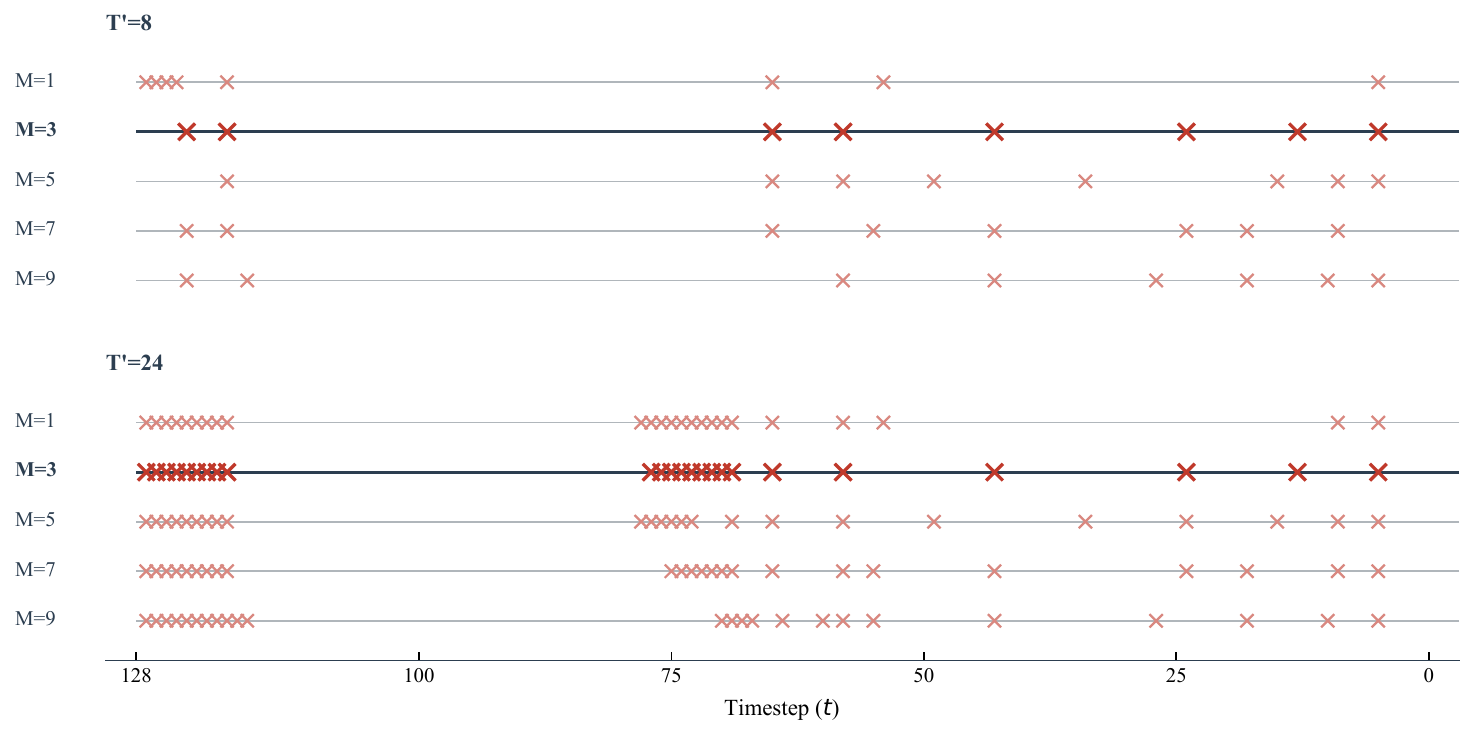}
    \caption{Optimized guidance schedules $\mathcal{G}$ for varying $M$ (DNA, SMC-grad)}
    \label{fig:dnagradM}
\end{figure}

\begin{figure}
    \centering
    \includegraphics[width=\linewidth]{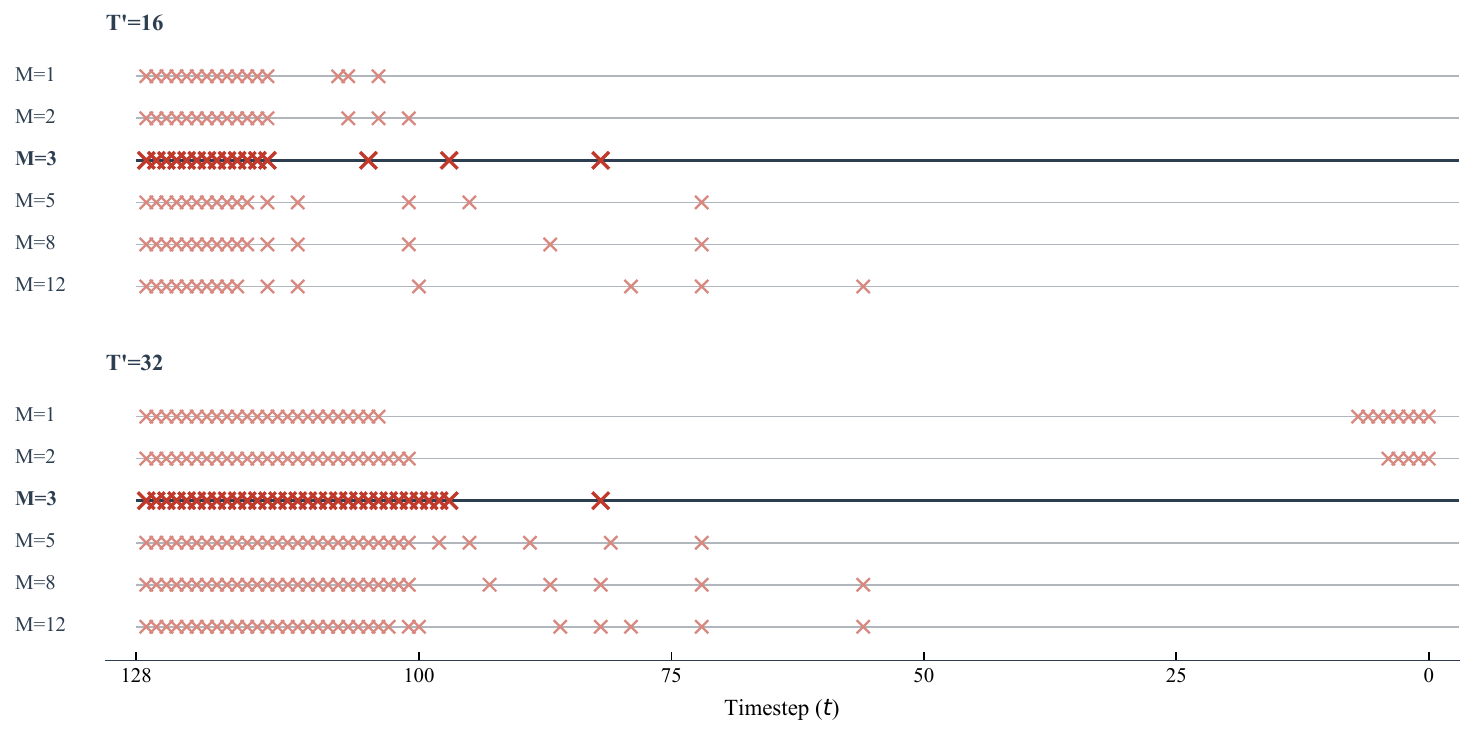}
    \caption{Optimized guidance schedules $\mathcal{G}$ for varying $M$ (Protein Fitness, SMC-base, MDM)}
    \label{fig:proteinmdmM}
\end{figure}

\begin{figure}
    \centering
    \includegraphics[width=\linewidth]{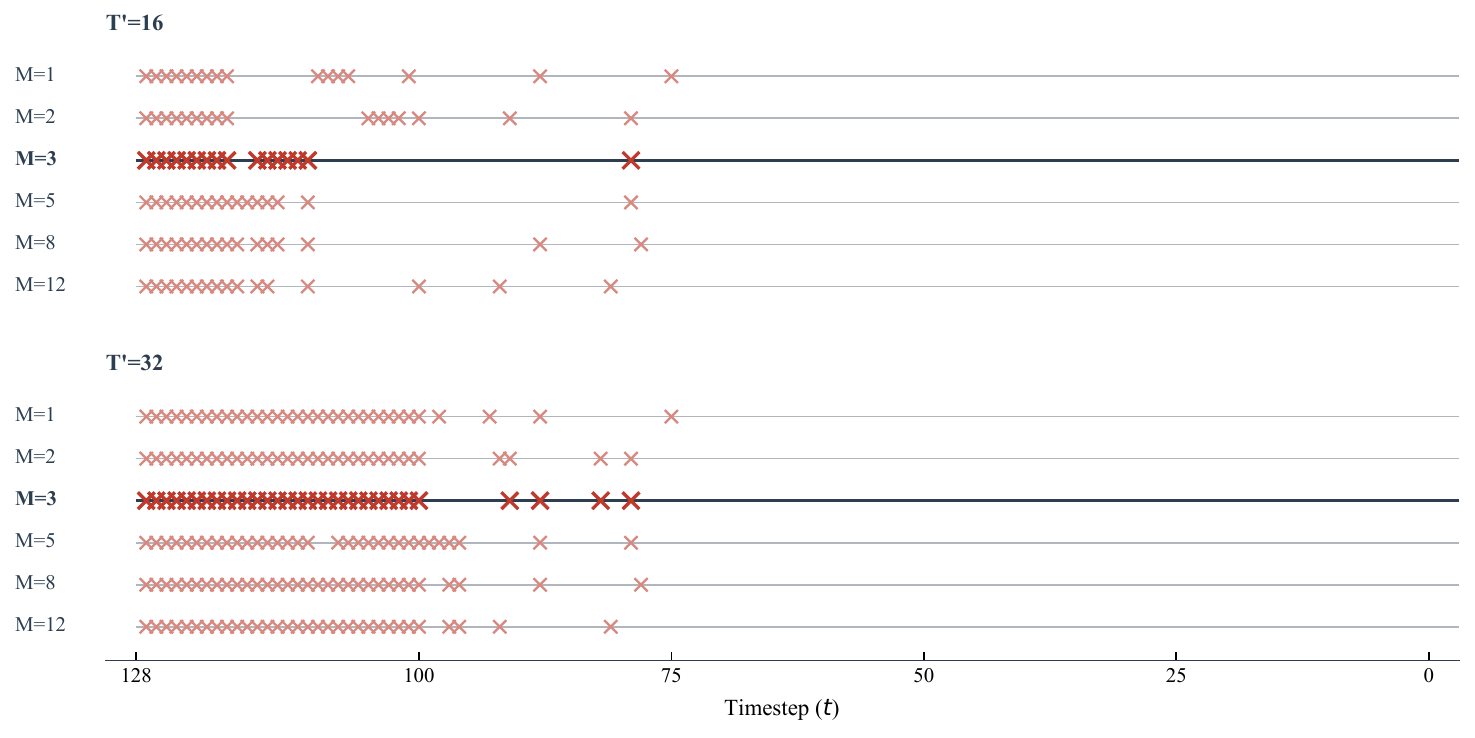}
    \caption{Optimized guidance schedules $\mathcal{G}$ for varying $M$ (Protein Fitness, SMC-base, UDM)}
    \label{fig:proteinudmM}
\end{figure}

\begin{figure}
    \centering
    \includegraphics[width=\linewidth]{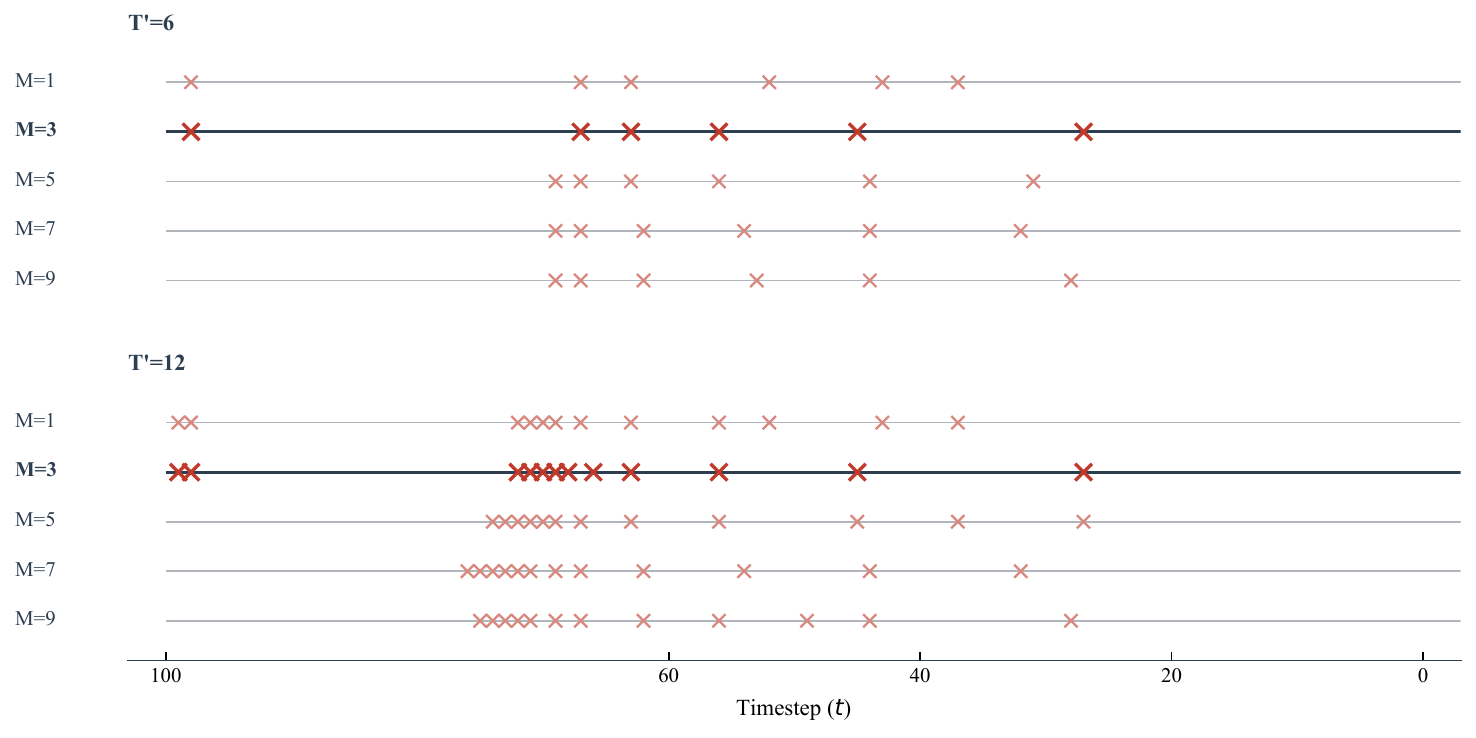}
    \caption{Optimized guidance schedules $\mathcal{G}$ for varying $M$ (MNIST, TDS)}
    \label{fig:mnisttdsM}
\end{figure}

\section{Societal Impacts}

\paragraph{Positive impacts.} \method reduces the cost of SMC-based reward guidance. This can make inference-time alignment more accessible to researchers with limited hardware and can reduce the energy cost of repeated reward-guided generation.

\paragraph{Potential negative impacts.} More efficient control of generative models could be misused to generate harmful or manipulative content at lower cost. The toxic-text experiment is included only as a controlled benchmark for testing reward steering; it is not an endorsement of generating toxic content.

\paragraph{Mitigation.} The main risk depends on the reward model and application. We recommend using \method with ethically aligned reward functions, safety filters, monitoring, and application-specific release controls. The scheduler itself is neutral: it decides when a given reward should be applied, but it does not define the reward.

\end{document}